\newcommand{\ie}{i.e.}
\begin{document}

\title{On Model Compression for Neural Networks: Framework, Algorithm, and Convergence Guarantee}

\author{\name Chenyang Li$^{1}$ \email cl237@njit.edu \\
        \name Jihoon Chung$^{2}$ \email jhchung7@pusan.ac.kr  \\
        \name Mengnan Du$^{1}$ \email mengnan.du@njit.edu \\
        \name Haimin Wang$^{1}$ \email haimin.wang@njit.edu  \\
        \name Xianlian Zhou$^{1}$ \email alexzhou@njit.edu \\
        \name Bo Shen$^{1,*}$ \email bo.shen@njit.edu  \\
   $^{1}$New Jersey Institute of Technology, USA \\
   $^{2}$Pusan National University, Korea\\
   $^{*}$Corresponding Author}

 \editor{}

\maketitle


\begin{abstract}
Model compression is a crucial part of deploying neural networks (NNs), especially when the memory and storage of computing devices are limited in many applications. This paper focuses on two model compression techniques: low-rank approximation and weight pruning in neural networks, which are very popular nowadays. However, training NN with low-rank approximation and weight pruning always suffers significant accuracy loss and convergence issues. In this paper, a holistic framework is proposed for model compression from a novel perspective of nonconvex optimization by designing an appropriate objective function. Then, we introduce NN-BCD, a block coordinate descent (BCD) algorithm to solve the nonconvex optimization. One advantage of our algorithm is that an efficient iteration scheme can be derived with closed-form, which is gradient-free. Therefore, our algorithm will not suffer from vanishing/exploding gradient problems. Furthermore, with the Kurdyka-\L{}ojasiewicz (K\L{}) property of our objective function, we show that our algorithm globally converges to a critical point at the rate of $\mathcal{O}(1/k)$, where $k$ denotes the number of iterations. Lastly, extensive experiments with tensor train decomposition and weight pruning demonstrate the efficiency and superior performance of the proposed framework. Our code implementation is available at \url{https://github.com/ChenyangLi-97/NN-BCD}.
\end{abstract}

\begin{keywords}
Model Compression, Low-rank Approximation, Weight Pruning, Tensor Train Decomposition, Global Convergence, Gradient-free Training
\end{keywords}

\section{Introduction} \label{sec: introduction}
The advent of neural networks (NNs) has brought about a revolution in various applications, including anomaly detection~\citep{su2024large}, healthcare~\citep{esteva2019guide,zhu2021twitter}, solar physics~\citep{jiang2023generating,xu2024super}, and more.  Especially, Large Language Models (LLMs) consistently exhibit remarkable performance across various tasks \citep{zhao2023survey,chang2023survey}. Nevertheless, their exceptional capabilities come with significant challenges stemming from their extensive size and computational requirements. For instance, the GPT-175B model~\citep{brown2020language}, with an impressive 175 billion parameters, demands a minimum of 320GB (using multiples of 1024) of storage in half-precision (FP16) format. The sizes of many state-of-the-art NN models are too large for most embedded and Internet-of-Things (IoT) systems \cite{chen2019deep,hu2023artificial}, thereby causing high storage and computational demands and severely hindering the practical deployment of NNs.

To tackle this problem, researchers have proposed numerous model compression techniques for NNs~\citep{carreira2017model,carreira2021model,li2023model}, which can be summarized into the following categories. (1) Low-rank approximation~\citep{li2018constrained,idelbayev2020low}: this technique involves approximating the weight matrices/tensors of a deep learning model with low-rank matrices/tensors. (2) Weight pruning~\citep{han2015learning, luo2017thinet,carreira2018learning}: this technique explores the redundancy in the model parameters and tries to remove the redundant and non-critical ones.  (3) Quantization~\citep{carreira2017model2,xu2018deep}: this involves reducing the number of bits required to represent the weights and activations in a neural network. For example, weights and activations may be represented using 8-bit integers instead of 32-bit floating-point numbers.  (4) Knowledge distillation~\citep{gou2021knowledge}: this learns a distilled model and trains a more compact neural network to reproduce the output of a larger network. 


In this paper, we will focus on low-rank approximation and weight pruning among all model compression techniques since they can be categorized into one framework, where they share a common goal. Specifically, both techniques aim to reduce the complexity of models by identifying and retaining only the most essential components or parameters, where low-rank approximation and weight pruning encourage low-rank and sparse representations, respectively. Among low-rank approximation, tensorizing neural networks is an extremely attractive NN model compression technique based on tensor decomposition.  By utilizing advanced tensor decomposition techniques~\citep{kolda2009tensor,shen2022smooth} like tensor train decomposition (TTD)~\citep{oseledets2011tensor}, it is possible to achieve more than a 1,000× reduction in parameters for the input-to-hidden layers of neural networks~\citep{yang2017tensor,pan2019compressing}. Weight pruning has been proven to be very effective in reducing the resource requirements of neural networks. This assertion is supported by numerous studies and experiments conducted in the field~\citep{han2015learning,molchanov2016pruning,molchanov2019importance,hoefler2021sparsity}. 


\textbf{Research Gaps.} Although low-rank decomposition and weight pruning demonstrate powerful performance in model compression, neural network training for both cases is quite challenging~\citep{kim2015compression, zhang2018systematic, chijiwa2021pruning, yin2021towards,JMLR:v23:21-0545}.  In general, there are two strategies of neural network training for model compression: (1) Train from scratch, and (2) Decompose/prune a pre-trained model and then retrain. In the first case, the required NN (for example, tensor train-based or the sparse neural network) is directly trained from scratch. Since the structure of the NNs is already pre-set to low-rank or sparse format before the training, the corresponding model capacity is typically limited as compared to the uncompressed structure. Therefore, the training process is very sensitive to initialization and is more challenging to achieve high accuracy. In the second approach, though the pre-trained uncompressed model provides a good initialization, the uncompressed model needs to be decomposed into a low-rank format or pruned into sparse format. This causes inevitable and non-negligible approximation error, which leads to performance degradation. No matter which training strategy is adopted, the training of NN heavily relies on gradient-based methods~\citep{rumelhart1986learning}. Thus, they are typically more prone to the vanishing/exploding gradient problems~\citep{hanin2018neural} and hence are difficult to be trained well.


This paper aims to mitigate the aforementioned gaps and establishes a holistic framework to train neural networks for model compression. Our framework can be used for both low-rank approximation and weight pruning. It reformulates the neural network training as a nonconvex optimization problem. To solve this problem, a neural network block coordinate descent (NN-BCD) algorithm is proposed. The block coordinate descent method has been recently adapted to neural network training and achieved impressive success recently~\citep{taylor2016training,zhang2017convergent,lau2018proximal}. The advantages of block coordinate descent are twofold. First, it is gradient-free, and thus can deal with non-differentiable objectives and potentially can avoid the vanishing/exploding gradient problems~\citep{hanin2018neural}. Second, it can be easily implemented in a distributed and parallel manner and is compatible with distributed or federated scenarios. These advantages are naturally inherited in our setting as well. In summary, the \textbf{main contributions} of this paper are as follows:
\begin{itemize}
    \item We establish a holistic framework for training neural networks for model compression with a novel nonconvex optimization formulation, which is compatible with two important model compression techniques: low-rank approximation and weight pruning.
    \item We propose an efficient algorithm, NN-BCD, to solve the nonconvex optimization. NN-BCD is gradient-free and can be implemented efficiently.
    \item Theoretically, we analyze the convergence of the iterative sequence generated by the NN-BCD algorithm, which is proved to be globally convergent to a critical point at a rate of $\mathcal{O}(1/k)$. 
    \item We conduct extensive experiments to demonstrate the superior performance of NN-BCD empirically, including tensor train decomposition-based NN and weight pruning. 
\end{itemize}
The remainder of this paper is organized as follows. A brief review of notation and related research work is provided in Section~\ref{sec: background}. The proposed framework and its algorithm are introduced in Section~\ref{sec: proposed method}. Extensive experiments in Section~\ref{sec: experiments} are provided for testing and validation of the proposed method. Finally, the conclusions are discussed in Section~\ref{sec: conclusion}.

\section{Notation and Research Background}\label{sec: background}
In Section~\ref{subsec: tensor basis}, the notation and basics of multi-linear/tensor algebra used in this paper are reviewed.  Due to the efficiency of tensor train decomposition (TTD)~\citep{oseledets2011tensor}, the tensor train fully-connected layer~\citep{novikov2015tensorizing} is reviewed in Section~\ref{subsec: tensor layer}. Several related pruning formulations are introduced in Section~\ref{subsec: weight pruning}.
\subsection{Notations} \label{subsec: tensor basis}
Throughout this paper, scalars are denoted by lowercase letters, for example, $x$; vectors are denoted by lowercase boldface letters, for example, $\bm{x}$; matrices are denoted by uppercase boldface, for example, $\bm{X}$. The order of a tensor is the number of its mode. A real-valued tensor of order-$d$ is denoted by $\bm{\mathcal{X}} \in \mathbb{R}^{n_{1} \times n_{2} \times \cdots \times n_{d}}$ and $\bm{\mathcal{X}}(i_{1}, \cdots, i_{d})$ represents its entry. The inner product of two tensors with the same shape $\bm{\mathcal{X}}$ and $\bm{\mathcal{Y}}$ is the sum of the products of all their entries, \ie,  $ \left\langle \bm{\mathcal{X}},\bm{\mathcal{Y}} \right\rangle =\sum_{i_1 }{\cdots \sum_{i_d }{\bm{\mathcal{X}} \left(i_1,\dots,i_d\right) \cdot \bm{\mathcal{Y}} \left(i_1,\dots,i_d\right)}}$. Moreover, the Frobenius norm of a tensor $\bm{\mathcal{X}}$ is defined as ${\left\|\bm{\mathcal{X}}\right\|}_F=\sqrt{\left\langle \bm{\mathcal{X}},\bm{\mathcal{X}}\right\rangle }$. 

\subsection{Tensor Train Fully-Connected Layer}\label{subsec: tensor layer}
We briefly introduce the tensor train decomposition~\citep{oseledets2011tensor}, which is the state-of-the-art decomposition method in tensorized neural networks. Nevertheless, our framework can be extended to other types of tensor decomposition as well. We call a tensor $\bm{\mathcal{A}} \in \mathbb{R}^{n_{1} \times n_{2} \times \cdots \times n_{d}}$ satisfies the tensor train decomposition if it can be written as the sum of a sequence of order-3 tensors as follows
\begin{equation}\label{eq: ttd}
\begin{aligned}
       \bm{\mathcal{A}}(i_{1}, i_{2}, \cdots, i_{d}) & =\sum_{\alpha_{0}, \alpha_{1} \cdots \alpha_{d}}^{r_{0}, r_{1}, \cdots r_{d}} \bm{\mathcal{G}}_{1}(\alpha_{0}, i_{1}, \alpha_{1}) \bm{\mathcal{G}}_{2}(\alpha_{1}, i_{2}, \alpha_{2}) \cdots 
\bm{\mathcal{G}}_{d}(\alpha_{d-1}, i_{d}, \alpha_{d}) \\
&:=\bm{\mathcal{G}}_{1}(:, i_{1}, :) \bm{\mathcal{G}}_{2}(:, i_{2}, :) \cdots 
\bm{\mathcal{G}}_{d}(:, i_{d}, :) ,
\end{aligned}
\end{equation} where $\bm{\mathcal{G}}_{k} \in \mathbb{R}^{r_{k-1} \times n_{k} \times r_{k}}, k=1,2, \cdots, d$, are called TT-cores and $\bm{r}=[r_{0}, r_{1}, \cdots, r_{d}], r_{0}=r_{d}=1$ are called TT-ranks. Based on TTD, a space of $\sum_{k=1}^d n_k r_{k-1} r_k$ is needed if $\bm{\mathcal{A}}$ is stored in its TT-format while storing all the entries of  $\bm{\mathcal{A}}$ directly requires a space of $\Pi_{k=1}^d n_k$. Thus, the TT-format is very efficient in terms of memory if the ranks are small.  

Based on TTD, the fully-connected layer in a neural network can be represented by a tensor satisfying a certain TT-format, \ie, a TT fully-connected (TT-FC) layer. Consider a single fully-connected layer with weight matrix $\bm{W} \in \mathbb{R}^{M \times N}$, input $\bm{x} \in \mathbb{R}^{N}$, and output $\bm{y} \in \mathbb{R}^{M} $. The output $\bm{y}$ is obtained by $\bm{y}=\bm{W} \bm{x}$. {To transform this standard layer to TT-FC layer,  the weight matrix $\bm{W}$ is tensorized to an order-$d$ tensor $\bm{\mathcal{W}} \in \mathbb{R}^{(m_{1} \times n_{1}) \times \cdots \times(m_{d} \times n_{d})}$ by reshaping and reordering, where $M=\prod_{k=1}^{d} m_{k}$, $N=\prod_{k=1}^{d} n_{k}$, and $m_k,n_k$ are tensor structural parameters. Then $\bm{\mathcal{W}}$ can be represented in TT-format
\begin{equation}\label{eq: tensor layer decomposition}
    \bm{\mathcal{W}}((i_{1}, j_{1}), \cdots,(i_{d}, j_{d}))=\bm{\mathcal{G}}_{1}[:, (i_{1}, j_{1}),:] \cdots \bm{\mathcal{G}}_{d}[:, (i_{d}, j_{d}),:] 
    := \textnormal{TTD}(\bm{r}),
\end{equation}  
where order-4 tensor $\bm{\mathcal{G}}_{k} \in \mathbb{R}^{r_{k-1} \times m_{k} \times n_{k} \times r_{k}}$ are TT-cores with TT-ranks $\bm{r}$. We remark that here $\bm{\mathcal{G}}_{k}$ has one dimension more than the standard format~\eqref{eq: ttd} since the output and input dimensions of $\bm{W}$ are divided separately.  Hence, the forward propagation on the TT-FC layer can be expressed in the tensor format as follows (the bias term is ignored here)
\begin{equation*}
    \bm{\mathcal{Y}}(i_{1}, \cdots, i_{d})=\sum_{j_{1}, \cdots, j_{d}} \bm{\mathcal{G}}_{1}[:, (i_{1}, j_{1}),:] \cdots \bm{\mathcal{G}}_{d}[:, (i_{d}, j_{d}),:] \bm{\mathcal{X}}(j_{1}, \cdots, j_{d}),
\end{equation*}
where $\bm{\mathcal{X}} \in \mathbb{R}^{m_{1} \times \cdots \times m_{d}}$ and $\bm{\mathcal{Y}} \in \mathbb{R}^{n_{1} \times \cdots \times n_{d}}$ are the tensorized input and output corresponding to $\bm{x}$ and $\bm{y}$, respectively. More details about the TT-FC layer are introduced in~\citep{novikov2015tensorizing}.  

\subsection{Weight Pruning} \label{subsec: weight pruning}
One way to do weight pruning is magnitude pruning~\citep{guo2016dynamic,li2018optimization,zhang2018systematic}, which aggregates the previous and following connections of a particular weight. Assuming a $N$-layer neural network associated with $N$ weight matrices $\bm{W}_1,\cdots,\bm{W}_{N}$, a standard magnitude pruning strategy will remove low-magnitude weights first. The goal is to produce a sparse weight matrix for each layer that minimizes the Frobenius norm of the difference between the original weights and the sparse weights
\begin{equation}
    \min_{\bm{W}_i^{sparse}} \|\bm{W}_i-\bm{W}_i^{sparse}\|_F  \text{ subject to } \|\bm{W}_i^{sparse}\|_0 = \beta_i \quad i=1, \ldots, N, 
\end{equation}
where $\|\cdot\|_0$ is the entrywise $\ell_0$ norm and $\beta_i$ is the sparsity level. 

Instead of pruning connections based on magnitude, we can apply various kinds of penalties to weights themselves to make them progressively shrink toward zero. For example, \citet{louizos2018learning} proposes a practical method for $\ell_0$ norm regularization for neural networks:
pruning the network during training by encouraging weights to become exactly zero. Specifically, it considers a regularized empirical risk minimization over weights $\Btheta$ as below
\begin{equation} \label{eq: L0 regularization}
    \min_{\Btheta} \mathcal{R}_n(\Phi(\bm{X} ; \Btheta), \bm{Y}) + \lambda \|\Btheta\|_0,
\end{equation}
where $\mathcal{R}_n$ is the empirical loss,  $\lambda>0$ is a weighting factor for the regularization and the empirical loss. In addition to $\ell_0$ norm, LASSO-type regularization is widely used in the pruning literature~\citep{wen2016learning,he2016deep,yu2017compressing,savarese2020winning}.

\section{Methodology} \label{sec: proposed method}
As shown in Figure~\ref{fig: general structure}, our proposed framework is compatible with both fully-connected and convolutional layers. In terms of model compression, our framework can implement low-rank approximation and weight pruning for both fully-connected and convolutional layers. Specifically, our proposed formulation is introduced in Section~\ref{subsec: formulation}, followed by the proposed NN-BCD algorithm in Section~\ref{subsec: algorithm}. The convergence of the proposed NN-BCD algorithm is studied in Section~\ref{subsec: convergence}.
\begin{figure}[!htbp]
\centering
\includegraphics[width=1\textwidth]{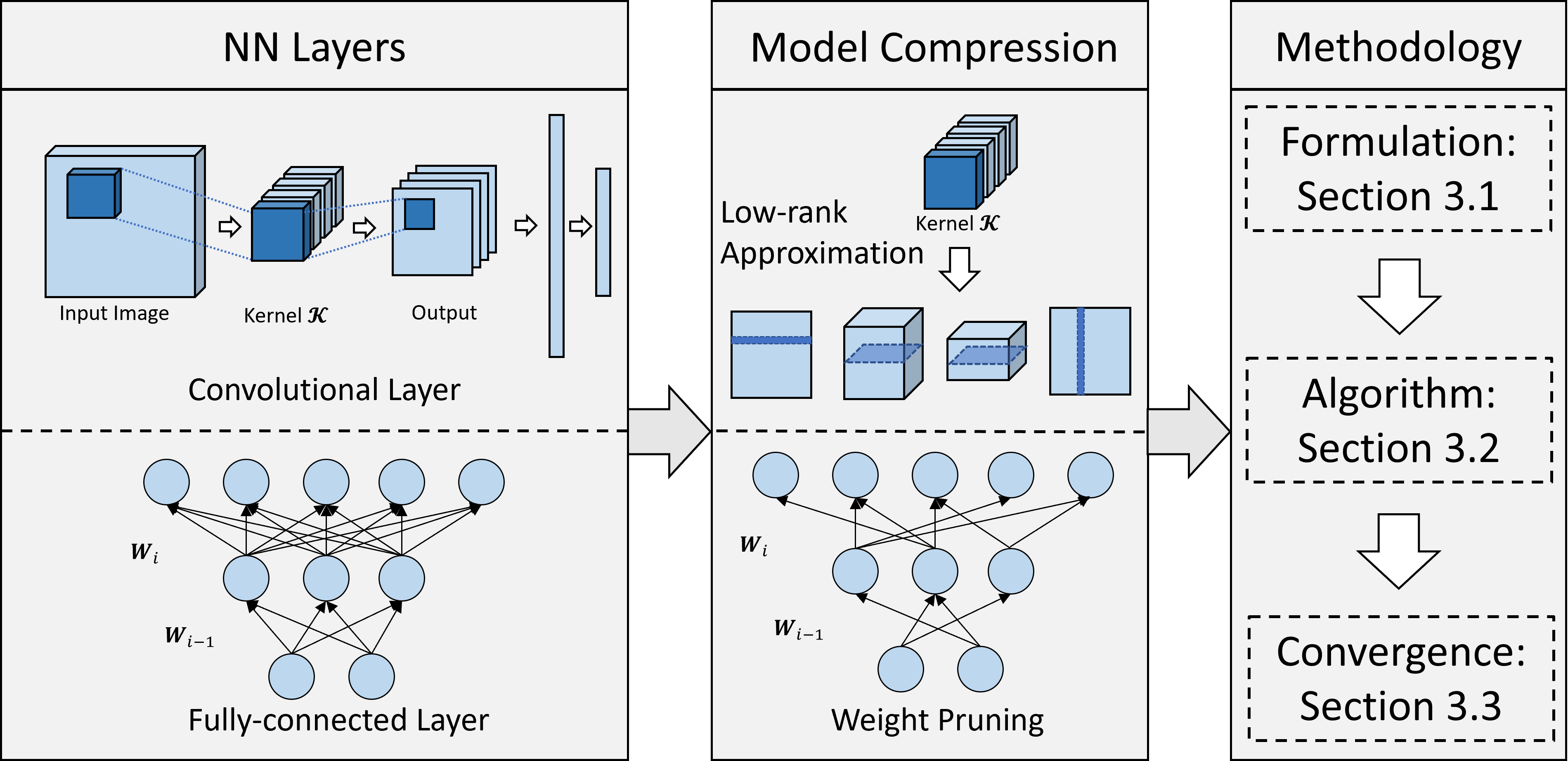} 
\caption{The proposed framework of NN training for model compression.} 
    \label{fig: general structure} 
\end{figure}
\subsection{Problem Formulation}\label{subsec: formulation} 
Consider a $N$-layer fully-connected neural network\footnote{Note: our framework also works with convolutional layers as shown in Appendix~\ref{appendix: convolutional layer}. For simplicity, we only introduce a fully-connected neural network in the main context.}, where $n_i \in \mathbb{N}$ be the number of hidden units ($1\leq i \leq N-1$). $n_0$ and $n_N$ be the number of units of input and output layers. Let $\bm{W}_i \in \mathbb{R}^{n_i \times n_{i-1}}$ be the weight matrix between the $(i-1)$-th layer and the $i$-th layer, where $i=1, \ldots, N$. The dataset is defined as $\mathcal{Z}:=\{(\bm{x}_j, \bm{y}_j)\}_{j=1}^n \subset \mathbb{R}^{n_0} \times \mathbb{R}^{n_N}$, where $\bm{y}_j$'s are the one-hot vectors of labels. To simplify notations, we denote $\Btheta\coloneqq\{\bm{W}_i\}_{i=1}^N, \bm{X}:=(\bm{x}_1, \bm{x}_2, \ldots, \bm{x}_n) $, and $\bm{Y}:=(\bm{y}_1, \bm{y}_2, \ldots, \bm{y}_n)$. 

Model compression usually aims to solve the  following empirical risk minimization problem
\begin{equation} \label{eq: empirical loss}
\min_{\Btheta} \mathcal{R}_n(\Phi(\bm{X} ; \Btheta), \bm{Y}),  \text{ subject to } \mathcal{MC}(\Btheta)=0, 
\end{equation}
where $\mathcal{R}_n(\Phi(\bm{X}; \Btheta), \bm{Y}):=\frac{1}{n}\sum_{j=1}^n \ell(\Phi(\bm{x}_j; \Btheta), \bm{y}_j)$ denotes the empirical loss with loss function $\ell(\cdot,\cdot)$ such as mean squared, logistic, hinge, and cross-entropy functions. $\Phi(\bm{x}_j; \Btheta)=\sigma_N(\bm{W}_N \sigma_{N-1}(\bm{W}_{N-1} \cdots \bm{W}_2 \sigma_1(\bm{W}_1 \bm{x}_j)))$ represents the neural network, where the activation function $\sigma_i(\cdot)$ can be ReLU, leaky ReLU, sigmoid, linear, polynomial, or softplus. $\mathcal{MC}(\cdot)$ is a generalized model compression operator, which can consider both low-rank approximation and weight pruning, respectively. For example,  if $\mathcal{MC}(\Btheta)=0$ is used for weight pruning, it is equivalent to that $\|\bm{W}_i\|_0-\beta_i=0,i=1,\ldots, N$. If it is used  for TTD-based neural networks, $\mathcal{MC}(\Btheta)=0$ is equivalent to the tensorized weight $\bm{\mathcal{W}}_i=\textnormal{TTD}(\bm{r}_i),i=1,\ldots,N$ as defined in~\eqref{eq: tensor layer decomposition}.

\begin{remark}
    The bias terms are not included in our neural network $\Phi(\bm{x}_j; \Btheta)$ for the convenience of the representation. For example, it is $\sigma_1(\bm{W}_1 \bm{x}_j+\bm{b}_1)$ with the bias term (\ie, $\bm{b}_1$) instead of $\sigma_1(\bm{W}_1 \bm{x}_j)$. However, our framework can work with bias terms without additional effort. Our code implementation also includes bias terms. 
\end{remark}


The optimization in~\eqref{eq: empirical loss} is highly nonconvex due to the complicated coupling relation of variables, and thus, very challenging to solve. As a remedy, variable splitting~\citep{taylor2016training,lau2018proximal,wang2023lifted} is widely used to make the problem computationally tractable. At a high level, variable splitting simplifies a complex problem that involves nonlinearly coupled variables by introducing auxiliary variables, resulting in a problem with much looser variable coupling. By applying variable splitting in our setting, we introduce auxiliary variables $\bm{U}_i$, $\bm{V}_i$, and the compressed weight $\bm{W}_i^{MC}$ to reformulate each layer of the neural network as $ \bm{U}_{i}=\bm{W}_{i}\bm{V}_{i-1},\bm{V}_{i} =\sigma_{i}(\bm{U}_{i}), \bm{W}_i=\bm{W}_i^{MC}$. Define $\bm{\mathcal{U}}:=\{\bm{U}_{i}\}_{i=1}^{N}$,  $\bm{\mathcal{V}}:=\{\bm{V}_{i}\}_{i=1}^{N}$, and $\Btheta_{MC}\coloneqq\{\bm{W}_i^{MC}\}_{i=1}^N$, we have the following optimization problem
\begin{equation}\label{eq: problem formulation}
    \begin{aligned}
        \min_{\Btheta,\Btheta_{MC},\bm{\mathcal{U}},\bm{\mathcal{V}}} \mathcal{L}_{0}(\Btheta,\Btheta_{MC}, \bm{\mathcal{V}}) & :=\mathcal{R}_{n}(\bm{V}_{N} ; \bm{Y})+\sum_{i=1}^{N} r_{i}(\bm{W}_{i}^{MC})+\sum_{i=1}^{N} s_{i}(\bm{V}_{i}) \\
    \text{ subject to } \bm{U}_{i}=\bm{W}_{i} \bm{V}_{i-1},\bm{V}_{i} &=\sigma_{i}(\bm{U}_{i}), \bm{W}_i=\bm{W}_i^{MC}, \mathcal{MC}(\Btheta_{MC})=0, 
    \end{aligned}
\end{equation}
where $\mathcal{R}_{n}(\bm{V}_{N} ; \bm{Y})=\frac{1}{n} \sum_{j=1}^{n} \ell((\bm{V}_{N})_{: j}, \bm{y}_{j})$ is the empirical risk, $(\bm{V}_{N})_{: j}$ denotes the $j$-th column of $\bm{V}_{N}$, $\bm{V}_{0}=\bm{X}$. Note that we incorporate regularization functions $r_{i}(\cdot)$ and $s_{i}(\cdot)$ to reveal the priors of the compressed weight $\bm{W}_{i}^{MC}$ and the variable $\bm{V}_{i}$. For example, for the regularization of $\bm{W}_i^{MC}$, $r_{i}$ can be used for most of the regularization-based pruning~\citep{wang2020neural} including the squared $\ell_{2}$ norm, the $\ell_{1}$ norm, the grouped lasso, the $\ell_{0}$ norm, etc. The regularization term for $\bm{V}_i$ can be the $\ell_{1}$ norm or the indicator function of some convex set with simple projection. Setting $r_{i}$ or $s_{i}$ as zero means that no regularization or constraint is placed on $\bm{W}_{i}^{MC}$ or $\bm{V}_{i}$. 

Instead of considering the constrained optimization in~\eqref{eq: problem formulation}, which can be challenging to solve, the following formulation is considered by adding some constraints as regularization terms in the objective function
\begin{equation} \label{eq: final formulation}
\begin{aligned}
     \min_{\Btheta, \Btheta_{MC}, \bm{\mathcal{U}}, \bm{\mathcal{V}} } & \mathcal{L}(\Btheta, \Btheta_{MC}, \bm{\mathcal{V}},\bm{\mathcal{U}}):=  \mathcal{L}_{0}(\Btheta,\Btheta_{MC}, \bm{\mathcal{V}}) +\frac{\rho}{2} \sum_{i=1}^{N}\|\bm{U}_{i}-\bm{W}_{i} \bm{V}_{i-1}\|_{F}^{2}  \\
     &+\frac{\gamma}{2} \sum_{i=1}^{N}\|\bm{V}_{i}-\sigma_{i}(\bm{U}_{i})\|_{F}^{2}+ \frac{\tau}{2} \sum_{i=1}^{N}\|\bm{W}_i - \bm{W}_i^{MC}\|_{F}^{2}, \\
     \text{ subject to } & \mathcal{MC}(\Btheta_{MC})=0, 
\end{aligned}
\end{equation} 
where $\rho,\gamma,\tau>0$ are hyperparameters that capture the penalty for constraints violation. $\Btheta_{MC}$ is the set of compressed weights, which is \textit{our research of interest}.  Our framework is flexible enough and compatible with many model compression techniques, such as low-rank approximation, regularization-based pruning, and $\ell_0$ norm constrained. 
\begin{remark}[Advantages of Formulation~\eqref{eq: final formulation}]  
As mentioned before, existing NN training for model compression is either (1) Training from scratch or (2) Decomposing/pruning a pre-trained model and then retraining. For the first strategy, it does not utilize any information related to the high-accuracy uncompressed model. Proper utilization of the high-accuracy uncompressed model is very critical for NN compression. For the second strategy, though the knowledge of the pre-trained model is indeed utilized, the pre-trained model generally lacks low-rank/sparse property after direct low-rank approximation/pruning error is too significant to be properly recovered even using long-time re-training. Consequently, such inherent limitations of the existing training strategies cause significant accuracy loss for the compressed NNs. To overcome these limitations, it is to maximally retain the knowledge contained in the uncompressed model, or in other words, minimize the approximation error after low-rank approximation/pruning with given target ranks/sparsity. In our formulation~\eqref{eq: final formulation}, $\mathcal{L}_{0}(\Btheta,\Btheta_{MC}, \bm{\mathcal{V}})$ is the loss function of the uncompressed model while the regularization term $\|\bm{W}_i - \bm{W}_i^{MC}\|_{F}^{2}$ can encourage the uncompressed NN to gradually exhibit low-rank/sparse property.
\end{remark}


\subsection{Neural Network BCD Algorithm} \label{subsec: algorithm}
To solve the optimization problem in \eqref{eq: final formulation}, note that the objective is a nonconvex function with multi-block variables $\Btheta,\Btheta_{MC},\bm{\mathcal{U}},\bm{\mathcal{V}}$. Thus, we can apply the block coordinate descent algorithm, which is a Gauss-Seidel type method, to address objectives with multi-block variable structure~\citep{attouch2013convergence}. It iteratively updates one block of variables at a time, while keeping the remaining blocks fixed. Based on this idea, we propose the neural network block coordinate descent (NN-BCD) algorithm to solve~\eqref{eq: final formulation}. In order to make the training process more stable and to achieve the sufficient decrease property for theoretical justification, some proximal terms are added to some sub-problems arising from the NN-BCD algorithm if the original sub-problems are not strongly convex.  

In more detail, at each iteration, the NN-BCD with backward order is used to update variables, \ie, the variables are updated from the output layer (layer $N$) to the input layer (layer $1$) iteratively. Within each layer, the variables $\{\bm{V}_{i}, \bm{U}_{i}, \bm{W}_{i},\bm{W}_{i}^{MC}\}$ are updated cyclically based on formulation~\eqref{eq: final formulation}. Since the activation function in the last layer is an identical function (\ie, $\sigma_{N} \equiv \mathrm{Id}$), the optimization for the output layer is updated differently from other layers. The details of NN-BCD algorithms are presented below and summarized in Algorithm~\ref{alg: NN-BCD}.


\begin{algorithm}[!htbp]
\caption{NN-BCD Algorithm}\label{alg: NN-BCD}
\textbf{Input:} Sample  $\bm{X} \in \mathbb{R}^{n_0 \times n}$ and $\bm{Y} \in \mathbb{R}^{n_N \times n}$, $\gamma,\rho,\tau,\alpha>0$ \\
\textbf{Initialization:} $\{\bm{V}_i^0, \bm{U}_i^0,\bm{W}_i^0,\bm{W}_i^{MC,0}\}_{i=1}^N, \bm{V}_0^k \equiv \bm{V}_0:=\bm{X}$   
\begin{algorithmic}[1]
\For{$k=1,\ldots$}
\State Update $\bm{V}_N^k$ by solving~\eqref{eq: V_N update}

\State Update $\bm{U}_N^k$ by solving~\eqref{eq: U_N update}

\State Update $\bm{W}_N^k$ by solving~\eqref{eq: W_i update}

\State Update $\bm{W}_N^{MC,k}$ by solving~\eqref{eq: W_i^MC update}
      \For{$i=N-1,\ldots,1$}     
      \State Update $\bm{V}_i^k$ by solving~\eqref{eq: V_i update}
      
      \State Update $\bm{U}_i^k$ by solving~\eqref{eq: U_i update}

      \State Update $\bm{W}_i^k$ by solving~\eqref{eq: W_i update}

      \State Update $\bm{W}_i^{MC,k}$ by solving~\eqref{eq: W_i^MC update}
      \EndFor
\EndFor
\end{algorithmic}
\textbf{Output:} $\{\bm{W}_i^{MC}\}_{i=1}^N$
\end{algorithm}

\textbf{Optimization over $\bm{V_i}$:} At iteration $k$, $\bm{V}_N$ can be updated through the following optimization problem 
\begin{equation}\label{eq: V_N update}
    \bm{V}_N^k=\underset{\bm{V}_N}{\operatorname{argmin}} \left\{s_N(\bm{V}_N)+\mathcal{R}_n(\bm{V}_N ; \bm{Y})+\frac{\gamma}{2} \| \bm{V}_N-\bm{U}_N^{k-1}\|_F^2+\frac{\alpha}{2}\|\bm{V}_N-\bm{V}_N^{k-1} \|_F^2 \right\},
\end{equation}
{where $\tilde{s}_{N}(\bm{V}_{N}):=s_{N}(\bm{V}_{N})+\mathcal{R}_{n}(\bm{V}_{N} ; \bm{Y})$ is regarded as a new proximal function}  and $\frac{\alpha}{2}\|\bm{V}_N-\bm{V}_N^{k-1} \|_F^2$ is the proximal term. When $i<N$, $\bm{V}_i$ can be updated through the following optimization problem 
\begin{equation}\label{eq: V_i update}
    \bm{V}_i^k=\underset{\bm{V}_i}{\operatorname{argmin}}
     \left\{s_i(\bm{V}_i)+\frac{\gamma}{2} \| \bm{V}_i-\sigma_i(\bm{U}_i^{k-1})\|_F^2+\frac{\rho}{2} \| \bm{U}_{i+1}^k-\bm{W}_{i+1}^k\bm{V}_i\|_F^2 \right\}. 
\end{equation}
 Note that subproblems~\eqref{eq: V_i update} is a simple proximal update~\citep{attouch2013convergence,bolte2014proximal}, which is the least square minimization that admits closed-form solutions for many commonly used neural networks.  Some typical examples leading to the closed-form solutions include (a) regularization terms $s_{i}$ are zero (\ie, no regularization), or the squared $\ell_{2}$ norm; (b) the indicator function of a nonempty closed convex set with a simple projection like the nonnegative closed half-space and the closed interval $[0,1]$; (c) the loss function $\ell$ is the squared loss or hinge loss. More details are discussed in Appendix~\ref{appendix: subproblems V_N}.

\textbf{Optimization over $\bm{{U}_i}$:}
At iteration $k$, $\bm{U}_N$ can be updated through the following optimization problem 
\begin{equation} \label{eq: U_N update}
    \bm{U}_N^k=\underset{\bm{U}_N}{\operatorname{argmin}} \left\{\frac{\gamma}{2}\|\bm{V}_N^k-\bm{U}_N\|_F^2+\frac{\rho}{2} \| \bm{U}_N-\bm{W}_N^{k-1} \bm{V}_{N-1}^{k-1} \|_F^2 \right\},
\end{equation}
and $\bm{U}_i,i<N$ can be updated through 
\begin{equation}\label{eq: U_i update}
\bm{U}_i^k=\underset{\bm{U}_i}{\operatorname{argmin}} \left\{\frac{\gamma}{2}\|\bm{V}_i^k-\sigma_i(\bm{U}_i)\|_F^2+\frac{\rho}{2} \| \bm{U}_i-\bm{W}_i^{k-1} \bm{V}_{i-1}^{k-1} \|_F^2  +\frac{\alpha}{2}\|\bm{U}_i-\bm{U}_i^{k-1} \|_F^2 \right\},
\end{equation}
where $\frac{\alpha}{2}\|\bm{U}_i-\bm{U}_i^{k-1} \|_F^2$ is the proximal term. Subproblem~\eqref{eq: U_N update} is a least square optimization that admits a closed-form solution. Moreover, the subproblem~\eqref{eq: U_i update}  is a nonlinear and nonsmooth where  $\sigma_{i}$ is ReLU or leaky ReLU. In this case, the closed-form solution to solve the subproblem~\eqref{eq: U_i update} is provided in Appendix~\ref{appendix: subproblems Relu}.

\textbf{Optimization over $\bm{{W}_i}$:} At iteration $k$, $\bm{W}_i,i=1,\dots,N$ can be updated through the following optimization problem 
\begin{equation}\label{eq: W_i update}
\bm{W}_i^k=\underset{\bm{W}_i}{\operatorname{argmin}} \left\{\frac{\rho}{2}\|\bm{U}_i^k-\bm{W}_i \bm{V}_{i-1}^{k-1}\|_F^2+\frac{\tau}{2}\|\bm{W}_i - \bm{W}_i^{MC}\|_{F}^{2} \right\}, 
\end{equation}
Again, the closed-form solution to solve the above optimization problem can be obtained since it is a least square problem. 

\textbf{Optimization over $\bm{W}_i^{MC}$:} At iteration $k$, $\bm{W}_i^{MC},i=1,\dots,N$ can be updated through the following optimization problem
\begin{equation}\label{eq: W_i^MC update}
\begin{aligned}
      \bm{W}_i^{MC,k} & =\underset{\bm{W}_i^{MC}}{\operatorname{argmin}} \left\{r_i(\bm{W}_i^{MC})+\frac{\tau}{2}\|\bm{W}_i^k - \bm{W}_i^{MC}\|_{F}^{2} +\frac{\alpha}{2}\|\bm{W}_i^{MC}-\bm{W}_i^{MC,k-1}\|_F^2 \right\} \\
      \text{ subject to } & \mathcal{MC}(\bm{W}_i^{MC})=0,
\end{aligned}
\end{equation} where $\frac{\alpha}{2} \|\bm{W}_i^{MC}-\bm{W}_i^{MC,k-1}\|_F^2$ is the proximal term. We remark that this subproblem is also a least square problem and can be implemented efficiently. Examples of low-rank approximation and weight pruning are provided in Appendix~\ref{appendix: subproblems MC}, where closed-form solutions can be obtained. 

\subsection{Convergence Analysis} \label{subsec: convergence}
The following definitions are necessary in order to introduce our main theoretical results.
 \begin{definition}[\normalfont Critical point~\citep{attouch2009convergence, attouch2010proximal}]~A necessary condition for $\bm{x}$ to be a minimizer of a proper and lower semicontinuous (PLSC) function  $f$ is that
 \begin{equation} \label{eq: critical point}
     \bm{0} \in \partial f(\bm{x}).
 \end{equation}
 A point that satisfies \eqref{eq: critical point} is called limiting-critical or simply critical.
\end{definition}

\begin{definition}[\normalfont Global convergence~\citep{petrovai2017global,xu2018convergence}]  \label{def: global convergence}
Any iterative algorithm for solving an optimization problem over a set $X$, is said to be \textbf{globally convergent} if for any starting point $\bm x_0 \in X$, the sequence generated by the algorithm always has an accumulation critical point.
\end{definition}

The global convergence property of Algorithm~\ref{alg: NN-BCD} is established by proving a sufficient decrease property, a subgradient bound, and analyzing the Kurdyka-\L{}ojasiewicz (K\L) property of the objective function in~\eqref{eq: final formulation}.

In what follows, let $$\{\mathcal{P}^{k}\}_{k \in \mathbb{N}}:=\Big\{\Big(\{\bm{W}_{i}^{k}\}_{i=1}^{N},\{\bm{V}_{i}^{k}\}_{i=1}^{N},\{\bm{U}_{i}^{k}\}_{i=1}^{N},\{\bm{W}_{i}^{MC,k}\}_{i=1}^{N}\Big)\Big\}_{k \in \mathbb{N}}$$ be the iterative sequence  generated by Algorithm~\ref{alg: NN-BCD}.
The following lemma establishes the sufficient decrease property of sequence  $\{\mathcal{P}^{k}\}_{k \in \mathbb{N}} $.

\begin{lemma}[Sufficient Decrease Property] \label{lemma: sufficient decrease}
Suppose $\alpha,\gamma,\rho,\tau>0$ and $\left\{\mathcal{P}^{k}\right\}_{k \in \mathbb{N}}$ is the sequence generated by the NN-BCD algorithm~\ref{alg: NN-BCD}. Then we have 
\begin{equation}\label{eq: sufficient decrease}
 \mathcal{L}(\mathcal{P}^{k}) \leq \mathcal{L}(\mathcal{P}^{k-1})-\lambda\|\mathcal{P}^{k}-\mathcal{P}^{k-1}\|_{F}^{2}.
\end{equation}
 where $\lambda=\min \left\{ \alpha, \gamma+\rho,\tau\right\}/2$. 
\end{lemma}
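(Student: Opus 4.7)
The plan is to decompose one outer iteration of Algorithm~\ref{alg: NN-BCD} into its $4N$ elementary block updates and prove a per-block sufficient decrease inequality for each, then sum. The structural fact that makes this work is that every subproblem~\eqref{eq: V_N update}--\eqref{eq: W_i^MC update} is obtained by freezing all other blocks at their current iterates and minimizing $\mathcal{L}$ restricted to the active block, plus in some cases an extra proximal term $\tfrac{\alpha}{2}\|\cdot-\cdot^{k-1}\|_F^2$. Therefore the drop in $\mathcal{L}$ caused by updating one block equals the drop in the corresponding subproblem objective less the proximal contribution, and whatever quadratic curvature the subproblem exposes can be cashed in through the optimality comparison against the previous iterate.

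Concretely, I would handle the four block types as follows. For $\bm{V}_N$, for $\bm{U}_i$ with $i<N$, and for $\bm{W}_i^{MC}$, the subproblems all carry an explicit proximal term with parameter $\alpha$; plugging in the previous iterate as a competitor (which is feasible for $\mathcal{MC}(\cdot)=0$ in the $\bm{W}_i^{MC}$ case) yields decrease at least $\tfrac{\alpha}{2}\|\cdot\|_F^2$. Subproblem~\eqref{eq: U_N update} for $\bm{U}_N$ is a pure quadratic whose Hessian is $(\gamma+\rho)\mathbf{I}$, so standard strong convexity of the minimizer gives decrease at least $\tfrac{\gamma+\rho}{2}\|\bm{U}_N^k-\bm{U}_N^{k-1}\|_F^2$. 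Each $\bm{W}_i$ subproblem~\eqref{eq: W_i update} is $\tau$-strongly convex, because the term $\tfrac{\tau}{2}\|\bm{W}_i-\bm{W}_i^{MC}\|_F^2$ contributes $\tau\mathbf{I}$ to the Hessian while the $\rho$-coupling is PSD, giving decrease at least $\tfrac{\tau}{2}\|\bm{W}_i^k-\bm{W}_i^{k-1}\|_F^2$. The subproblems~\eqref{eq: V_i update} for $\bm{V}_i$ with $i<N$ inherit strong convexity from the $\tfrac{\gamma}{2}\|\bm{V}_i-\sigma_i(\bm{U}_i^{k-1})\|_F^2$ term plus the PSD $\rho$-coupling.

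Adding the $4N$ per-block inequalities, the left-hand side telescopes to $\mathcal{L}(\mathcal{P}^{k-1})-\mathcal{L}(\mathcal{P}^k)$, every per-block proximal or strong-convexity constant is at least $\min\{\alpha,\gamma+\rho,\tau\}$, and the squared increments reassemble into $\|\mathcal{P}^k-\mathcal{P}^{k-1}\|_F^2$, which produces \eqref{eq: sufficient decrease}. The main obstacle is the $\bm{U}_i$ subproblem for $i<N$: the nonlinearity $\sigma_i$ (typically ReLU or leaky ReLU) makes the objective nonconvex in $\bm{U}_i$, so strong convexity of the full subproblem is unavailable. I would circumvent this by using only the raw optimality inequality $F(\bm{U}_i^k)+\tfrac{\alpha}{2}\|\bm{U}_i^k-\bm{U}_i^{k-1}\|_F^2\le F(\bm{U}_i^{k-1})$, where $F$ collects the non-proximal part of the subproblem objective, so that the proximal $\alpha$-term buys the decrease while the nonconvex $\gamma$-piece enters identically on both sides. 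An analogous but milder subtlety arises for $\bm{W}_i^{MC}$ under the possibly nonconvex constraint $\mathcal{MC}(\cdot)=0$, and is handled by the fact that $\bm{W}_i^{MC,k-1}$ is itself feasible and therefore an admissible competitor for the minimizer $\bm{W}_i^{MC,k}$.
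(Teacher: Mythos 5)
Your proposal is correct and follows essentially the same block-by-block argument as the paper's proof in Appendix~\ref{appendix: proof of sufficient decrease}: the explicit proximal terms buy an $\alpha/2$ decrease for the $\bm{V}_N$, $\bm{U}_i$ ($i<N$), and $\bm{W}_i^{MC}$ updates (with $\bm{W}_i^{MC,k-1}$ feasible as a competitor), strong convexity buys $(\gamma+\rho)/2$ and $\tau/2$ for the $\bm{U}_N$ and $\bm{W}_i$ updates, and the per-block inequalities telescope. One small caveat, which the paper's own proof shares: the inner $\bm{V}_i$ subproblem is only $\gamma$-strongly convex (the $\rho$-coupling is merely PSD, as you yourself note), so the constant this argument actually delivers is $\min\{\alpha,\gamma,\tau\}/2$ rather than the stated $\min\{\alpha,\gamma+\rho,\tau\}/2$; your closing claim that every per-block constant is at least $\min\{\alpha,\gamma+\rho,\tau\}$ fails for that block whenever $\gamma<\min\{\alpha,\gamma+\rho,\tau\}$.
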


Lemma~\ref{lemma: sufficient decrease} is crucial for the overall convergence of a nonconvex problem. It builds on the proximal update scheme for all non-strongly convex subproblems defined in Algorithm~\ref{alg: NN-BCD}. Its detailed proof is provided in Appendix~\ref{appendix: proof of sufficient decrease}. According to Lemma~\ref{lemma: sufficient decrease}, the sequence $\{\mathcal{L}(\mathcal{P}^{k})\}_{_{k \in \mathbb{N}}}$ generated from our algorithm is monotone decreasing. The descent quantity for each iteration is bounded by the discrepancy between the previous and current iterations. In contrast, existing literature, such as \citet{davis2020stochastic}, demonstrates the convergence of subsequences for stochastic gradient descent (SGD) for NN training. However, our NN-BCD algorithm~\ref{alg: NN-BCD} in this study ensures the convergence of the whole sequence. The main distinction between the subsequence convergence of SGD and the whole sequence convergence of our NN-BCD algorithm is primarily due to the fact that SGD can achieve the descent property instead of the sufficient descent property. 


In contrast to existing literature~\citep{attouch2013convergence, xu2013block, bolte2014proximal,   xu2017globally}, which requires multiconvexity, differentiability, or Lipschitz differentiability assumptions for neural networks, the assumptions for Lemma~\ref{lemma: sufficient decrease} are greatly relaxed. Another lemma crucial to our convergence analysis establishes a bound to the subgradients and its proof is provided in Appendix~\ref{appendix: proof of subgradient low bound}.  
\begin{lemma}[Subgradient Bound] \label{lemma: subgradient low bound}
    Under the same assumptions of Lemma~\ref{lemma: sufficient decrease}, let $\mathcal{B}$ be a uniform upper bound of $\mathcal{P}^{k}$ for all $k, L_{\mathcal{B}}$ be a uniform Lipschitz constant of activation function $\sigma_{i}$ on the bounded set $\{\mathcal{P}:\|\mathcal{P}\|_{F} \leq \mathcal{B}\}$, and
$\delta:=\max \{\gamma, \alpha+\rho \mathcal{B}, \alpha+\gamma L_{\mathcal{B}}, 2 \rho \mathcal{B}+ 2\rho \mathcal{B}^{2}, \alpha +\tau \}$. Then for any $k$, we have
\begin{equation}\label{eq: subgradient bound}
    \begin{aligned}
        \operatorname{dist}(\mathbf{0}, \partial \mathcal{L} (\mathcal{P}^{k}))
        \leq  &\delta \sum_{i=1}^{N}\Big[ \|\bm{W}_{i}^{k}-\bm{W}_{i}^{k-1}\|_{F} +\|\bm{V}_{i}^{k}-\bm{V}_{i}^{k-1}\|_{F} \\
        & +\|\bm{U}_{i}^{k}-\bm{U}_{i}^{k-1}\|_{F}+\|\bm{W}_{i}^{MC,k}-\bm{W}_{i}^{MC,k-1}\|_{F} \Big] \\
         \leq & \bar{\delta}\|\mathcal{P}^{k}-\mathcal{P}^{k-1}\|_{F},
    \end{aligned}
\end{equation}
where $\bar{\delta}= \delta \sqrt{4 N}$, $\operatorname{dist}(\mathbf{0}, \mathcal{S})=\inf _{\bm{s} \in \mathcal{S}}\|\bm{s}\|_{F}$ denotes the distance of $\mathbf{0}$  to a set $\mathcal{S}$, and
\begin{equation*}
    \partial \mathcal{L}\big(\mathcal{P}^{k})=\Big(\big\{\partial_{\bm{W}_{i}} \mathcal{L}(\mathcal{P}^{k})\big\}_{i=1}^{N}, \big\{\partial_{\bm{V}_{i}} \mathcal{L}(\mathcal{P}^{k})\big\}_{i=1}^{N},\big\{\partial_{\bm{U}_{i}} \mathcal{L}(\mathcal{P}^{k})\big\}_{i=1}^{N},\big\{\partial_{\bm{W}_i^{MC}} \mathcal{L}(\mathcal{P}^{k})\big\}_{i=1}^{N} \Big).
\end{equation*}
\end{lemma}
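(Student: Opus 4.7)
The plan is a block-by-block analysis of the six subproblem types in Algorithm~\ref{alg: NN-BCD}: for each block $\bm{B}\in\{\bm{V}_i,\bm{U}_i,\bm{W}_i,\bm{W}_i^{MC}\}$ at iteration $k$, write the first-order optimality condition of the subproblem defining $\bm{B}^k$, then subtract it from $\partial_{\bm{B}}\mathcal{L}(\mathcal{P}^k)$ to exhibit an explicit element of $\partial_{\bm{B}}\mathcal{L}(\mathcal{P}^k)$ whose Frobenius norm is controlled by iterate differences. Because the nonsmooth components of $\mathcal{L}$ in~\eqref{eq: final formulation} are blockwise separable and each block enters at most two smooth quadratic penalties, the limiting subdifferential splits blockwise, so $\operatorname{dist}(\bm{0},\partial\mathcal{L}(\mathcal{P}^k))$ can be bounded by the sum of the per-block distances.

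For the three proximal blocks~\eqref{eq: V_N update}, \eqref{eq: U_i update}, and \eqref{eq: W_i^MC update}, the subproblem optimality reads $\bm{0}\in G_{\bm{B}}(\cdot)+\alpha(\bm{B}^k-\bm{B}^{k-1})$, where $G_{\bm{B}}$ differs from $\partial_{\bm{B}}\mathcal{L}(\mathcal{P}^k)$ only in its ``stale'' arguments. In the cleanest case~\eqref{eq: W_i^MC update} all auxiliary entries are already at iteration~$k$, so directly $-\alpha(\bm{W}_i^{MC,k}-\bm{W}_i^{MC,k-1})\in\partial_{\bm{W}_i^{MC}}\mathcal{L}(\mathcal{P}^k)$; for~\eqref{eq: V_N update} and~\eqref{eq: U_i update} one picks up additional terms such as $\gamma(\bm{U}_N^{k-1}-\bm{U}_N^k)$ and $\rho(\bm{W}_i^{k-1}\bm{V}_{i-1}^{k-1}-\bm{W}_i^k\bm{V}_{i-1}^k)$. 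The non-proximal blocks~\eqref{eq: V_i update}, \eqref{eq: U_N update}, and~\eqref{eq: W_i update} are treated identically but with no $\alpha$-term, so their contributions come entirely from stale-versus-current discrepancies: $\sigma_i(\bm{U}_i^{k-1})$ vs.\ $\sigma_i(\bm{U}_i^k)$ in~\eqref{eq: V_i update}; $\bm{W}_N^{k-1}\bm{V}_{N-1}^{k-1}$ vs.\ $\bm{W}_N^k\bm{V}_{N-1}^k$ in~\eqref{eq: U_N update}; and the bilinear $(\bm{U}_i^k-\bm{W}_i^k\bm{V}_{i-1})(\bm{V}_{i-1})^\top$ evaluated at two different $\bm{V}_{i-1}$'s in~\eqref{eq: W_i update}. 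I control these discrepancies using the uniform bound $\|\mathcal{P}^k\|_F\le\mathcal{B}$, the $L_{\mathcal{B}}$-Lipschitz continuity of $\sigma_i$ on $\{\|\mathcal{P}\|_F\le\mathcal{B}\}$, and the standard splittings $AB-A'B'=A(B-B')+(A-A')B'$ and $VV^\top-V'V'^\top=V(V-V')^\top+(V-V')V'^\top$. A straightforward computation produces the coefficients $\alpha,\gamma,\gamma L_{\mathcal{B}},\alpha+\rho\mathcal{B}$, and $2\rho\mathcal{B}+2\rho\mathcal{B}^2$, all enveloped by $\delta$, and summing the $4N$ per-block inequalities yields the first line of~\eqref{eq: subgradient bound}.

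The second inequality in~\eqref{eq: subgradient bound} is then a single Cauchy--Schwarz step: the sum of the $4N$ scalar terms $\|\bm{W}_i^k-\bm{W}_i^{k-1}\|_F,\|\bm{V}_i^k-\bm{V}_i^{k-1}\|_F,\|\bm{U}_i^k-\bm{U}_i^{k-1}\|_F,\|\bm{W}_i^{MC,k}-\bm{W}_i^{MC,k-1}\|_F$ is at most $\sqrt{4N}\,\|\mathcal{P}^k-\mathcal{P}^{k-1}\|_F$, which gives $\bar\delta=\delta\sqrt{4N}$. No deep inequality is needed beyond Lipschitz continuity and boundedness, so the hard part is purely bookkeeping: under the backward sweep $i=N,\ldots,1$ with the inner cyclic order $\bm{V}_i\to\bm{U}_i\to\bm{W}_i\to\bm{W}_i^{MC}$, one must carefully tabulate, for each of the six subproblem types, which coordinates entering $\partial_{\bm{B}}\mathcal{L}(\mathcal{P}^k)$ are at iteration $k$ and which are still at $k-1$, and then expand the bilinear/Lipschitz discrepancies cleanly so that the coefficients comprising $\delta$ emerge exactly as stated.
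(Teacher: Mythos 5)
Your proposal matches the paper's own proof essentially step for step: write the first-order optimality conditions of each subproblem, rearrange them to exhibit explicit elements of each block subdifferential $\partial_{\bm{B}}\mathcal{L}(\mathcal{P}^{k})$ whose norms involve only stale-versus-current discrepancies, bound those discrepancies via $\mathcal{B}$, $L_{\mathcal{B}}$, and the bilinear splitting $AB-A'B'=A(B-B')+(A-A')B'$, and finish with Cauchy--Schwarz to get $\bar{\delta}=\delta\sqrt{4N}$. The per-block coefficients you identify are the ones appearing in the paper (up to absorbing them all into $\delta$, as you note), so the argument is correct and not a different route.
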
 

Based on Lemmas~\ref{lemma: sufficient decrease} and~\ref{lemma: subgradient low bound}, the following theorem can be obtained. 
\begin{theorem}[Global Convergence of NN-BCD] \label{thm: global convergence}
Let $\{\mathcal{P}^{k}\}_{k \in \mathbb{N} } $
be the sequences generated from Algorithm~\ref{alg: NN-BCD}.   Suppose that $r_{i}$  and $\mathcal{L}$ are coercive\footnote{An extended-real-valued function $h$ is called coercive if and only if $h(\bm{x}) \to +\infty$ as $\|\bm{x}\|\to \infty$.} for any $i=1, \ldots, N$. Then for any $\alpha,\gamma,\rho,\tau>0$ and any finite initialization $\mathcal{P}^{0}$, the following statements hold
\begin{enumerate}
    \item $\{\mathcal{L}(\mathcal{P}^{k})\}_{k \in \mathbb{N}}$ converges to $\mathcal{L}^{*}$, which is the unique convergent value of the whole sequence.
    \item $\{\mathcal{P}^{k}\}_{k \in \mathbb{N}}$  converges to a critical point of $\mathcal{L}$ in~\eqref{eq: final formulation}.
    \item If the initialization $\mathcal{P}^{0}$ is sufficiently close to some global minimum $\mathcal{P}^{*}$ of $\mathcal{L}$, then $\mathcal{P}^{k}$  converges to $\mathcal{P}^{*}$.
    \item The averaged subgradient ${1}/{K}\cdot  \sum_{k=1}^{K}\|\bm{g}^{k}\|_{F}^{2}$ converges to zero with rate $\mathcal{O}(1 / K)$, where $\bm{g}^{k} \in$ $\partial \mathcal{L}(\mathcal{P}^{k})$. 
\end{enumerate}
\end{theorem}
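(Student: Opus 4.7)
The plan is to assemble the four claims from the standard Attouch--Bolte--Svaiter convergence recipe for descent methods on KŁ objectives, using Lemmas~\ref{lemma: sufficient decrease} and~\ref{lemma: subgradient low bound} as the two driving inputs, together with the coercivity and KŁ properties of $\mathcal{L}$.

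First, I would establish boundedness of $\{\mathcal{P}^{k}\}_{k\in\mathbb{N}}$. By Lemma~\ref{lemma: sufficient decrease} the sequence $\{\mathcal{L}(\mathcal{P}^{k})\}$ is nonincreasing, so $\mathcal{L}(\mathcal{P}^{k}) \leq \mathcal{L}(\mathcal{P}^{0}) < +\infty$, and coercivity of $\mathcal{L}$ (and of each $r_i$, which prevents $\bm{W}_i^{MC}$ from escaping on the feasibility set $\mathcal{MC}(\bm{W}_i^{MC})=0$) forces $\{\mathcal{P}^{k}\}$ to lie in a compact set. This also justifies the uniform upper bound $\mathcal{B}$ posited in Lemma~\ref{lemma: subgradient low bound}. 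Since $\mathcal{L}$ is bounded below on this compact set, $\mathcal{L}(\mathcal{P}^{k}) \downarrow \mathcal{L}^{*}$ for some $\mathcal{L}^{*} \in \mathbb{R}$, which is claim~(1). Telescoping~\eqref{eq: sufficient decrease} gives the key summability estimate
\begin{equation*}
    \lambda \sum_{k=1}^{\infty} \|\mathcal{P}^{k}-\mathcal{P}^{k-1}\|_{F}^{2} \leq \mathcal{L}(\mathcal{P}^{0}) - \mathcal{L}^{*} < +\infty,
\end{equation*}
so in particular $\|\mathcal{P}^{k}-\mathcal{P}^{k-1}\|_{F}\to 0$. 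Combined with the bound in Lemma~\ref{lemma: subgradient low bound}, this already yields claim~(4): taking $\bm{g}^{k} \in \partial \mathcal{L}(\mathcal{P}^{k})$ with $\|\bm{g}^{k}\|_{F} \leq \bar{\delta}\|\mathcal{P}^{k}-\mathcal{P}^{k-1}\|_{F}$, I sum the squared bound over $k=1,\dots,K$ and divide by $K$ to obtain $\tfrac{1}{K}\sum_{k=1}^{K}\|\bm{g}^{k}\|_{F}^{2} \leq \bar{\delta}^{2}(\mathcal{L}(\mathcal{P}^{0})-\mathcal{L}^{*})/(\lambda K) = \mathcal{O}(1/K)$.

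For claim~(2) I would argue in two stages. Compactness yields an accumulation point $\mathcal{P}^{\infty}$; since $\|\mathcal{P}^{k}-\mathcal{P}^{k-1}\|_{F}\to 0$, the subgradient bound forces $\operatorname{dist}(\mathbf{0}, \partial\mathcal{L}(\mathcal{P}^{k})) \to 0$, and by the closedness of the limiting subdifferential together with lower semicontinuity of $\mathcal{L}$ along the subsequence converging to $\mathcal{P}^{\infty}$, one obtains $\mathbf{0} \in \partial \mathcal{L}(\mathcal{P}^{\infty})$, so every accumulation point is critical and all share the common value $\mathcal{L}^{*}$. Upgrading this to whole-sequence convergence is the main obstacle and is where the Kurdyka--\L ojasiewicz property of $\mathcal{L}$ enters. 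The objective in~\eqref{eq: final formulation} is built from semi-algebraic pieces (squared Frobenius norms, polynomial constraints defining $\mathcal{MC}$, piecewise-polynomial activations like ReLU/leaky-ReLU, and standard regularizers), so $\mathcal{L}$ is a KŁ function with desingularizing function of \L ojasiewicz form $\varphi(s)=c s^{1-\theta}$ on a neighborhood of $\mathcal{P}^{\infty}$. Applying the standard KŁ-uniformization argument of Attouch--Bolte--Svaiter, with sufficient decrease~\eqref{eq: sufficient decrease} as the descent condition and~\eqref{eq: subgradient bound} as the relative-error condition, the concavity of $\varphi$ and a telescoping estimate on $\varphi(\mathcal{L}(\mathcal{P}^{k})-\mathcal{L}^{*})$ yield the finite-length property
\begin{equation*}
    \sum_{k=1}^{\infty} \|\mathcal{P}^{k}-\mathcal{P}^{k-1}\|_{F} < +\infty,
\end{equation*}
hence $\{\mathcal{P}^{k}\}$ is Cauchy and converges to the critical point $\mathcal{P}^{\infty}$.

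Finally, claim~(3) is a local capture argument. Near a global minimum $\mathcal{P}^{*}$, one picks a small enough neighborhood $\mathcal{N}$ on which the KŁ inequality holds and on which $\mathcal{L}(\mathcal{P})-\mathcal{L}(\mathcal{P}^{*})$ is small. Using the summability estimate from the KŁ argument, if $\mathcal{P}^{0}\in\mathcal{N}$ is close enough to $\mathcal{P}^{*}$ and $\mathcal{L}(\mathcal{P}^{0})$ is close enough to $\mathcal{L}(\mathcal{P}^{*})$, then the entire trajectory stays in $\mathcal{N}$; combined with the whole-sequence convergence, this forces $\mathcal{P}^{k}\to\mathcal{P}^{*}$. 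The delicate step throughout is verifying the KŁ property and handling the nonsmooth, nonconvex coupling inside $\partial \mathcal{L}(\mathcal{P}^{k})$; once semi-algebraicity of the activations and regularizers is recorded, the rest is mechanical bookkeeping through the Attouch--Bolte--Svaiter template.
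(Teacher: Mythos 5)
Your proposal is correct and follows essentially the same route as the paper: both proofs run the Attouch--Bolte--Svaiter template, combining the sufficient decrease of Lemma~\ref{lemma: sufficient decrease}, the relative-error bound of Lemma~\ref{lemma: subgradient low bound}, verification that $\mathcal{L}$ is K\L{} via semi-algebraicity/real-analyticity of its pieces, and a local-neighborhood capture lemma to obtain finite length and whole-sequence convergence. Your explicit telescoping computation for claim~(4) is in fact slightly more detailed than the paper's one-line justification, but the underlying argument is identical.
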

In summary, Theorem \ref{thm: global convergence} guarantees that the NN-BCD algorithm converges to a critical point of the objective at a rate of $\mathcal{O}(1/k)$, where $k$ denotes the iteration number. Its detailed proof is provided in Appendix~\ref{appendix: theorem of global convergence}. 

Remarkably, in most existing literature, either multiconvexity or Lipschitz differentiability assumption is required to establish the convergence of nonconvex optimizations with multi-block variables~\citep{xu2013block, xu2017globally}. However, the neural networks involved in~\eqref{eq: final formulation} may not satisfy these requirements typically. For example, the ReLU activation function is non-differentiable and nonconvex. In contrast, the assumptions adopted in our analysis are quite mild. We solely rely on the Lipschitz continuity of the activation functions on a bounded set, which is met by the majority of commonly used activation functions. Theorem~\ref{thm: global convergence} demonstrates that global convergence can be achieved under the assumptions that most neural networks satisfy, which is verified by our experiments in Section~\ref{sec: experiments}. 
\section{Experiments} \label{sec: experiments}
To evaluate the performance of the proposed NN-BCD algorithm~\ref{alg: NN-BCD}, different neural network structures with different datasets are considered:
\begin{itemize}
    \item Section~\ref{subsec: case study CNN}: we studied TTD-based CNN with the MNIST dataset. Specifically, $r_i$'s and $s_i$'s in \eqref{eq: final formulation} are set to zero. For $i=1,\dots,N$, $\mathcal{MC}(\bm{\mathcal{W}}_i^{MC})=0$ means that $\bm{\mathcal{W}}_i^{MC} = \textnormal{TTD}(\bm{r}_i)$, where TT-ranks $\bm{r}_i$ can be changed to obtain different compression ratios.
    \item Section~\ref{subsec: case study HAR}: we studied TTD-based  Multilayer Perceptron (MLP) with the UCI-HAR (Human Activity Recognition)\footnote{\url{https://archive.ics.uci.edu/dataset/240/human+activity+recognition+using+smartphones}} dataset~\citep{Reyes2012HAR}. Specifically, $r_i$'s and $s_i$'s in \eqref{eq: final formulation} are set to zero. For $i=1,\dots,N$, $\mathcal{MC}(\bm{\mathcal{W}}_i^{MC})=0$ means that  $\bm{\mathcal{W}}_i^{MC} = \textnormal{TTD}(\bm{r}_i)$, where TT-ranks $\bm{r}_i$ can be changed to obtain different compression ratios. Additional experiments with a deeper MLP structure are conducted in Appendix~\ref{appendix: case study HAR 5 layers} 
    \item Section~\ref{subsec: case study flare}: we studied weight pruning with the flare classification\footnote{\url{https://web.njit.edu/~wangj/LSTMpredict/}} dataset~\citep{liu2019predicting}. Specifically, $r_i$'s and $s_i$'s in \eqref{eq: final formulation} are set to zero. For $i=1,\dots,N$, $\mathcal{MC}(\bm{W}_i^{MC})=0$ means that $\|\bm{W}_i^{MC}\|_0=\beta_i$, where sparsity level $\beta_i$ can be changed.
\end{itemize}
 The effectiveness of the compression can be measured by \textbf{Compression Ratio (CR)}, which is defined as 
\begin{equation*}
    \textnormal{Compression Ratio}=\frac{\text{the number of weights after compression}}{\text{the number of weights without compression}}.    
\end{equation*} A smaller value of CR indicates a better compression performance. 

All experiments in this section apply the squared loss function (\ie, $\|\cdot\|_2^2$). The same initializations are set for each experiment.  Specifically, all the weights $\{\bm{W}_i\}_{i=1}^N$ are initialized from a Gaussian distribution with a standard deviation of 0.01. The auxiliary variables $\{\bm{U}_i\}_{i=1}^N$, state variables $\{\bm{V}_i\}_{i=1}^N$, and the compressed weights $\{\bm{W}_i^{MC}\}_{i=1}^N$ are initialized by a single forward pass~\citep{zeng2019global,zeng2021admm}. All results in this section are the average results of ten repetitions for comparison. We trained our model through high-performance computing from NJIT. The codes of NN-BCD are implemented in Python 3.7. The GPU we used for model training is a single NVIDIA Tesla P100 16GB. 

\subsection{Experiments on Tensorized CNN} \label{subsec: case study CNN}
The MNIST dataset, which is a handwritten digits dataset, is used to evaluate the effectiveness and efficiency of our proposed method. The numbers of training and test samples are 60,000 and 10,000, respectively. The size of each input image is $28\times28$ and the output dimension is the number of classes (\ie, $10$).  In this experiment,  the CNN architecture has one convolution layer and two hidden fully-connected layers. The size of the kernel tensor $\bm{\mathcal{K}}$ is $3\times 3\times 32$ while the number of hidden units in each fully-connected layer is $2^{10}=1024$. Our CNN also uses the ReLU activation function.  


\begin{table}[!htbp]
\centering
 \caption{Results of NN-BCD algorithm with different compression ratios (CNN MNIST).}
 \label{tab: Mnist CNN CR ratio}
  \begin{tabular}{lrrrrrr}
    \toprule
    & \multicolumn{2}{c}{Training Loss~\eqref{eq: final formulation}}                     & \multicolumn{2}{c}{Training Accuracy}                       & \multicolumn{2}{c}{Test Accuracy}                           \\ \midrule
 CR     & Mean  & Std   & Mean  & Std   & Mean  & Std  \\ \midrule
0.0128	&0.0327	&0.0023	&0.9882	&0.0015	&0.9738	&0.0045\\
0.0370	&0.0185	&0.0002	&0.9995	&0.0004	&0.9825	&0.0026\\
0.0878	&0.0115	&0.0007	&1.0000	&0.0001	&0.9828	&0.0029\\
0.1784	&0.0074	&0.0008	&1.0000	&0.0000	&0.9822	&0.0024\\
0.2807	&0.0054	&0.0008	&1.0000	&0.0000	&0.9818	&0.0024\\
1.0000	&0.0038	&0.0001	&0.9966	&0.0019	&0.9709	&0.0051\\
    \bottomrule
  \end{tabular}
\end{table}  
 The same hyperparameter setting, \ie, $\gamma=5, \rho=5, \tau=0.1, \alpha=1$ in \eqref{eq: final formulation}, is used for the CNN model training with different compression ratios, where the compression ratios are determined by TT-rank $\bm{r}_i$. The training loss~\eqref{eq: final formulation}, training accuracy, and test accuracy results are shown in Table~\ref{tab: Mnist CNN CR ratio}. With a smaller compression ratio (with CR < 1), a higher training loss and lower training/test accuracy are observed since the compressed CNN with a smaller compression ratio has a larger model approximation error (\ie, the term $\|\bm{\mathcal{W}}_i - \bm{\mathcal{W}}_i^{MC}\|_{F}^{2}$ in \eqref{eq: final formulation} is larger). From Table~\ref{tab: Mnist CNN CR ratio}, our method with CR < 1 can outperform the uncompressed model (CR = 1). When CR = 0.0128, the test accuracy is 0.9738, which is slightly better than the uncompressed model with test accuracy = 0.9709. This result shows that our method not only compresses the model but also improves the test performance. \begin{figure}[!htbp] 
\centering
	\subfloat[Training loss]{\includegraphics[width=0.33\textwidth]{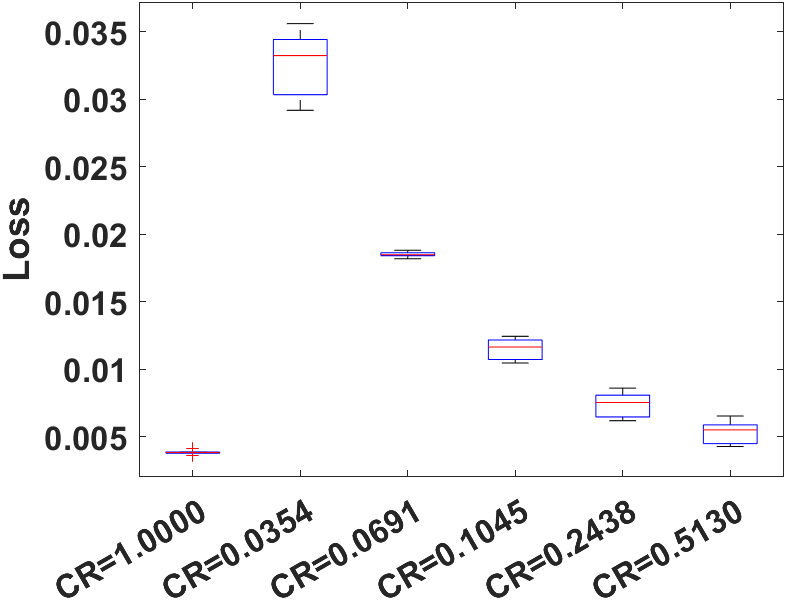} \label{subfig: MnistCNNTrainingLossBox}}
    \subfloat[Training Accuracy]{\includegraphics[width=0.33\textwidth]{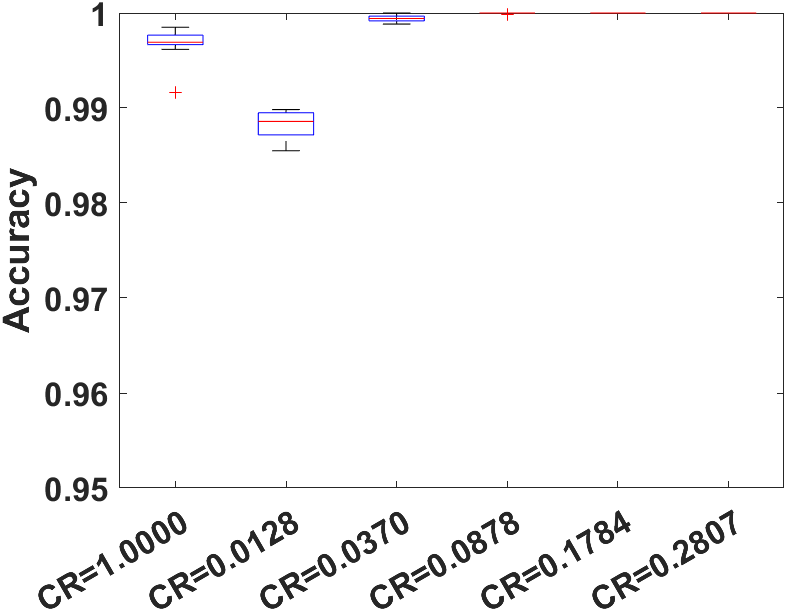} \label{subfig: MnistCNNTrainingAccBox}}
    \subfloat[Test Accuracy]{\includegraphics[width=0.33\textwidth]{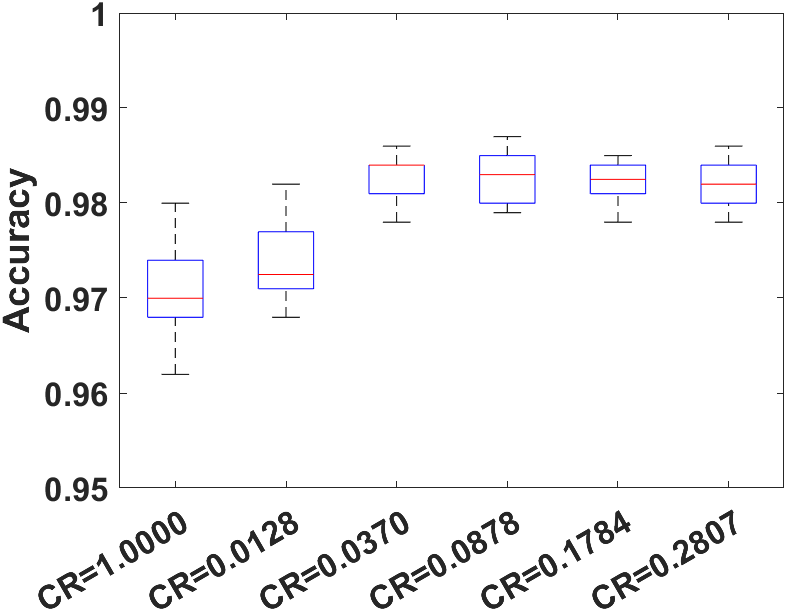} \label{subfig: MnistCNNTestAccBox}}
\caption{The boxplots among ten repetitions with different compression ratios (CNN MNIST): (a) training loss; (b) training accuracy; (c) test accuracy.} 
\label{fig: Mnist CNN box}
\end{figure}  The boxplots of the last iteration's training loss, training accuracy, and test accuracy among ten repetitions with different compression ratios are shown in Figure~\ref{fig: Mnist CNN box}. Our method with different CRs has a very small variation among ten repetitions in terms of training and test accuracy, showing that our method is stable in both training and testing. 


\begin{figure}[!htbp]
\centering
	\subfloat[Training Loss]{\includegraphics[width=0.33\textwidth]{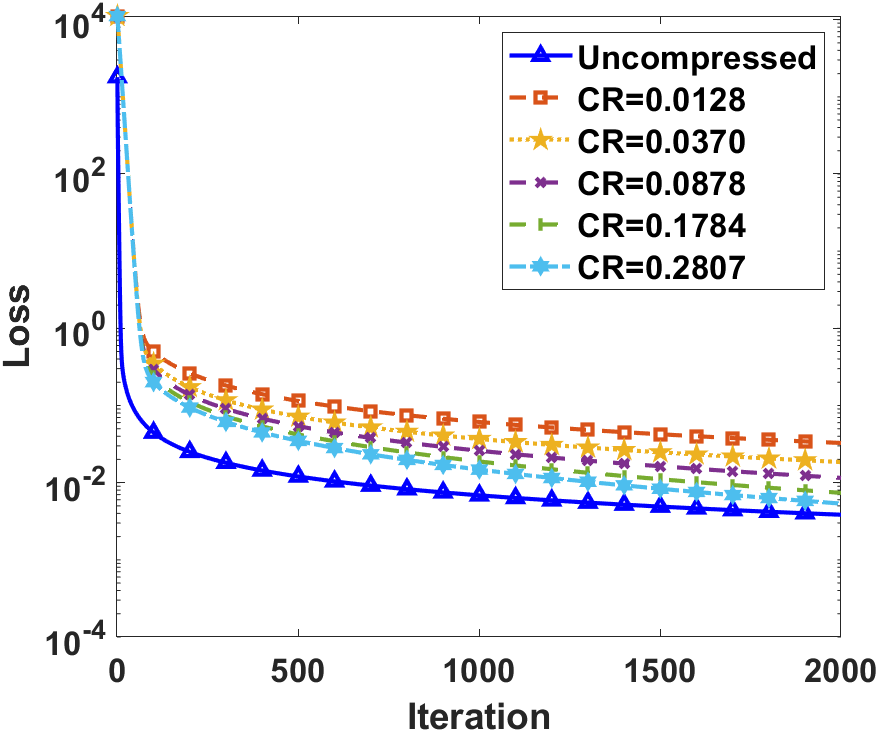} \label{subfig: MnistCNNTrainingLoss}}
     \subfloat[Training Error Rate]{\includegraphics[width=0.33\textwidth]{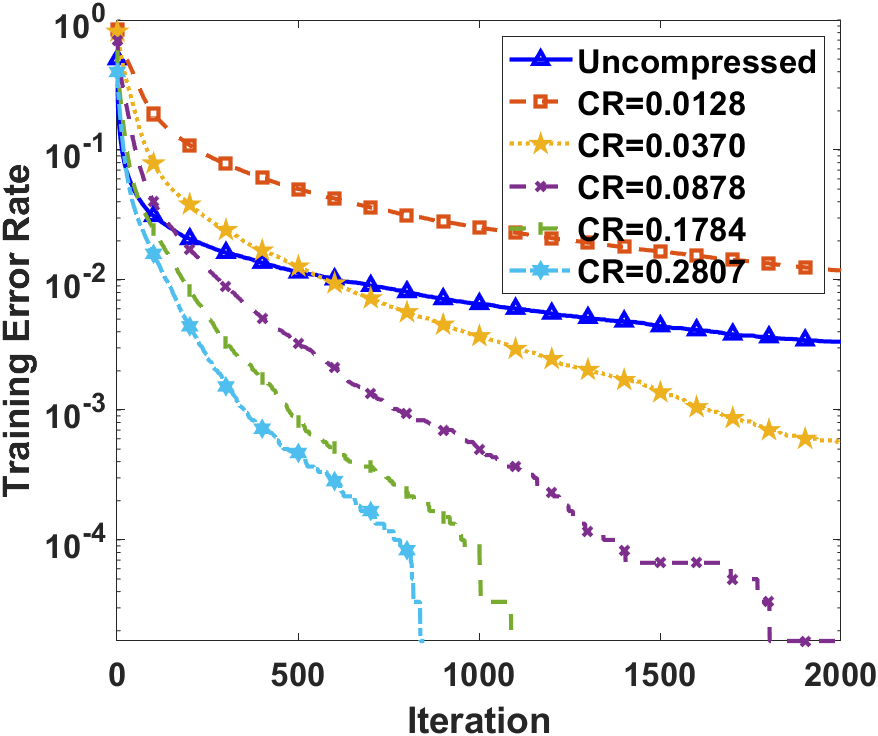} \label{subfig: MnistCNNTrainError}}
    \subfloat[Test Error rate]{\includegraphics[width=0.33\textwidth]{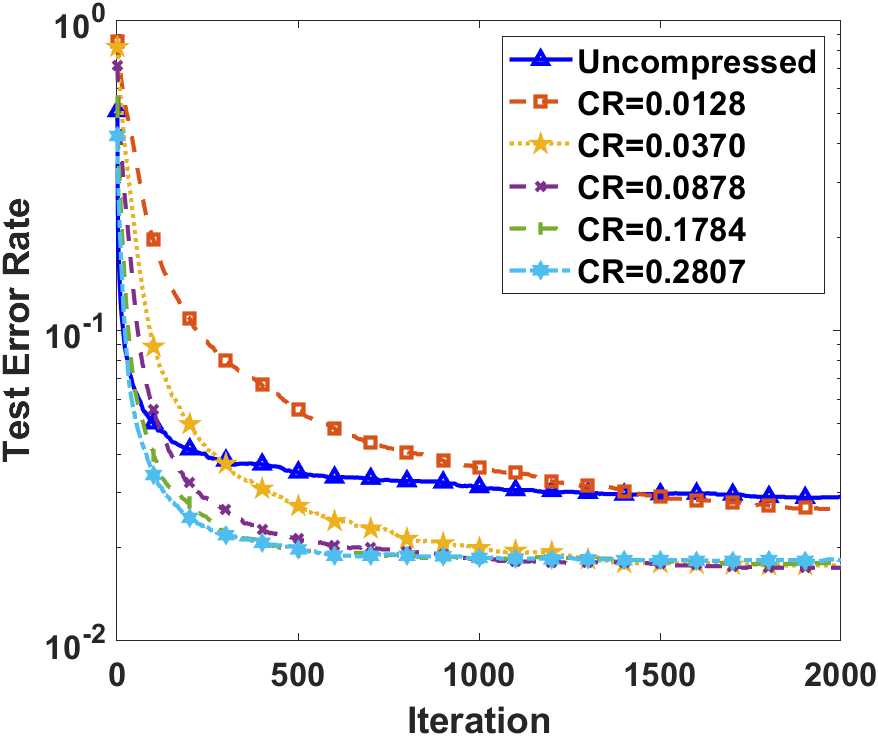} \label{subfig: MnistCNNTestError}}
\caption{The convergence analysis of NN-BCD algorithm with different compression ratios (CNN MNIST): (a) training loss; (b) training error rate; (c) test error rate. The Y-axis is in the log scale.} 
\label{fig: Mnist CNN}
\end{figure} 
The curves of the training loss, training error rate, and test error rate are plotted in Figure~\ref{fig: Mnist CNN}, where the error rate is defined as 1 $-$ accuracy.  Figure~\ref{subfig: MnistCNNTrainingLoss} shows that the training loss of our method converges with different CRs. The training loss also has a monotone decreasing trend, which verified the statements in Theorem~\ref{thm: global convergence}. Figure~\ref{subfig: MnistCNNTrainError} and Figure~\ref{subfig: MnistCNNTestError} show that the training error rate and the test error rate keep decreasing when the number of iterations increases for different CRs. When CR = 1 (the model without compression), the training accuracy keeps increasing to 0.9962 but the test accuracy stops increasing at 0.9621. We can also observe that the training error rate converges faster with a larger CR when CR < 1.

\begin{figure}[!htbp]
\centering
 \subfloat[Effect of Different Hyperparameters]{\includegraphics[width=0.5\textwidth]{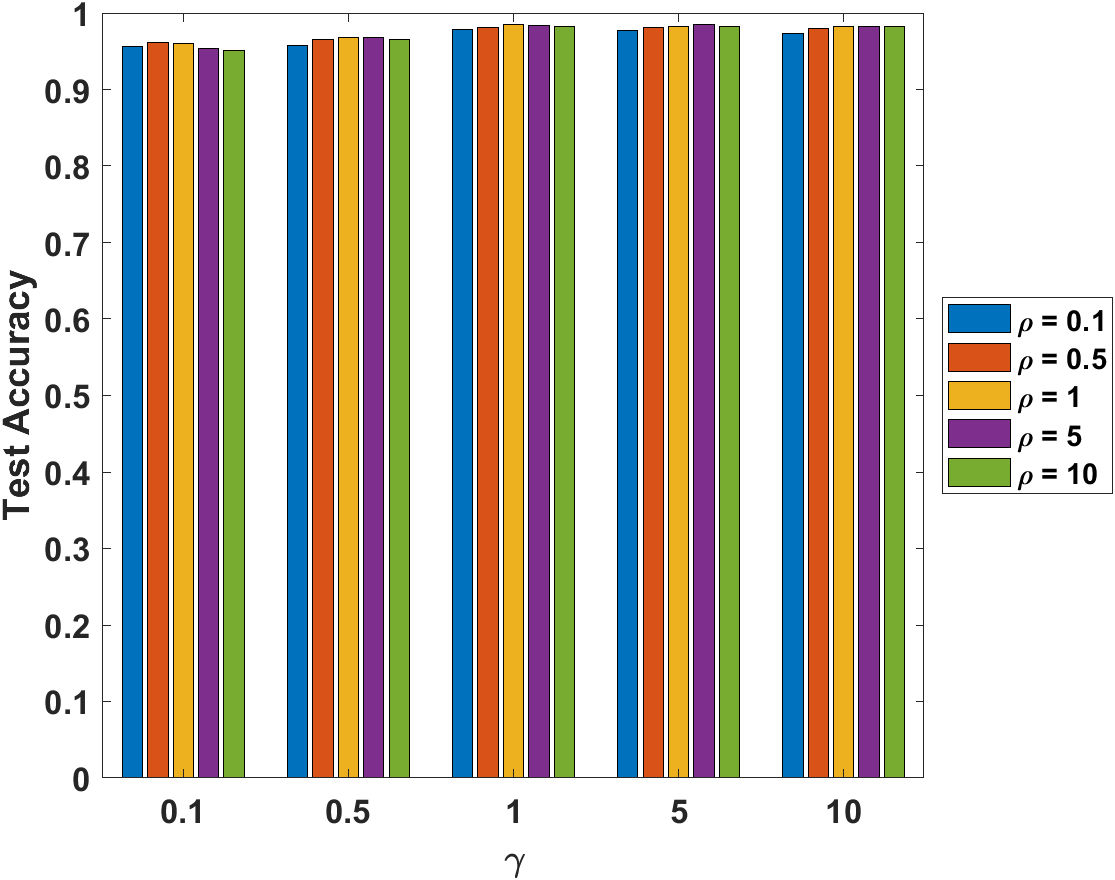} \label{subfig: MnistCNNDiffGammaRho}}
 \subfloat[Stability of Initialization]{\includegraphics[width=0.5\textwidth]{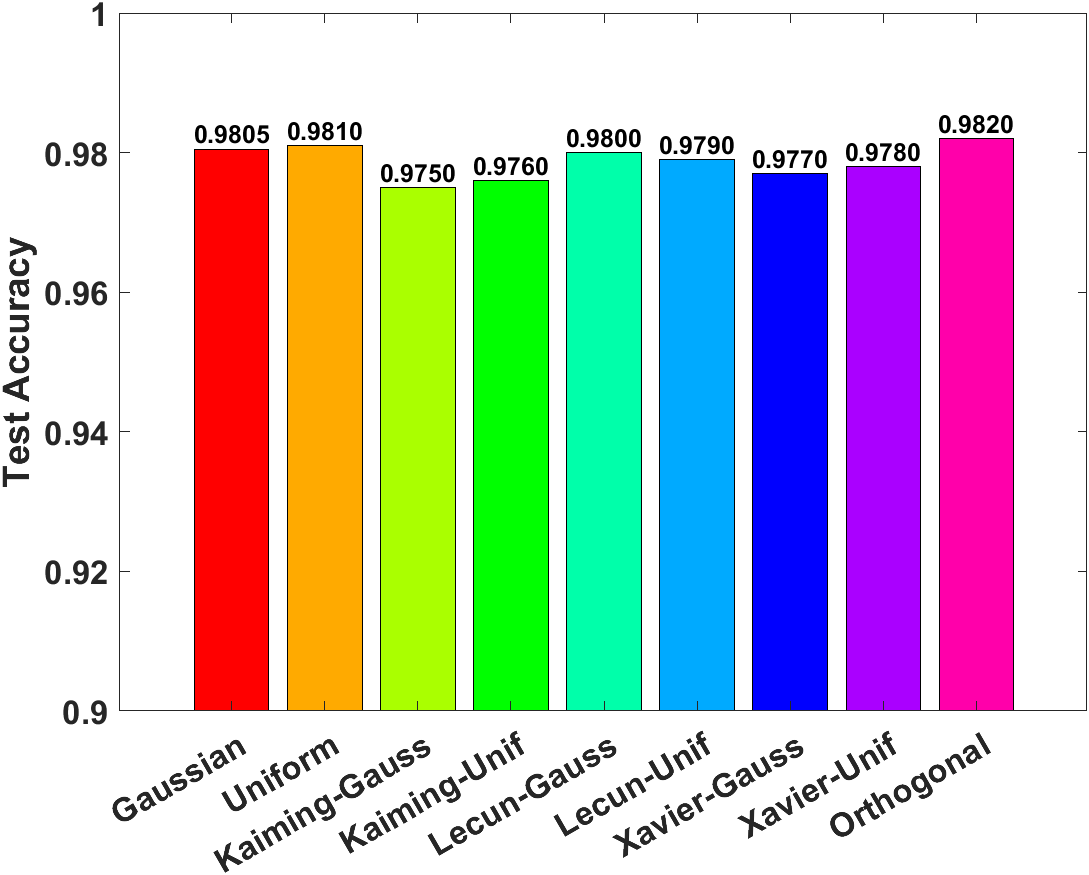} \label{subfig: MnistCNNDiffInit}}
\caption{Effect of different hyperparameters and weight initialization of NN-BCD algorithm (CNN MNIST) when CR = 0.1784: (a) Effect of hyperparameters of NN-BCD; (b) Stability of different weight initialization methods.} 
\label{fig: Mnist CNN stability}
\end{figure}
In our algorithm~\ref{alg: NN-BCD}, hyperparameters $\gamma,\rho$ are critical to the algorithm performance based on our experiments.  To explore the effect of hyperparameters $\gamma,\rho$ on the performance of our method, we test a large number set of different hyperparameter combinations with CR = 0.1784 by fixing $\tau=0.1, \alpha=1$. Figure~\ref{subfig: MnistCNNDiffGammaRho} shows that the CNN consistently performs well for different scales of $\gamma$ and $\rho$.  To further explore our proposed method's stability, we also did experiments with all different weight initialization methods, including Gaussian, Uniform, Kaiming-Gauss, Kaiming-Unif, Lecun-Gauss, Lecun-Unif, Xavier-Gauss, Xavier-Unif, Orthogonal~\citep{boulila2022weight}. Figure~\ref{subfig: MnistCNNDiffInit} shows that all different weight initialization methods perform very well, which also verifies the global convergence in Theorem~\ref{thm: global convergence}. Because the Definition~\ref{def: global convergence} of global convergence states that the algorithm should converge for all kinds of initial solutions.   

 \begin{table}[!htbp]
  \centering
 \caption{Results of Different Decomposition Methods (CNN MNIST).}
 \label{tab: Results of Different Decomposition Methods (CNN Mnist).}
  \begin{tabular}{lrrrrrr}
    \toprule
    &CR  &Training Accuracy &Test Accuracy \\ 
 \midrule
TTD+SGD~\citep{yuan2019high}	    &0.1883	&0.9961 &0.9770\\
Tucker+SGD~\citep{li2020sgd}	&0.1949	&0.9944 &0.9768\\
CP+SGD~\citep{maehara2016expected}	    &0.2191	&0.9978 &0.9800\\
Ours 	&0.1784	&1.0000 &0.9822\\
    \bottomrule
  \end{tabular}
\end{table}
To compare our proposed NN-BCD algorithm with other tensor decomposition-based methods in the literature, we applied the same CNN structure to the tensor train decomposition, Tucker decomposition, and CP decomposition coupled with the SGD algorithm. Table~\ref{tab: Results of Different Decomposition Methods (CNN Mnist).} shows that our method achieves the highest test accuracy as well as training accuracy with the lowest CR.

\subsection{Experiments on Tensorized MLP} \label{subsec: case study HAR}
In this section, we test another dataset called UCI-HAR (Human Activity Recognition) \citep{Reyes2012HAR} to test the effectiveness of our proposed method. The original dataset was made publicly available by a group of researchers from the University of Genova, Italy. The HAR dataset contains 30 volunteers between the age of 19-48 years old. Each person performed six activities (WALKING, WALKING\_UPSTAIRS, WALKING\_DOWNSTAIRS, SITTING, STANDING, LAYING), wearing a smartphone (Samsung Galaxy S II) on their waist, which can capture 3-axial linear acceleration and 3-axial angular velocity. There are a total of 7325 data samples for training and 2946 for testing. After their feature engineering process, there are 561 features for each sample. For this experiment, we are using these features to train our MLP and to predict those six human activities.

\begin{table}[!htbp]
  \centering
 \caption{Results of NN-BCD algorithm with different compression ratios (MLP-4 HAR).}
 \label{tab: HAR4Layer CR ratio}
  \begin{tabular}{lrrrrrr}
    \toprule
    & \multicolumn{2}{c}{Training Loss~\eqref{eq: final formulation}}                     & \multicolumn{2}{c}{Training Accuracy}                       & \multicolumn{2}{c}{Test Accuracy}                           \\ \midrule
 CR    & Mean   & Std   & Mean  & Std   & Mean  & Std \\ 
 \midrule
0.0502	&0.1320	&0.0035	&0.9526	&0.0088	&0.9376	&0.0096\\
0.1043	&0.0557	&0.0015	&0.9823	&0.0011	&0.9559	&0.0029\\
0.1676	&0.0280	&0.0004	&0.9897	&0.0007	&0.9616	&0.0032\\
0.4061	&0.0372	&0.0005	&0.9914	&0.0007	&0.9638	&0.0024\\
0.6310	&0.0242	&0.0002	&0.9944	&0.0006	&0.9657	&0.0029\\
1.0000	&0.0068	&0.0001	&0.9940	&0.0006	&0.9603	&0.0016\\
    \bottomrule
  \end{tabular}
\end{table}
We consider the NN structure that has four hidden layers with the ReLU activation function. The number of neurons in each layer is 561, 1024, 1024, 1024, 512, and 6 (including the input and output layers). Having four hidden layers allows for greater model complexity than NNs with fewer layers, as we have a large number of 561 features. Unlike the fixed hyperparameters in the MNIST dataset, the hyperparameters $\gamma, \rho, \tau, \alpha$  in this experiment are determined by a grid search based on the validation accuracy. \begin{figure}[!htbp]
\centering
	\subfloat[Training loss]{\includegraphics[width=0.33\textwidth]{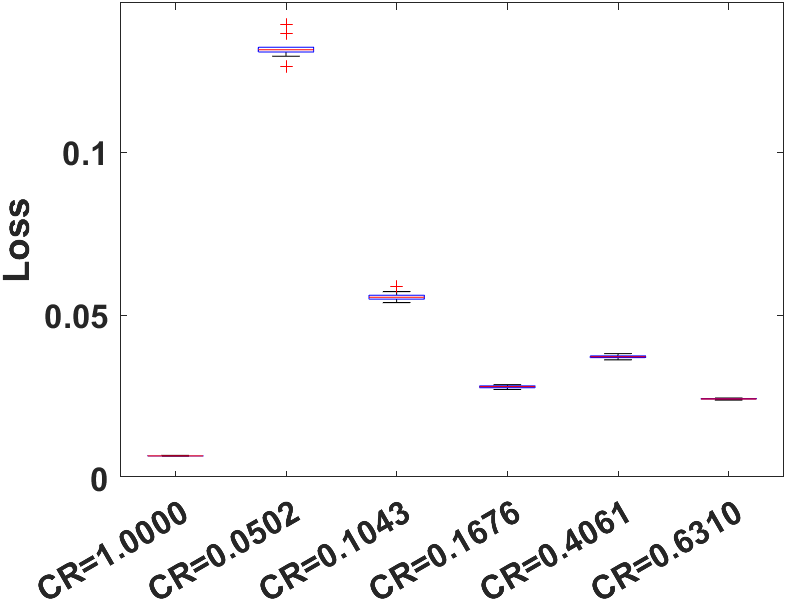} \label{subfig:HAR4Layerbox loss}}
    \subfloat[Training Accuracy]{\includegraphics[width=0.33\textwidth]{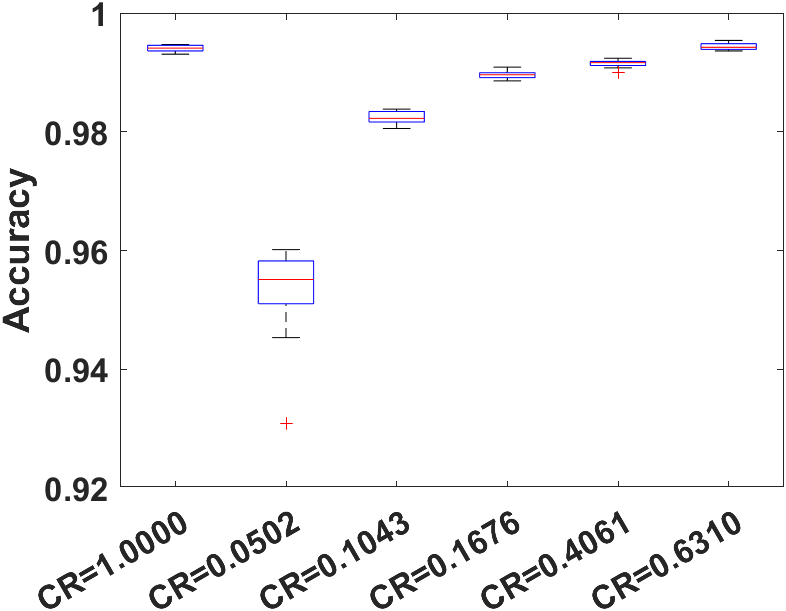} \label{subfig:HAR4Layerbox train acc}}
    \subfloat[Test Accuracy]{\includegraphics[width=0.33\textwidth]{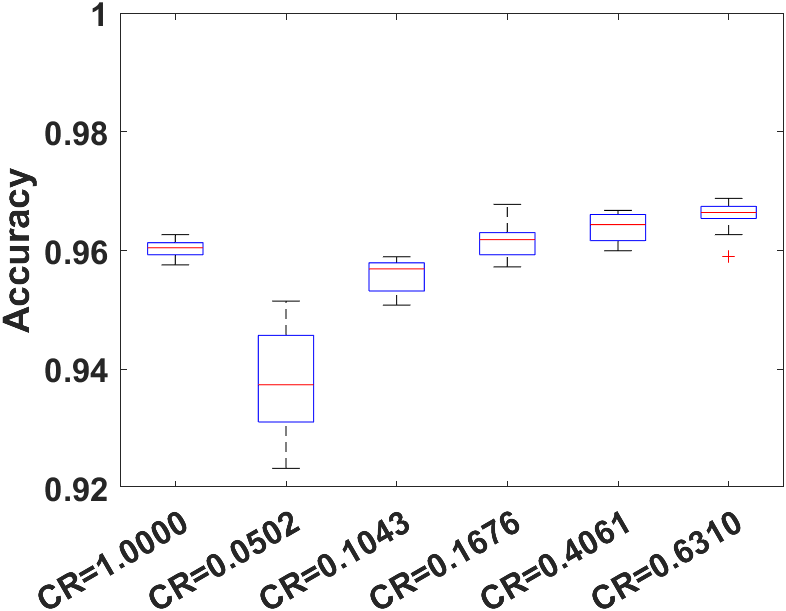} \label{subfig:HAR4Layerbox test acc}}
\caption{The boxplots among ten repetitions with different compression ratios (MLP-4 HAR): (a) training loss; (b) training accuracy; (c) test accuracy.} 
\label{fig: HAR4LayerBox}
\end{figure}  
 Table~\ref{tab: HAR4Layer CR ratio} shows the mean and standard deviation of the training loss~\eqref{eq: final formulation}, training accuracy, and test accuracy. Our model achieves test accuracy of 0.9559 and 0.9638 when CR = 0.1043 and 0.4061, respectively, which are very close to the performance of the uncompressed model (CR = 1). Our method with CR > 0.0502 achieves better classification performance compared with test accuracy = 0.9525 as reported in~\citep{sikder2019human}. Similarly to Table~\ref{tab: Mnist CNN CR ratio}, Table~\ref{tab: HAR4Layer CR ratio} also shows that with a higher compression ratio (with CR < 1), a higher training/test accuracy is observed. However, there is no trend relation between CR and training loss because we are using different hyperparameters for different CRs, which is crucial to calculating the objective function~\eqref{eq: final formulation}. Figure~\ref{fig: HAR4LayerBox} presents the boxplots for the final iteration's training loss, training accuracy, and test accuracy across ten repetitions at each compression ratio. Our method demonstrates an extremely small variation except when CR = 0.0502. 

\begin{figure}[!htbp]
\centering
	\subfloat[Training loss]{\includegraphics[width=0.33\textwidth]{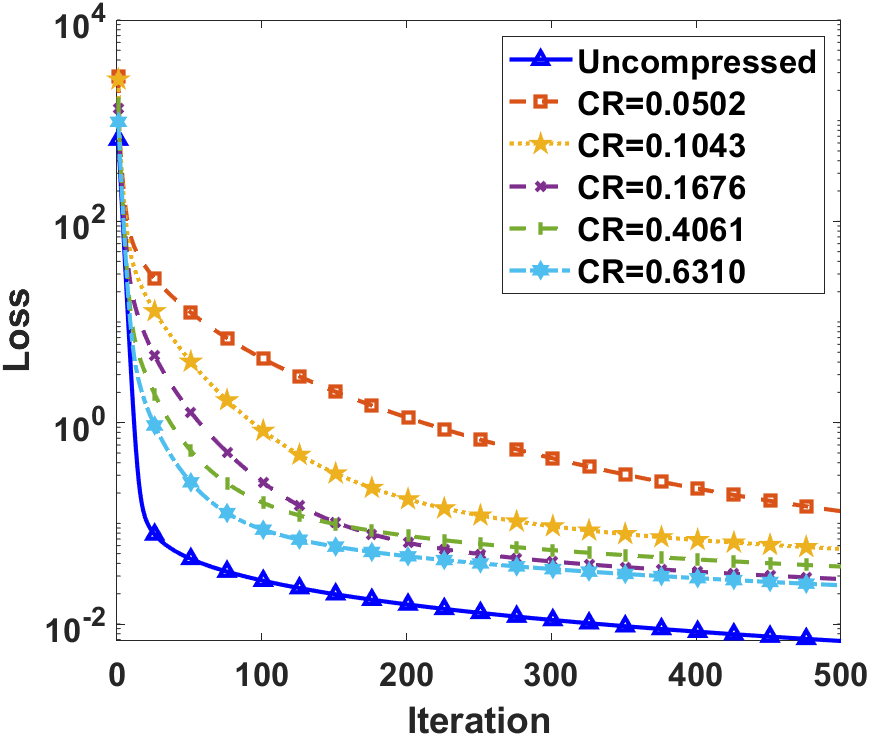} \label{subfig: HAR4Layer training loss}}
     \subfloat[Training Accuracy]{\includegraphics[width=0.33\textwidth]{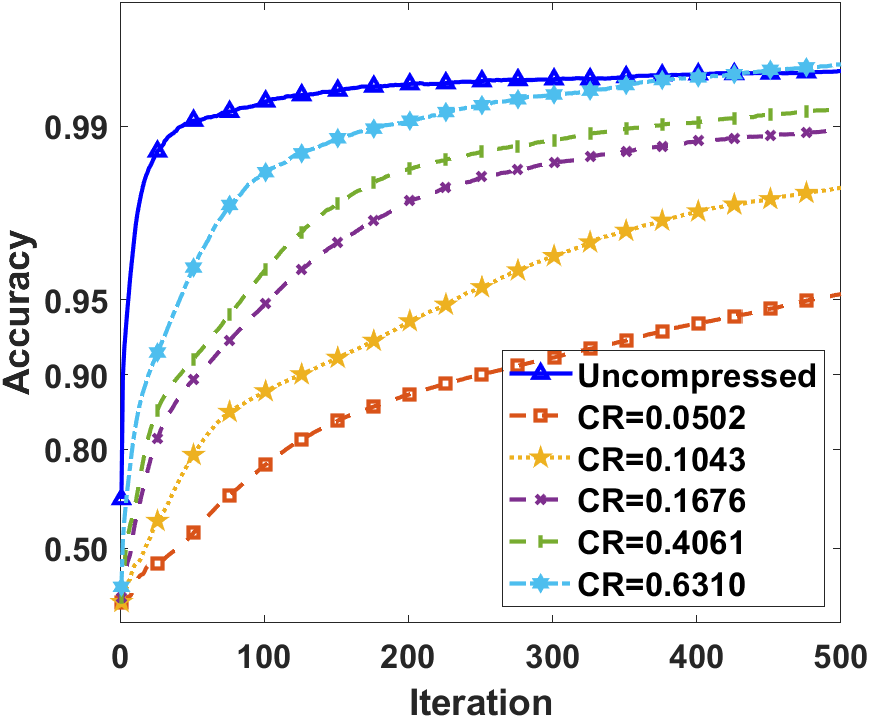} \label{subfig: HAR4Layer training acc}}
    \subfloat[Test Accuracy]{\includegraphics[width=0.33\textwidth]{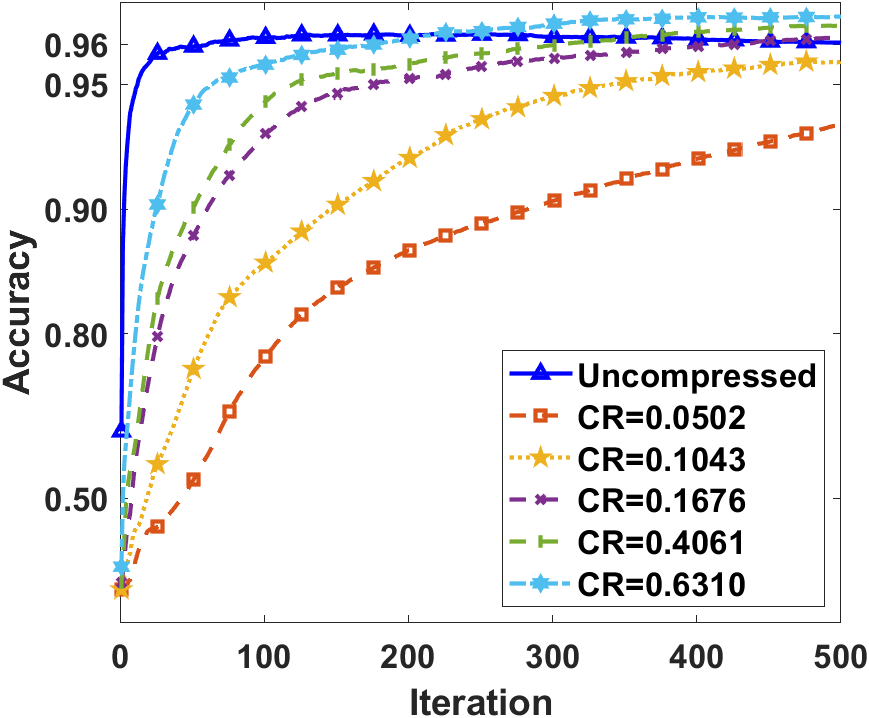} \label{subfig: HAR4Layer test acc}}
\caption{The convergence analysis of NN-BCD algorithm with different compression ratios (MLP-4 HAR): (a) training loss; (b) training accuracy; (c) test accuracy.} 
\label{fig: HAR4Layer convergence}
\end{figure}
 \begin{figure}[!htbp]
\centering
 \subfloat[Effect of Different Hyperparameters]{\includegraphics[width=0.5\textwidth]{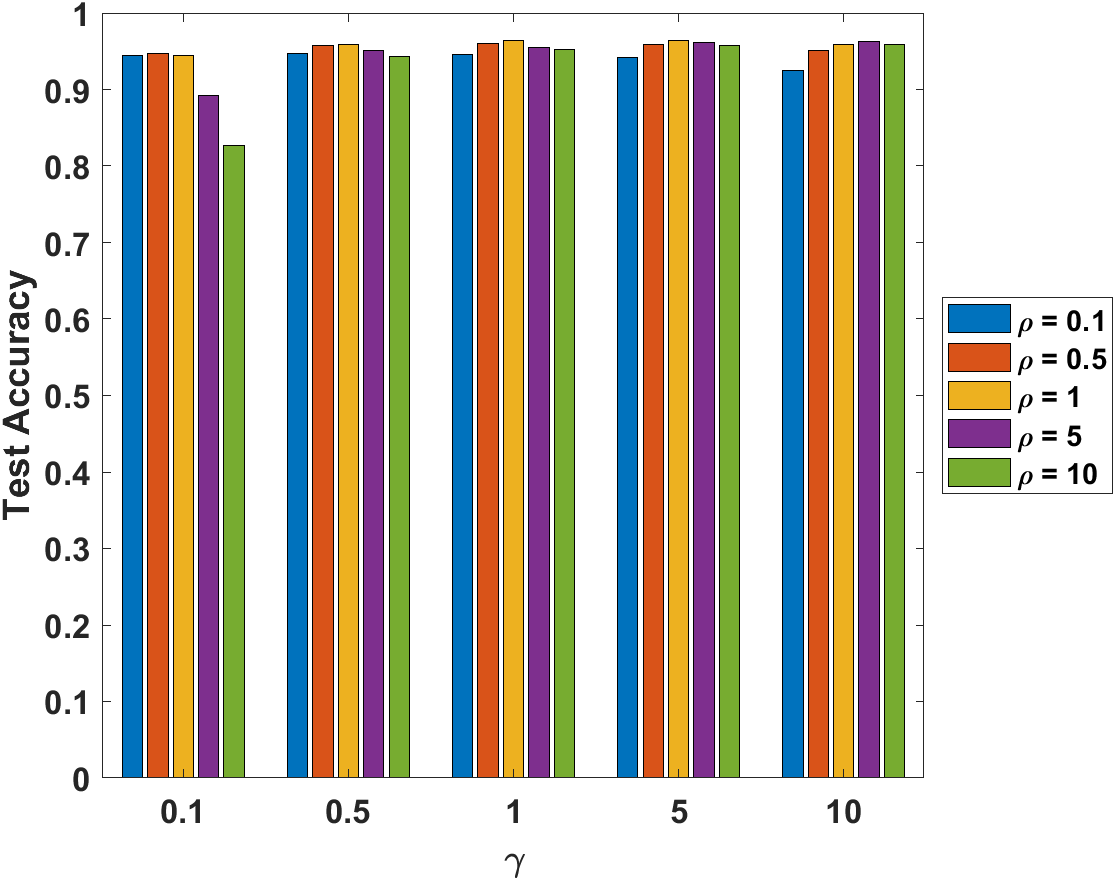} \label{subfig: HAR4LayerDiffGammaRho}}
 \subfloat[Stability of Initialization]{\includegraphics[width=0.5\textwidth]{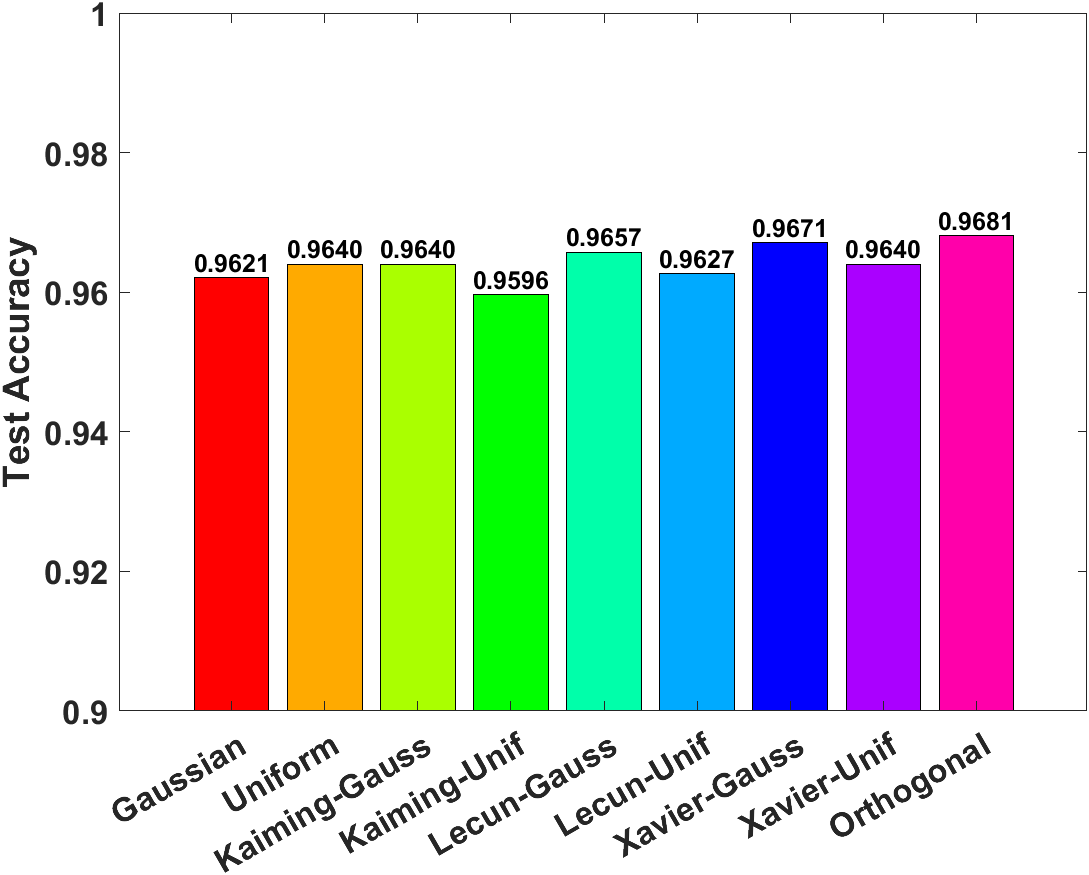} \label{subfig: HAR4LayerDiffInit}}
\caption{Effect of different hyperparameters and weight initialization of NN-BCD algorithm (MLP-4 HAR) when CR = 0.1676 : (a) Effect of hyperparameters of NN-BCD; (b) Stability of different weight initialization methods.} 
\label{fig: HAR4Layer stability}
\end{figure}
In addition, the curves of the training loss, training accuracy, and test accuracy are plotted in Figure~\ref{fig: HAR4Layer convergence}. Figure~\ref{subfig: HAR4Layer training loss} shows the monotone decreasing trend of training loss. Figure~\ref{subfig: HAR4Layer training acc} and Figure~\ref{subfig: HAR4Layer test acc} show that our proposed method outperforms the uncompressed model (CR = 1) in three out of five CRs. Besides, the uncompressed model is also suffering from the over-fitting problem as we can see that the training accuracy keeps increasing and is very close to 1 at the last iteration. However, the test accuracy starts to decrease around 130 iterations in Figure~\ref{subfig: HAR4Layer test acc}. Instead, the test accuracy of our method shows the monotone increasing trend with all CR < 1.

As for the effects of different hyperparameters and the stability of weight initialization, the same experiment setup as the CNN MNIST experiment in Section~\ref{subsec: case study CNN} is applied when CR = 0.1676. Figure~\ref{subfig: HAR4LayerDiffGammaRho} shows that our method is not sensitive to different scales of hyperparameters, which makes the hyperparameters turning process much easier and practicable.  Figure~\ref{subfig: HAR4LayerDiffInit} shows that our method performs equally well across all different weight initialization methods, which demonstrates our method's adaptability with all kinds of weight initialization methods. 

\begin{table}[!htbp]
  \centering
 \caption{Results of Different Decomposition Methods (MLP-4 HAR).}
 \label{tab: Results of Different Decomposition Methods (HAR).}
  \begin{tabular}{lrrrrrr}
    \toprule
    &CR  &Training Accuracy &Test Accuracy \\ 
 \midrule
TTD+SGD~\citep{yuan2019high}   &0.4704	&1.0000 &0.9444\\
Tucker+SGD~\citep{li2020sgd}	&0.4103	&0.9921 &0.9484\\
Our	&0.4061	&0.9914 &0.9638\\
    \bottomrule
  \end{tabular}
\end{table}
To compare our NN-BCD algorithm's performance with other tensor decomposition methods in the literature, we applied the same MLP structure to the tensor train decomposition and Tucker decomposition coupled with the SGD algorithm.  Table~\ref{tab: Results of Different Decomposition Methods (HAR).} shows that the TT+SGD method has 0.9444 test accuracy while the Tucker+SGD method achieves 0.9484 test accuracy. Our method method achieves the highest test accuracy of 0.9638 among all methods.

\subsection{Experiments on Weight Pruning} \label{subsec: case study flare}
In this experiment, the flare classification dataset~\citep{liu2019predicting} is used.  In the solar activities, solar flares are classified by their strength. There are four different classes of solar flares: B-class, C-class, M-class, and X-class, which are ranked from the smallest to the largest. Flares larger than the M-class can much more likely cause potential damage to the astronauts as well as some plants or animals. Thus, in our experiment, we are trying to do a binary classification of predicting the flare that is larger than the M-class (including M-class) or less than the M-class. The input data $\bm{X}$ is called Space-weather HMI Active Region Patches (SHARPs)~\citep{bobra2014helioseismic}. SHARP data contains many physical parameters for flare predictions. In total, there are 40 features, which are the physical parameters of flares.  Flares that occurred between 2010 May and 2018 May are used for our experiment. The training set has 111,050 samples, including 4,057 positive samples (flares larger than or equal to the M-class) and 106,993 negative samples (flares smaller than the M-class). The test set has 44,689 samples, including 1,278 positive samples and 43,411 negative samples. 

We applied the LeNet-300-100 structure~\citep{alford2018pruned}, which is a popular model in the pruning literature. Two hidden layers follow the LeNet-300-100 structure, which has 100 and 300 neurons, respectively. The input layer $d_0=40$ represents 40 different features.  We evaluate our model using balanced accuracy (BACC) as the indicator because the flare dataset is super imbalanced
$$ \text{BACC} = \frac{1}{2}\left(\frac{\text{TP}}{\text{TP} + \text{FN}} + \frac{\text{TN}}{\text{TN} + \text{FP}}\right).$$ There are many more B-class and C-class flares than M-class and X-class flares. Thus, BACC is a good choice to measure our model performance.  According to~\citep{vysogorets2023connectivity}, sparsity can be calculated as
$$\text{sparsity}=\frac{\text{the number of pruned/removed weights}}{\text{the number of weights before pruning}}.$$ A higher value of sparsity indicates that the size of the compressed model is smaller. 

\begin{table}[!htbp]
  \centering
 \caption{Results of NN-BCD algorithm with different sparsity levels (LetNet-300-100 Flare).}
 \label{tab: FlareLeNet Sparsity}
  \begin{tabular}{lrrrrrr}
    \toprule
    & \multicolumn{2}{c}{Training Loss~\eqref{eq: final formulation}}                     & \multicolumn{2}{c}{Training BACC}     & \multicolumn{2}{c}{Test BACC}                           \\ \midrule
Sparsity    & Mean   & Std   & Mean  & Std   & Mean  & Std  \\ 
 \midrule
0.4976	&0.2168	&0.0033	&0.8620	&0.0041	&0.9060	&0.0017\\
0.7464	&0.5683	&0.0089	&0.8561	&0.0032	&0.9071	&0.0017\\
0.8970	&1.8283	&2.3250	&0.8520	&0.0027	&0.8990	&0.0061\\
0.9502	&1.1613	&0.1030	&0.8529	&0.0034	&0.9076	&0.0014\\
0.9932	&26.2695&0.6828	&0.8308	&0.0088	&0.8920	&0.0056\\
0.9977	&0.0326	&0.0050	&0.8157	&0.0121	&0.8890	&0.0106\\
    \bottomrule
  \end{tabular}
\end{table} \begin{figure}[!htbp]
\centering
	\subfloat[Training loss]{\includegraphics[width=0.32\textwidth]{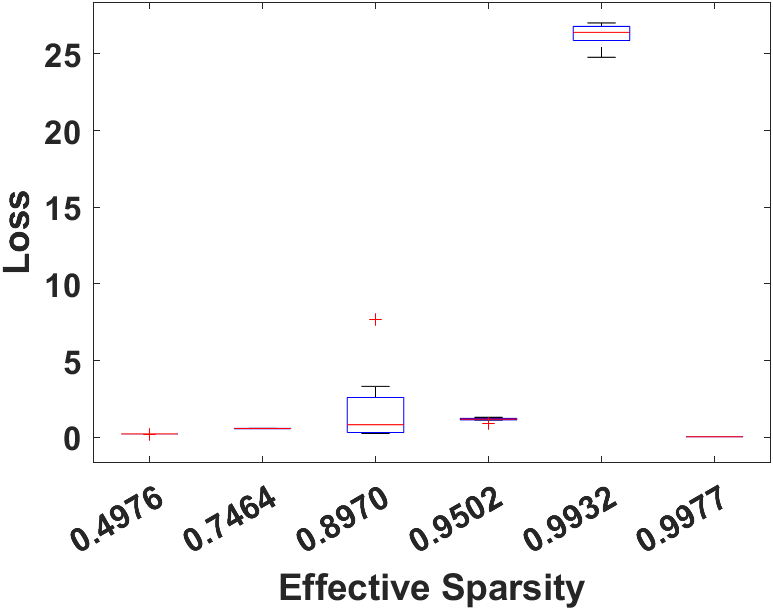} \label{subfig:Flare_sparsity_box_loss}}
    \subfloat[Training Accuracy]{\includegraphics[width=0.33\textwidth]{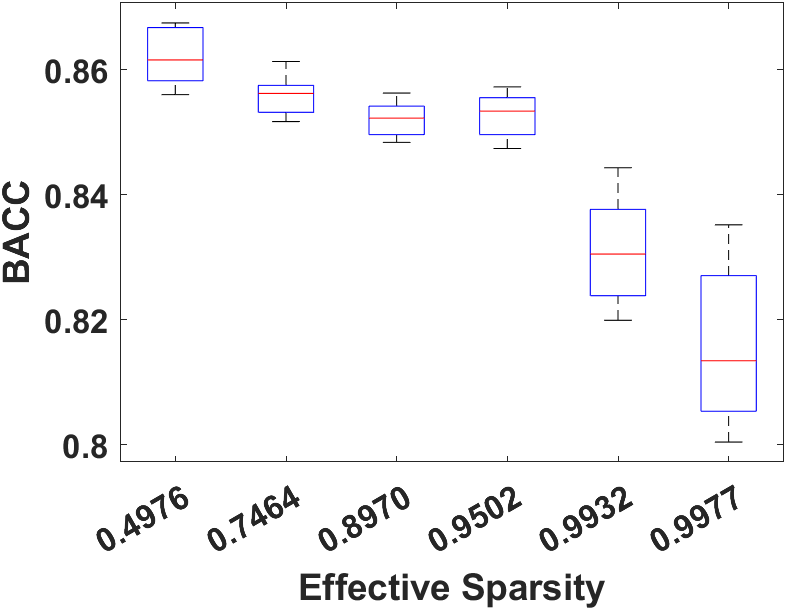} \label{subfig:Flare_sparsity_box_train_bacc}}
    \subfloat[Test Accuracy]{\includegraphics[width=0.33\textwidth]{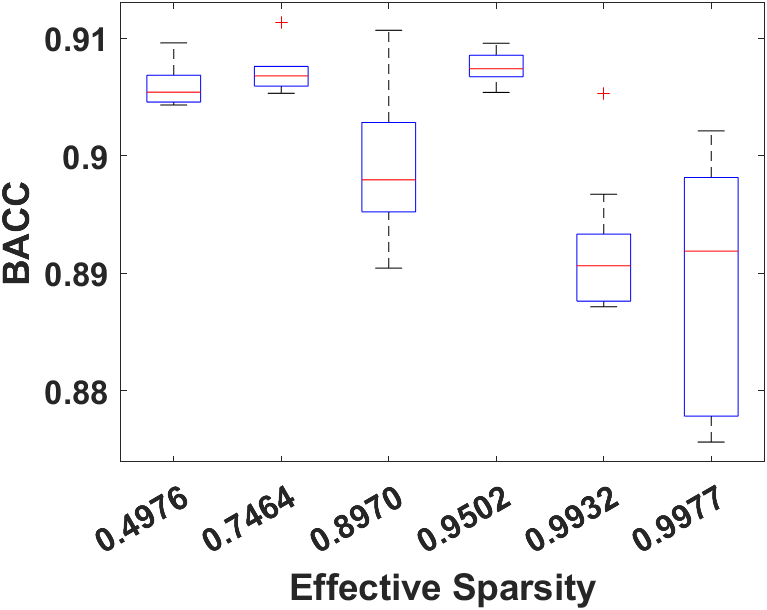} \label{subfig:Flare_sparsity_box_test_bacc}}
\caption{The boxplots among ten repetitions with different sparsity levels (LetNet-300-100 Flare): (a) training loss; (b) training BACC; (c) test BACC.} 
\label{fig: FlareLeNet Box}
\end{figure}
Table~\ref{tab: FlareLeNet Sparsity} shows the result of our NN-BCD algorithm tested on the flare class dataset. With all sparsities, we can achieve BACC around 0.9, which is the best performance reported in~\citep{liu2019predicting}. For example, when we set the sparsity to 0.9977 (\ie, the LeNet-300-100 is compressed $\times$434.7 times), it can have an average BACC of 0.8890. This result shows that our method is not only very effective in reducing the number of parameters in LeNet-300-100 but also maintains a very high BACC. Figure~\ref{fig: FlareLeNet Box} shows the boxplots among ten repetitions. Our model shows an overall steady performance among most of the sparsity levels. All the standard deviations of the training BACC and test BACC are below 0.02. Note that the mean total loss for sparsity = 0.9932 is much higher than the others because we choose different sets of hyperparameters for different sparsity levels. The standard deviation for sparsity = 0.9977 is a little higher than the rest of the sparsity levels. This is due to that the performance of pruned LeNet-300-100 is not very stable with too many weights removed during the training. However, the test BACC falls into the range between 0.875 and 0.90, which is still acceptable. 
\begin{figure}[!htbp]
\centering
	\subfloat[Training loss]{\includegraphics[width=0.32\textwidth]{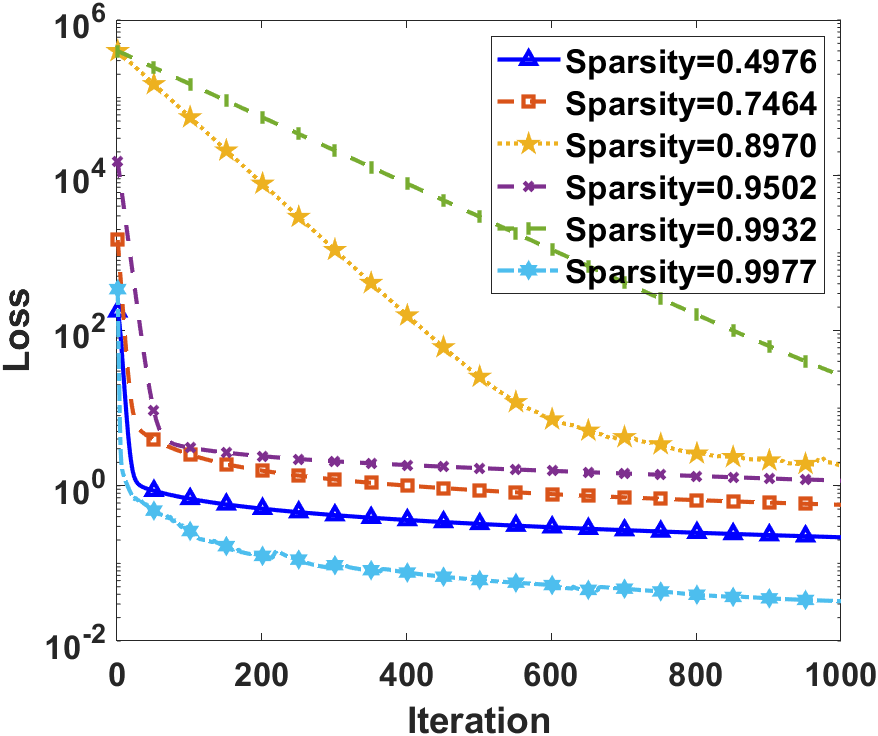} \label{subfig: Flare_Sparsity_trainingloss}}
     \subfloat[Training BACC]{\includegraphics[width=0.32\textwidth]{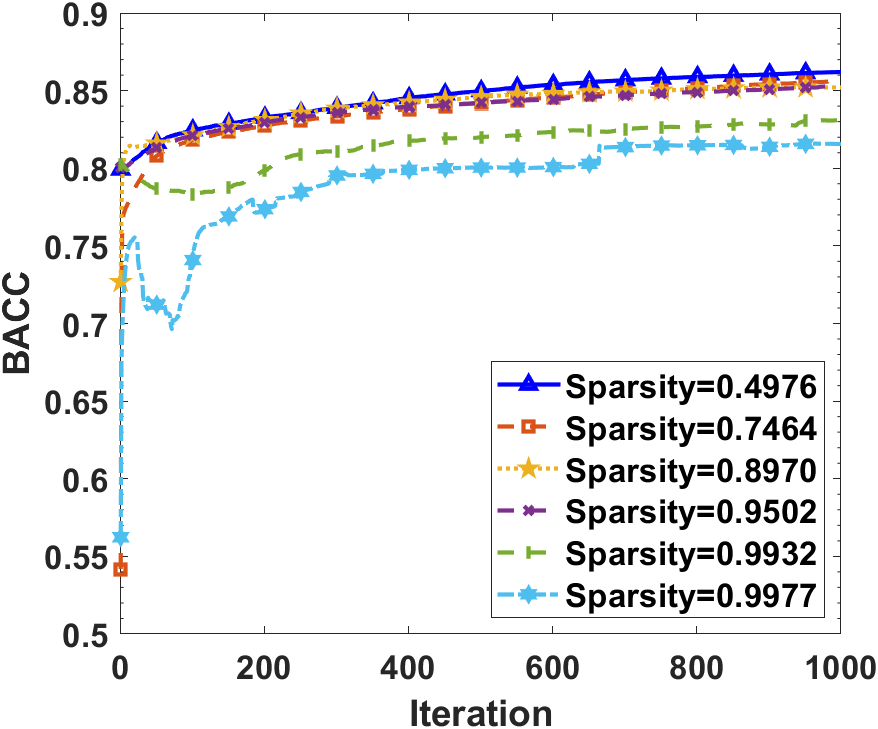} \label{subfig: Flare_Sparsity_trainingBACC}}
    \subfloat[Test BACC]{\includegraphics[width=0.32\textwidth]{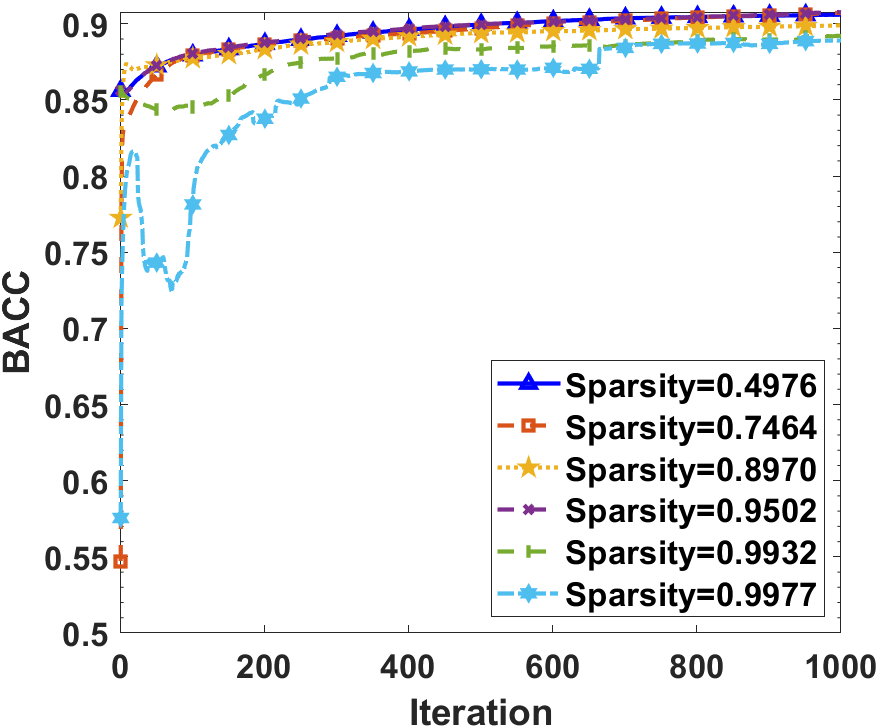} \label{subfig: Flare_Sparsity_testBACC}}
\caption{The convergence analysis of NN-BCD algorithm with different sparsity levels (LetNet-300-100 Flare): (a) training loss; (b) training BACC; (c) test BACC.} 
\label{fig: Flare Spare LeNet convergence}
\end{figure} \begin{figure}[!htbp]
\centering
 \subfloat[Effect of Different Hyperparameters]{\includegraphics[width=0.5\textwidth]{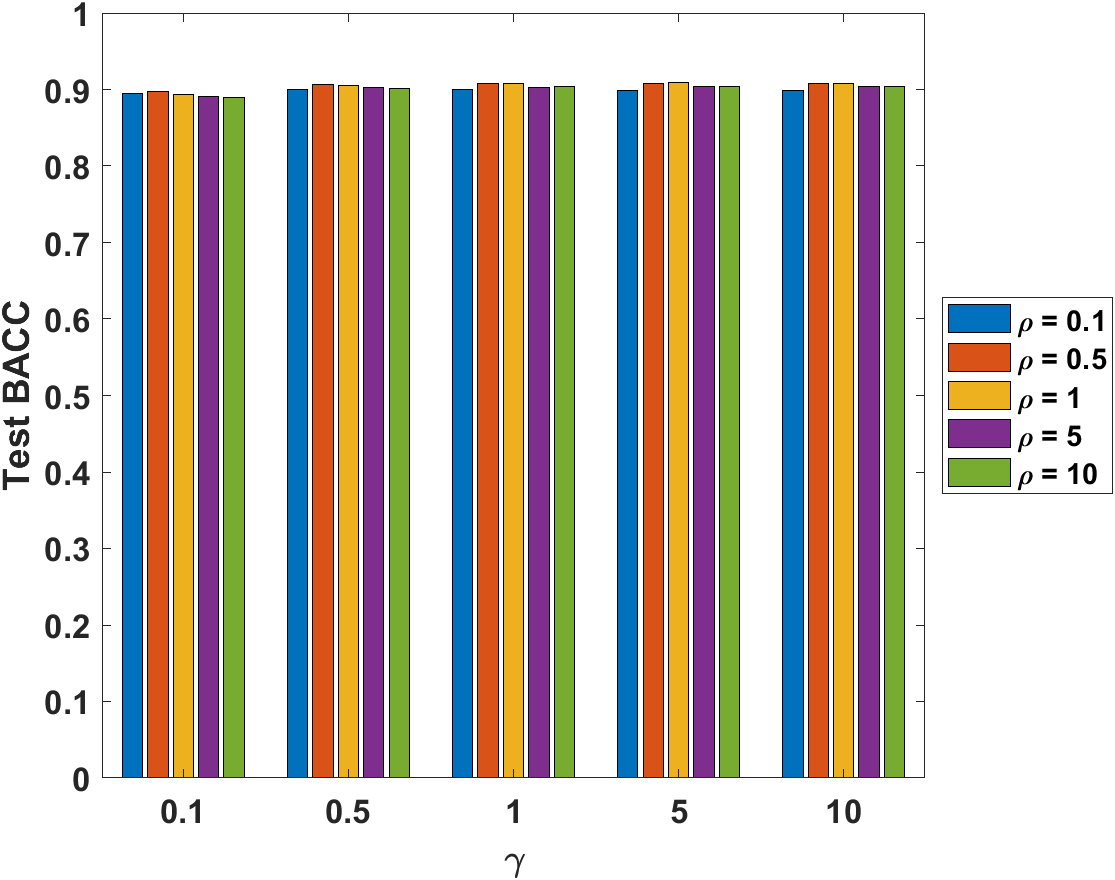} \label{subfig: FlareSparseDiffGammaRho}}
 \subfloat[Stability of Initialization]{\includegraphics[width=0.49\textwidth]{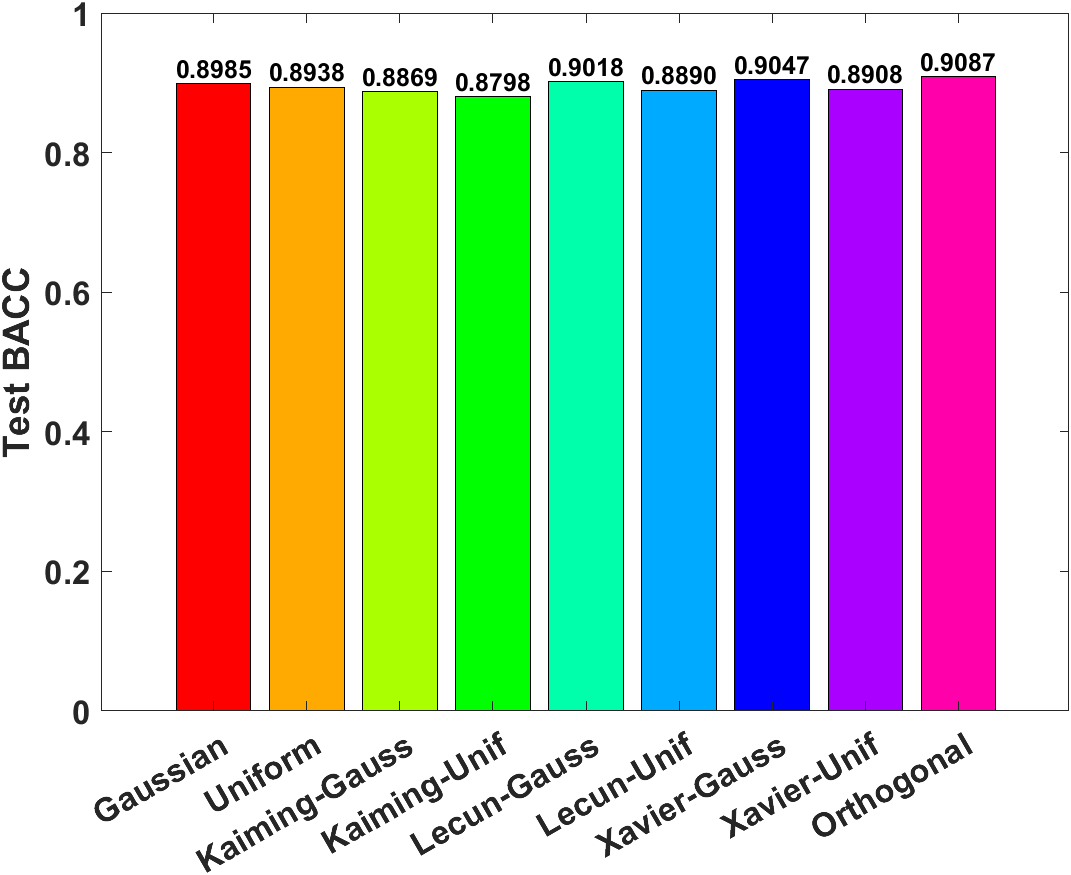} \label{subfig: FlareSparseDiffInit}}
\caption{Effect of different hyperparameters and weight initialization of NN-BCD algorithm (LetNet-300-100 Flare) when sparsity = 0.8970: (a) Effect of hyperparameters of NN-BCD; (b) Stability of different weight initialization methods.} 
\label{fig: Flare stability}
\end{figure}

The convergence of our NN-BCD is shown in Figure~\ref{fig: Flare Spare LeNet convergence}. Figure~\ref{subfig: Flare_Sparsity_trainingloss} shows the training loss with different sparsity levels, where we can observe a clear decreasing trend. The training and test BACC for different sparsity levels are shown in Figure~\ref{subfig: Flare_Sparsity_trainingBACC} and Figure~\ref{subfig: Flare_Sparsity_testBACC}, respectively. Most of the experiments with different sparsity levels show an increasing trend for both training and test BACC as the number of iterations increased. However, when the sparsity level is high, \ie, sparsity = 0.9932 and 0.9977, there are some fluctuations for their training and test BACC before 300 iterations. But after that, the BACC starts to increase steadily. This is due to the constraint $\|\bm{W}_i^{MC}\|_0=\beta_i$ is not smooth, leading to unstable learning in the initial learning.

Figure~\ref{subfig: FlareSparseDiffGammaRho} shows that the test BACCs are all around 0.9 with different choices of $\gamma$ and $\rho$, which demonstrates that our model can maintain good classification performance for different scales of hyperparameters. Figure~\ref{subfig: FlareSparseDiffInit} demonstrates that our model is suitable for many different kinds of weight initialization methods without impacting the classification performance of LeNet-300-100. 

\section{Conclusion} \label{sec: conclusion}
In this paper, we establish a holistic framework for two important model compression techniques: low-rank approximation and weight pruning, which are the two most widely used techniques in the literature. Specifically, the two techniques can be formulated into a single unified nonconvex optimization problem in our framework. Accordingly, the NN-BCD algorithm is proposed to solve the nonconvex optimization problem, where the sublinear convergence rate can be obtained for the NN-BCD algorithm under mild conditions. Extensive experiments on tensor train decomposition-based NN and weight pruning are conducted with different datasets to demonstrate the effectiveness of NN-BCD. The empirical convergence experiment shows that the proposed NN-BCD can converge and run efficiently in practice. Furthermore, NN-BCD can maintain a small compression ratio and high accuracy simultaneously.   

\acks{This work was supported by the National Science Foundation grants AGS-1927578 and AGS-2228996, and CDC/The National Institute for Occupational Safety and Health (NIOSH) under grant number 75D30120P08812.}

\newpage
\appendix
\noindent The appendix is organized as follows.
\begin{itemize}
    \item Appendix~\ref{appendix: convolutional layer}: Kernel $\bm{K}$ Update for Convolutional Layers.
    \item Appendix~\ref{appendix: subproblems}: Solutions of Some Subproblems.
    \item Appendix~\ref{appendix: proof of sufficient decrease}: Proof of Lemma~\ref{lemma: sufficient decrease}.
    \item Appendix~\ref{appendix: proof of subgradient low bound}: Proof of Lemma~\ref{lemma: subgradient low bound}.
    \item Appendix~\ref{appendix: theorem of global convergence}: Proof of Theorem~\ref{thm: global convergence}.
    \item Appendix~\ref{appendix: additional experiments}: Additional Experiments.
\end{itemize}

\begin{figure}[!htbp]
\centering
\includegraphics[width=1\textwidth]{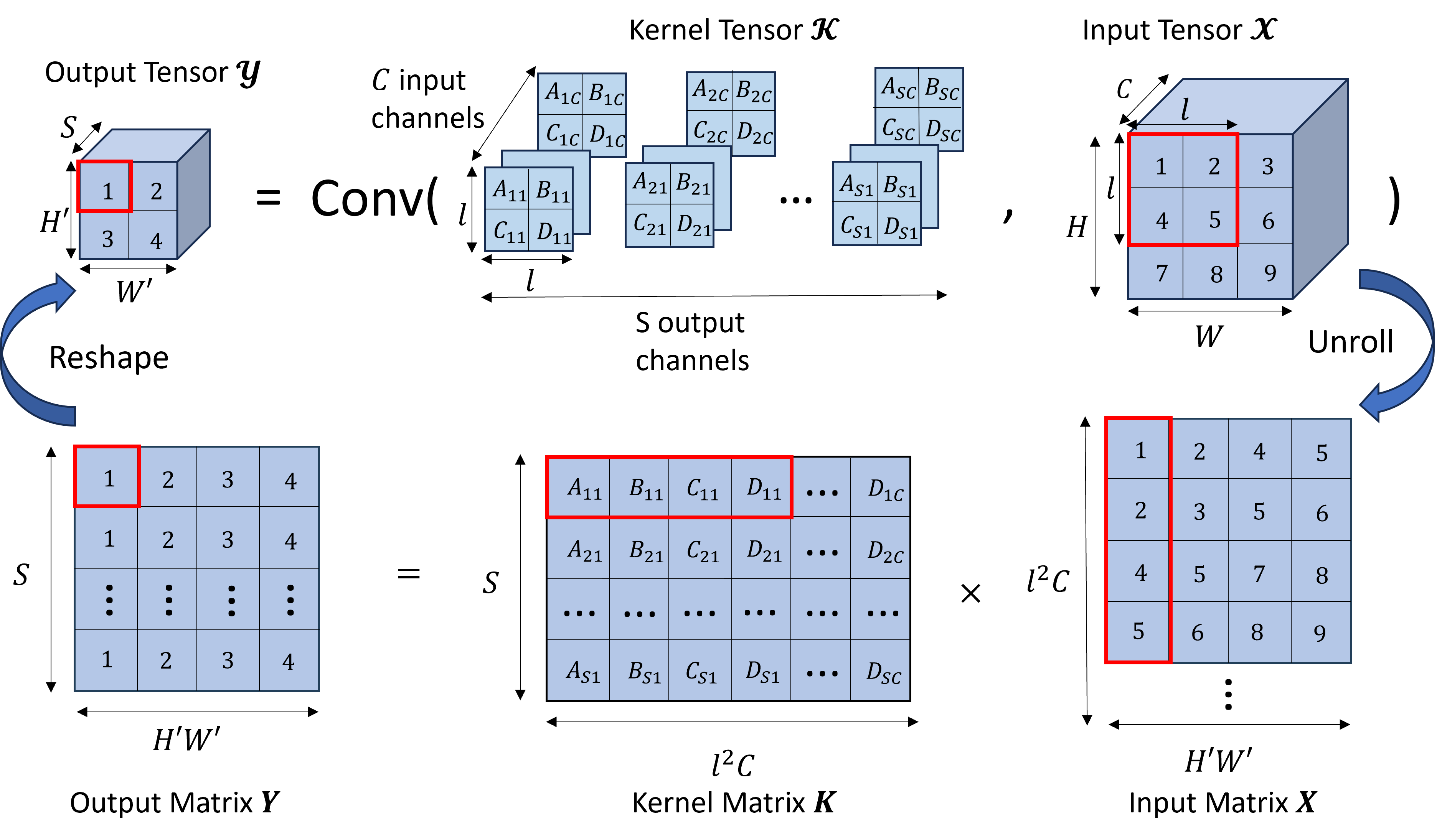} 
\caption{Transformation of convolution to matrix multiplication.} 
    \label{fig: tensor to matrix mutiplication} 
\end{figure}
\section{Kernel \textit{K} Update for Convolutional Layers} \label{appendix: convolutional layer}
CNNs have an advantage over MLPs as they incorporate convolutional layers, enabling them to efficiently capture spatial information and exploit local patterns. These advantages make them highly effective for tasks such as image recognition and computer vision. We want to deploy our NN-BCD algorithm into a CNN structure. The main building block of such a structure is a convolutional layer, that transforms the 3-dimensional input tensor $\bm{\mathcal{X}} \in \mathbb{R}^{W \times H \times C}$ into the output tensor $\bm{\mathcal{Y}} \in \mathbb{R}^{(W-l+1) \times (H-l+1) \times S}$ by convolving $\bm{\mathcal{X}}$ with the kernel tensor $\bm{\mathcal{K}} \in \mathbb{R}^{\ell \times \ell \times C \times S}$:
\begin{equation}\label{eq: CNN}
    \bm{\mathcal{Y}}(x, y, s)=\sum_{i=1}^{\ell} \sum_{j=1}^{\ell} \sum_{c=1}^{C} \bm{\mathcal{K}}(i, j, c, s) \bm{\mathcal{X}}(x+i-1, y+j-1, c).
\end{equation}

To improve the computational performance, many deep learning frameworks reduce the convolution~\eqref{eq: CNN} to a matrix-by-matrix multiplication~\citep{garipov2016ultimate} as shown in Figure~\ref{fig: tensor to matrix mutiplication}.  Note that the compression approach presented in this paper works with other types of convolutions, such as convolutions with padding, stride larger than 1, or rectangular filters. But for clarity, we illustrate the proposed idea on the basic convolution~\eqref{eq: CNN}. 

The notation needed to reformulate convolution~\eqref{eq: CNN} as a matrix-by-matrix multiplication $\bm{Y}=\bm{X} \bm{K}$. For convenience, we denote $H^{\prime}=H-\ell+1$ and $W^{\prime}=W-\ell+1$. The output tensor $\bm{\mathcal{Y}} \in \mathbb{R}^{W^{\prime} \times H^{\prime} \times S}$ is reshaped into a matrix $\bm{Y}$ of size $W^{\prime} H^{\prime} \times S$. The $k$-th row of matrix $\bm{X}\in \mathbb{R}^{W^{\prime} H^{\prime} \times \ell^{2} C}$ corresponds to the $\ell \times \ell \times C$ patch of the input tensor that is used to compute the $k$-th row of the matrix $\bm{Y}$. Finally, we reshape the kernel tensor $\bm{\mathcal{K}}$ into a matrix $\bm{K}$ of size $\ell^{2} C \times S$. Using the matrices defined above, we can rewrite the convolution definition~\eqref{eq: CNN} as $\bm{Y}=\bm{X} \bm{K}$. Now, we are ready to introduce how to update the convolution kernel tensor $\bm{\mathcal{K}}$ through matrix-by-matrix multiplication.

\textbf{Optimization over $\bm{K}_i$:} At iteration $k$, $\bm{K}_i$ can be updated through the following optimization problem 
\begin{equation}\label{eq: K update CNN}
\bm{K}_i^k=\underset{\bm{K}_i}{\operatorname{argmin}} \left\{\frac{\rho}{2}\|\bm{U}_i^k- \bm{V}_{i-1}^{k-1}\bm{K}_i\|_F^2+\frac{\tau}{2}\|\bm{K}_i - \bm{K}_i^{MC}\|_{F}^{2} \right\}, 
\end{equation}
which has a closed-form solution since the above problem~\eqref{eq: K update CNN} is a least square problem. Similarly, the compressed convolution kernel $\bm{K}^{MC}_i$ can also be obtained by following the update used in~\eqref{eq: W_i^MC update}.


\section{Solutions of Some Subproblems}\label{appendix: subproblems}
In this section, we provide the solutions to subproblem~\eqref{eq: V_N update}, the ReLU-involved subproblem~\eqref{eq: U_i update}, the compressed weight update subproblem~\eqref{eq: W_i^MC update}.
\subsection{Solutions to Subproblem~\texorpdfstring{\eqref{eq: V_N update}}{ } }\label{appendix: subproblems V_N}
\textbf{Prox-linear algorithm to subproblem~\eqref{eq: V_N update}}: in the $\bm{V}_{N}$-update of Algorithm~\ref{alg: NN-BCD}, the empirical risk is involved in the optimization problems. It is generally hard to obtain its closed-form solution except for some special cases such as the case where the loss is the square loss. For other smooth losses such as the logistic, cross-entropy, and exponential losses, we suggest using the following prox-linear update strategies, that is, for some parameter $\alpha>0$, the $\bm{V}_{N}$-update in  Algorithm~\ref{alg: NN-BCD} is
\begin{equation}\label{eq: V_N proximal-linear}
\bm{V}_{N}^{k}=\underset{\bm{V}_{N}}{\operatorname{argmin}}\{s_{N}(\bm{V}_{N})+\langle\nabla \mathcal{R}_{n}(\bm{V}_{N}^{k-1} ; \bm{Y}), \bm{V}_{N}-\bm{V}_{N}^{k-1}\rangle +\frac{\gamma}{2}\|\bm{V}_{N}-\bm{U}_{N}^{k-1}\|_{F}^{2}+\frac{\alpha}{2}\|\bm{V}_{N}-\bm{V}_{N}^{k-1}\|_{F}^{2}\},
\end{equation}
This $\bm{V}_{N}$-update can be implemented with explicit expressions. Therefore, the specific uses of these NN-BCD methods are very flexible, mainly depending on users' understanding of their own problems.

\textbf{The closed-form of the proximal operator of hinge loss:} consider the following optimization problem
\begin{equation}\label{eq: hinge loss}
    u^{*}=\underset{u}{\operatorname{argmin}} g(u):=\max \{0,1-a \cdot u\}+\frac{\gamma}{2}(u-b)^{2},
\end{equation}
where $\gamma>0$.
\begin{lemma}
    The optimal solution to Problem~\eqref{eq: hinge loss} is shown as follows
    \begin{equation*}
\textnormal{hinge}_{\gamma}(a, b)= \begin{cases}b, & \text{ if } a=0, \\ b+\gamma^{-1} a, & \text{ if } a \neq 0 \text{ and } a b \leq 1-\gamma^{-1} a^{2}, \\ a^{-1}, & \text{ if } a \neq 0 \text{ and } 1-\gamma^{-1} a^{2}<a b<1, \\ b, & \text{ if } a \neq 0 \text{ and } a b \geq 1 .\end{cases}
    \end{equation*}
\end{lemma}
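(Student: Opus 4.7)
The plan is to exploit the strong convexity of $g$ (sum of the convex hinge term and a strongly convex quadratic) and reduce the problem to a piecewise quadratic minimization split at the single kink $u = 1/a$ of the hinge. First I would dispose of the case $a=0$ trivially: the hinge term contributes only the constant $1$, so the remaining strongly convex quadratic is uniquely minimized at $u^* = b$. For $a \neq 0$, I would rewrite $g$ piecewise on the two half-lines determined by the kink: on $\{u : au \leq 1\}$ one has $g(u) = 1 - au + \frac{\gamma}{2}(u-b)^2$, and on $\{u : au \geq 1\}$ one has $g(u) = \frac{\gamma}{2}(u-b)^2$.

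Each branch is a strongly convex quadratic whose unconstrained minimizer is read off immediately, namely $u_1 := b + \gamma^{-1}a$ and $u_2 := b$ respectively. By overall convexity of $g$, if $u_1$ is admissible for the first branch (\ie{} $au_1 \leq 1$, equivalently $ab \leq 1 - \gamma^{-1}a^2$), then $u^* = u_1$; if $u_2$ is admissible for the second branch (\ie{} $ab \geq 1$), then $u^* = u_2$. These supply the first, second, and fourth lines of the stated formula, and the boundaries match: $ab = 1 - \gamma^{-1}a^2$ forces $u_1 = 1/a$, and $ab = 1$ forces $u_2 = 1/a$, so continuity of the cases is automatic.

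In the intermediate regime $1 - \gamma^{-1}a^2 < ab < 1$, neither branch's unconstrained minimizer lies in its admissible half-line, so the restriction of $g$ to each half-line is minimized at the common endpoint $u = 1/a$, and convexity of $g$ then forces $u^* = a^{-1}$. Equivalently, this can be certified through the subgradient optimality condition: setting $h(u) := \max\{0,1-au\}$ one has $\partial h(1/a) = [\min\{-a,0\}, \max\{-a,0\}]$, and $0 \in \partial g(1/a)$ reduces to $\gamma(1/a - b) \in [\min\{a,0\}, \max\{a,0\}]$, which (for either sign of $a$) is exactly $1 - \gamma^{-1}a^2 \leq ab \leq 1$. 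The main obstacle is purely bookkeeping: the inequality $au \leq 1$ reverses direction with the sign of $a$, so expressing all thresholds through the scalar $ab$ (rather than $u$ versus $1/a$) keeps the argument sign-agnostic and produces the case distinction in the form stated by the lemma.
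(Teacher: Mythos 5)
Your argument is correct and complete: the $a=0$ case is immediate, the two branch minimizers $u_1=b+\gamma^{-1}a$ and $u_2=b$ with their admissibility conditions $ab\le 1-\gamma^{-1}a^2$ and $ab\ge 1$ are right, and the subgradient certificate at the kink $u=1/a$ correctly reduces to $1-\gamma^{-1}a^2\le ab\le 1$ for either sign of $a$, which covers the intermediate regime and makes the boundary cases consistent. Note that the paper states this lemma in Appendix~\ref{appendix: subproblems V_N} without any proof at all, so there is no argument of theirs to compare against; your piecewise-quadratic-plus-subdifferential derivation is the standard route and fully justifies the stated formula. The one point worth making explicit in a written version is why an admissible branch minimizer is the \emph{global} minimizer of $g$ (not merely the minimizer of $g$ restricted to that half-line): since $g$ is convex on all of $\mathbb{R}$, it suffices to observe that $0\in\partial g(u_1)$ when $au_1\le 1$ (the interior case by differentiability of $g$ there, the boundary case $au_1=1$ by the same subgradient inclusion you already wrote down), and likewise for $u_2$ --- your appeal to ``overall convexity'' is the right idea but deserves this one extra line.
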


\subsection{The Closed-form Solution to Subproblem~\texorpdfstring{\eqref{eq: U_i update}}{} }\label{appendix: subproblems Relu}
From Algorithm~\ref{alg: NN-BCD}, when $\sigma_{i}$ is ReLU, then the $\bm{U}_{i}^{k}$-update actually reduces to the following one-dimensional minimization problem
\begin{equation}\label{eq: Relu update}
    u^{*}=\underset{u}{\operatorname{argmin}} f(u):=\frac{1}{2}(\sigma(u)-a)^{2}+\frac{\gamma}{2}(u-b)^{2},
\end{equation}
where $\sigma(u)=\max \{0, u\}$ and $\gamma>0$. The solution to the above one-dimensional minimization problem can be presented in the following lemma.
\begin{lemma}
The optimal solution to Problem~\eqref{eq: Relu update} is shown as follows
\begin{equation}
    \operatorname{prox}_{\frac{1}{2 \gamma}(\sigma(\cdot)-a)^{2}}(b)=\left\{
     \begin{array}{ll}
   \frac{a+\gamma b}{1+\gamma}, & \text{ if } a+\gamma b \geq 0, b \geq 0, \\
\frac{a+\gamma b}{1+\gamma}, & \text{ if }-(\sqrt{\gamma(\gamma+1)}-\gamma) a \leq \gamma b<0, \\b, & \text{ if }-a \leq \gamma b \leq-(\sqrt{\gamma(\gamma+1)}-\gamma) a<0, \\
\min \{b, 0\}, & \text{ if } a+\gamma b<0.
 \end{array}\right. 
\end{equation}
\end{lemma}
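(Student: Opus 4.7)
The function $f(u) = \frac{1}{2}(\sigma(u)-a)^2 + \frac{\gamma}{2}(u-b)^2$ is continuous, coercive, and piecewise quadratic, but nondifferentiable at $u=0$. My plan is to split the real line at the kink of $\sigma$, minimize the resulting smooth quadratic on each half-line separately, and then compare the two candidate values to determine which one is the global minimizer in terms of the signs of $a+\gamma b$ and $b$.

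First I would restrict to $u \ge 0$, where $\sigma(u) = u$ and $f$ reduces to the strictly convex quadratic $\frac{1}{2}(u-a)^2 + \frac{\gamma}{2}(u-b)^2$. Setting its derivative to zero gives the unconstrained optimizer $\tilde u_+ = (a+\gamma b)/(1+\gamma)$, which is feasible precisely when $a+\gamma b \ge 0$; otherwise the piecewise minimum on $[0,\infty)$ sits at the boundary $u=0$. Plugging $\tilde u_+$ back in, using $\tilde u_+ - a = \gamma(b-a)/(1+\gamma)$ and $\tilde u_+ - b = (a-b)/(1+\gamma)$, yields the clean expression
\begin{equation*}
f(\tilde u_+) \;=\; \frac{\gamma(a-b)^2}{2(1+\gamma)}.
\end{equation*}
Next I would restrict to $u \le 0$, where $\sigma(u) = 0$ and $f$ becomes $\frac{a^2}{2} + \frac{\gamma}{2}(u-b)^2$, which is independent of $a$ in its shape and has unconstrained minimizer $\tilde u_- = b$, feasible iff $b \le 0$. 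In that case $f(\tilde u_-) = \frac{a^2}{2}$; otherwise the piecewise minimum on $(-\infty,0]$ sits at $u=0$ with value $(a^2+\gamma b^2)/2$.

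The heart of the argument is comparing the two piecewise minima. The boundary value $(a^2+\gamma b^2)/2$ is dominated by either interior candidate whenever that candidate is feasible (a short one-line calculation shows $(a+\gamma b)^2 \ge 0$ implies $f(\tilde u_+) \le f(0)$, and $\gamma b^2 \ge 0$ implies $f(b) \le f(0)$), so only the cases where both interior candidates are feasible require real work. In that overlapping regime ($a+\gamma b \ge 0$ and $b \le 0$, forcing $a \ge -\gamma b \ge 0$), the inequality $f(\tilde u_+) \le f(b)$ reduces algebraically to $a^2 + 2\gamma a b - \gamma b^2 \ge 0$. Viewed as a quadratic in $a$ with discriminant $4\gamma(\gamma+1)b^2$, its roots are $a = -b\bigl(\gamma \pm \sqrt{\gamma(\gamma+1)}\bigr)$; only the upper root lies in the active region $a \ge -\gamma b$, and the threshold rearranges to exactly $\gamma b = -\bigl(\sqrt{\gamma(\gamma+1)}-\gamma\bigr)a$.

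Finally I would stitch together the four sign combinations: (i) $a+\gamma b \ge 0$ and $b \ge 0$ gives $u^*=\tilde u_+$; (ii) $a+\gamma b \ge 0$ and $b<0$ splits into the second and third cases via the threshold above; (iii) $a+\gamma b < 0$ forces the $u \ge 0$ piece to be minimized at $u=0$, and the remaining candidate is $\min\{b,0\}$ depending on whether $b \le 0$ or $b > 0$. The only nontrivial step is the quadratic-in-$a$ threshold computation in the overlap case; everything else is convex-analytic bookkeeping on two strictly convex quadratics.
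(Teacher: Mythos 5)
Your proof is correct, and the key computations all check out: the restricted minimizers $\tilde u_+=(a+\gamma b)/(1+\gamma)$ on $[0,\infty)$ and $\tilde u_-=b$ on $(-\infty,0]$, the values $f(\tilde u_+)=\tfrac{\gamma(a-b)^2}{2(1+\gamma)}$ and $f(b)=\tfrac{a^2}{2}$, the identity $f(0)-f(\tilde u_+)=\tfrac{(a+\gamma b)^2}{2(1+\gamma)}\ge 0$ that disposes of the boundary candidate, and the reduction of $f(\tilde u_+)\le f(b)$ in the overlap region to $a^2+2\gamma ab-\gamma b^2\ge 0$, whose relevant root $a=-b\bigl(\gamma+\sqrt{\gamma(\gamma+1)}\bigr)$ rearranges (after multiplying by $\sqrt{\gamma(\gamma+1)}-\gamma>0$) to the stated threshold $\gamma b=-\bigl(\sqrt{\gamma(\gamma+1)}-\gamma\bigr)a$. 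Be aware that the paper itself states this lemma without proof — it is a known closed-form proximal computation inherited from earlier BCD-for-neural-network literature — so there is no in-paper argument to compare yours against; your case-split on the kink of $\sigma$ followed by comparison of the two piecewise minima is the natural and standard route, and your write-up fills a genuine gap in the appendix. The only cosmetic point worth noting is that on the boundary $\gamma b=-\bigl(\sqrt{\gamma(\gamma+1)}-\gamma\bigr)a$ the two candidates tie, so the minimizer is non-unique there; this is a feature of the lemma's statement (its second and third cases overlap at that boundary), not a defect of your argument.
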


\subsection{The Closed-form Solution to Subproblem~\texorpdfstring{\eqref{eq: W_i^MC update}}{} }\label{appendix: subproblems MC}
The following cases are considered in our paper

1) Lasso regularization:
\begin{equation}\label{eq: W_i^MC lasso}
  \bm{W}_i^{MC,k} = \underset{\bm{W}_i^{MC}}{\operatorname{argmin}} \left\{\|\bm{W}_i^{MC}\|_1+\frac{\tau}{2}\|\bm{W}_i^k - \bm{W}_i^{MC}\|_{F}^{2} +\frac{\alpha}{2}\|\bm{W}_i^{MC}-\bm{W}_i^{MC,k-1}\|_F^2 \right\}.
\end{equation}
The above optimization problem~\eqref{eq: W_i^MC lasso} can be solved in closed-form with soft thresholding~\citep{donoho1995noising}.

 2) $\ell_0$ norm regularization:
\begin{equation}\label{eq: W_i^MC l0 regularization}
  \bm{W}_i^{MC,k} = \underset{\bm{W}_i^{MC}}{\operatorname{argmin}} \left\{\|\bm{W}_i^{MC}\|_0+\frac{\tau}{2}\|\bm{W}_i^k - \bm{W}_i^{MC}\|_{F}^{2} +\frac{\alpha}{2}\|\bm{W}_i^{MC}-\bm{W}_i^{MC,k-1}\|_F^2 \right\}.
\end{equation}
The above optimization problem~\eqref{eq: W_i^MC l0 regularization} can be solved in closed-form with hard thresholding~\citep{blumensath2009iterative}.

 3) $\ell_0$ norm constrained:
\begin{equation}\label{eq: W_i^MC l0 constrained}
      \bm{W}_i^{MC,k}  = \underset{\|\bm{W}_i^{MC}\|_0 = \beta_i}{\operatorname{argmin}} \left\{\frac{\tau}{2}\|\bm{W}_i^k - \bm{W}_i^{MC}\|_{F}^{2} +\frac{\alpha}{2}\|\bm{W}_i^{MC}-\bm{W}_i^{MC,k-1}\|_F^2 \right\}.
\end{equation}
The above optimization problem~\eqref{eq: W_i^MC l0 constrained} can be reformulated into
\begin{equation*}\label{eq: W_i^MC l0 constrained-1}
      \bm{W}_i^{MC,k}  = \underset{\|\bm{W}_i^{MC}\|_0 = \beta_i}{\operatorname{argmin}} \left\{ \|\bm{W}_i^{MC} - \big(\frac{\tau}{\tau+\alpha} \bm{W}_i^{MC}+\frac{\alpha}{\tau+\alpha}\bm{W}_i^{MC,k-1}\big)\|_{F}^{2}  \right\},
\end{equation*}
where the solution can be obtained by selecting the $\beta_i$-largest elements from $\big|\frac{\tau}{\tau+\alpha} \bm{W}_i^{MC}+\frac{\alpha}{\tau+\alpha}\bm{W}_i^{MC,k-1}\big|$.

4) tensor-train decomposition: define $\mathcal{G}_{i}$  as the set of TT-cores from $i$-th layer.
\begin{equation}\label{eq: G_i update}
    \mathcal{G}_i^k = \underset{\mathcal{G}_i}{\operatorname{argmin}} \left\{\frac{\tau}{2}\|\bm{\mathcal{W}}_i^k - \textnormal{TTD}(\bm{r}_i)\|_{F}^{2} +\frac{\alpha}{2}\|\mathcal{G}_i-\mathcal{G}_i^{k-1}\|_F^2 \right\}
\end{equation}  where $\frac{\alpha}{2}\|\mathcal{G}_i-\mathcal{G}_i^{k-1}\|_F^2$ is the proximal terms. This subproblem is implemented in TensorLy package~\citep{tensorly}. If one wants to use Tucker decomposition with our NN-BCD, the closed-form update proposed by our previous work~\citep{shen2022smooth} can be used. For CP decomposition, we can refer to~\citep {shen2022super}.

\section{Proof of Lemma~\ref{lemma: sufficient decrease}} \label{appendix: proof of sufficient decrease}
Let $h: \mathbb{R}^{p} \rightarrow \mathbb{R} \cup\{+\infty\}$ be an extended-real-valued function, its graph is defined by
${Graph}(h) :=\{(\bm{x}, y) \in \mathbb{R}^{p} \times \mathbb{R}: y=h(\bm{x})\},$ and its domain by $\operatorname{dom}(h):=\{\bm{x} \in \mathbb{R}^{p}: h(\bm{x})<+\infty\}$. The subdifferential of a function is defined as follows. 
\begin{definition}[\normalfont Subdifferentials~\citep{attouch2009convergence, attouch2010proximal}]\label{def: Subdifferentials}~Assume that $f:~\mathbb{R}^p \to (-\infty,+\infty)$ is a proper and lower semicontinuous function. 
\begin{enumerate}[topsep=0pt,itemsep=0pt,partopsep=0pt,parsep=0pt]  
	\item The domain of $f$ is defined and denoted by $\textnormal{dom}f\coloneqq \{\bm{x}\in\mathbb{R}^p:f(\bm{x})<+\infty\}$
	\item For a given $\bm{x}\in \textnormal{dom}f$, the Fréchet subdifferential of $f$ at $\bm{x}$, written $\hat{\partial}f(\bm{x})$, is the set of all vectors $\bm{u}\in  \mathbb{R}^p$ that satisfy
\[\lim\limits_{\bm{y} \neq \bm{x}} \inf_{\bm{y} \to \bm{x}} \frac{f(\bm{y})-f(\bm{x}) - \langle \bm{u} , \bm{y}-\bm{x}\rangle}{\|\bm{y}-\bm{x}\|}\geq 0.\]
\item The limiting-subdifferential, or simply the subdifferential, of $f$ at $\bm{x}$, written $\partial f(\bm{x})$ is defined through the following closure process
\[\partial f(\bm{x}):=\{\bm{u}\in  \mathbb{R}^p: \exists \bm{x}^k \to \bm{x},f(\bm{x}^k) \to f(\bm{x}) \textnormal{ and } \bm{u}^k \in \hat{\partial}f(\bm{x}^k) \to \bm{u} \textnormal{ as } k\to \infty \}.\]
	\end{enumerate}
\end{definition}
The detailed proof for Lemma~\ref{lemma: sufficient decrease} is provided below.
\begin{proof}
The inequality~\eqref{eq: sufficient decrease} can be developed by considering the descent quantity along the update of each block variable, \ie, $\{\bm{V}_i\}_{i=1}^N$, $\{\bm{U}_i\}_{i=1}^N$, $\{\bm{W}_i\}_{i=1}^N$, and $\{\bm{W}_i^{MC}\}_{i=1}^N$. To begin with, the following notations are introduced. Specifically, 
$\bm{W}_{<i}:=\left(\bm{W}_{1}, \bm{W}_{2}, \ldots, \bm{W}_{i-1}\right)$, $\bm{W}_{>i}:=\left(\bm{W}_{i+1}, \bm{W}_{i+1}, \ldots, \bm{W}_{N}\right)$, and $\bm{V}_{<i}, \bm{V}_{>i}, \bm{U}_{<i}, \bm{U}_{>i},\bm{W}^{MC}_{<i},\bm{W}^{MC}_{>i}$ are defined similarly. We will consider each case separately. 

\subsubsection*{Optimization over $\bm{V}_i$}
\textbf{$\bm{V}_{N}^{k}$-block:} at iteration $k$,  there are two ways to update the variable: 

1) proximal update with closed-form solution: the following inequality can be derived
\begin{equation}\label{eq: V_N proximal update}
  \begin{aligned}
      & \mathcal{L}\left(\{\bm{W}_{i}^{k-1}\}_{i=1}^{N}, \bm{V}_{i<N}^{k-1}, \textcolor{blue}{\bm{V}_{N}^{k}},\{\bm{U}_{i}^{k-1}\}_{i=1}^{N}, \{\bm{W}_{i}^{MC,k-1}\}_{i=1}^{N}\right)\\
        \leq &  \mathcal{L}\left(\{\bm{W}_{i}^{k-1}\}_{i=1}^{N}, \bm{V}_{i<N}^{k-1}, \textcolor{blue}{\bm{V}_{N}^{k-1}},\{\bm{U}_{i}^{k-1}\}_{i=1}^{N}, \{\bm{W}_{i}^{MC,k-1}\}_{i=1}^{N}\right)-\textcolor{blue}{\frac{\alpha}{2}\|\bm{V}_{N}^{k}-\bm{V}_{N}^{k-1}\|_{F}^{2}}.
  \end{aligned}
\end{equation} 
The above inequality~\eqref{eq: V_N proximal update} is due to the fact that $\bm{V}_{N}^{k}$ is the optimal solution for subproblem~\eqref{eq: V_N update}. 

2) proximal-linear case: let $h^{k}(\bm{V}_{N}):=s_{N}(\bm{V}_{N})+\mathcal{R}_{n}(\bm{V}_{N} ; \bm{Y})+\frac{\gamma}{2}\|\bm{V}_{N}-\bm{U}_{N}^{k-1}\|_{F}^{2}$ and $\bar{h}^{k}(\bm{V}_{N}):=s_{N}(\bm{V}_{N})+\mathcal{R}_{n}(\bm{V}_{N}^{k-1} ; \bm{Y})+\langle\nabla \mathcal{R}_{n}(\bm{V}_{N}^{k-1} ; \bm{Y}), \bm{V}_{N}-\bm{V}_{N}^{k-1}\rangle+\frac{\alpha}{2}\|\bm{V}_{N}-\bm{V}_{N}^{k-1}\|_{F}^{2} +\frac{\gamma}{2}\|\bm{V}_{N}-\bm{U}_{N}^{k-1}\|_{F}^{2}$. By the optimality of $\bm{V}_N^k$ and the strong convexity\footnote{The function $h$ is called a strongly convex function with parameter $\gamma>0$ if $h(u) \geq h(v)+\langle\nabla h(v), u-v\rangle+\frac{\gamma}{2}\|u-v\|^{2}$.} of $\bar{h}^{k}(\bm{V}_{N})$ with modulus at least $\alpha +\gamma$, the following holds
\begin{equation}\label{eq: V_N proximal-linear update}
     \bar{h}^{k}(\bm{V}_{N}^{k}) \leq \bar{h}^{k}(\bm{V}_{N}^{k-1}) -\frac{\alpha+\gamma}{2}\|\bm{V}_{N}^{k}-\bm{V}_{N}^{k-1}\|_{F}^{2},
\end{equation}
which implies 
\begin{subequations}\label{eq: V_N proximal-linear update1}
\begin{align}
h^{k}(\bm{V}_{N}^{k}) &\leq  h^{k}(\bm{V}_{N}^{k-1}) + \mathcal{R}_{n}(\bm{V}_{N}^{k} ; \bm{Y})-\mathcal{R}_{n}(\bm{V}_{N}^{k-1} ; \bm{Y})-\langle\nabla \mathcal{R}_{n}(\bm{V}_{N}^{k-1} ; \bm{Y}), \bm{V}_{N}^{k}-\bm{V}_{N}^{k-1}\rangle \nonumber\\ 
&-(\alpha+\frac{\gamma}{2})\|\bm{V}_{N}^{k}-\bm{V}_{N}^{k-1}\|_{F}^{2} \label{eq: V_N proximal-linear update1-1}  \\ 
&\leq h^{k}(\bm{V}_{N}^{k-1})-(\alpha+\frac{\gamma-L_{R}}{2})\|\bm{V}_{N}^{k}-\bm{V}_{N}^{k-1}\|_{F}^{2}, \label{eq: V_N proximal-linear update1-2} 
\end{align}
\end{subequations}
where inequality~\eqref{eq: V_N proximal-linear update1-1} is due to the inequality~\eqref{eq: V_N proximal-linear update},  the relationship between $h^{k}(\bm{V}_{N}^{k-1})$ and $\bar{h}^{k}(\bm{V}_{N}^{k-1})$,  and the relationship between $h^{k}(\bm{V}_{N}^{k})$ and $\bar{h}^{k}(\bm{V}_{N}^{k})$. The inequality~\eqref{eq: V_N proximal-linear update1-2} holds for the $L_{R}$-Lipschitz continuity of $\nabla \mathcal{R}_{n}$, \ie, the following inequality by~\citep{boyd2004convex}
\begin{equation*}
    \mathcal{R}_{n}(\bm{V}_{N}^{k} ; \bm{Y}) \leq \mathcal{R}_{n}(\bm{V}_{N}^{k-1} ; \bm{Y})+\langle\nabla \mathcal{R}_{n}(\bm{V}_{N}^{k-1} ; \bm{Y}), \bm{V}_{N}^{k}-\bm{V}_{N}^{k-1}\rangle+\frac{L_{R}}{2}\|\bm{V}_{N}^{k}-\bm{V}_{N}^{k-1}\|_{F}^{2}.
\end{equation*}
According to  the relationship between  $\mathcal{L}\left(\{\bm{W}_{i}^{k-1}\}_{i=1}^{N}, \bm{V}_{i<N}^{k-1}, \bm{V}_{N},\{\bm{U}_{i}^{k-1}\}_{i=1}^{N}, \{\bm{W}_{i}^{MC,k-1}\}_{i=1}^{N}\right)$ and $h^{k}(\bm{V}_{N})$, and the inequality~\eqref{eq: V_N proximal-linear update1},
\begin{equation}\label{eq: V_N proximal-linear update2}
  \begin{aligned}
      & \mathcal{L}\left(\{\bm{W}_{i}^{k-1}\}_{i=1}^{N}, \bm{V}_{i<N}^{k-1}, \textcolor{blue}{\bm{V}_{N}^{k}},\{\bm{U}_{i}^{k-1}\}_{i=1}^{N}, \{\bm{W}_{i}^{MC,k-1}\}_{i=1}^{N}\right)\\
        \leq &  \mathcal{L}\left(\{\bm{W}_{i}^{k-1}\}_{i=1}^{N}, \bm{V}_{i<N}^{k-1}, \textcolor{blue}{\bm{V}_{N}^{k-1}},\{\bm{U}_{i}^{k-1}\}_{i=1}^{N}, \{\bm{W}_{i}^{MC,k-1}\}_{i=1}^{N}\right) \\
        - & \textcolor{blue}{(\alpha+\frac{\gamma-L_{R}}{2})\|\bm{V}_{N}^{k}-\bm{V}_{N}^{k-1}\|_{F}^{2}}.
  \end{aligned}
\end{equation} 

\noindent \textbf{$\bm{V}_{i}^{k}$-block ($i<N$):} $\bm{V}_{i}^{k}$ is updated according to the following
\begin{equation*}
    \bm{V}_{i}^{k} \leftarrow \underset{\bm{V}_{i}}{\operatorname{argmin}}\left\{s_{i}(\bm{V}_{i})+\frac{\gamma}{2}\|\bm{V}_{i}-\sigma_{i}(\bm{U}_{i}^{k-1})\|_{F}^{2}+\frac{\rho}{2}\|\bm{U}_{i+1}^{k}-\bm{W}_{i+1}^{k} \bm{V}_{i}\|_{F}^{2}\right\}.
\end{equation*}
Let $h^{k}(\bm{V}_{i})=s_{i}(\bm{V}_{i})+\frac{\gamma}{2}\|\bm{V}_{i}-\sigma_{i}(\bm{U}_{i}^{k-1})\|_{F}^{2}+\frac{\rho}{2}\|\bm{U}_{i+1}^{k}-\bm{W}_{i+1}^{k} \bm{V}_{i}\|_{F}^{2}$. By the convexity of $s_{i}$, the function $h^{k}(\bm{V}_{i})$ is a strongly convex function with modulus no less than $\gamma$. By the optimality of $\bm{V}_{i}^{k}$, the following holds
\begin{equation}\label{eq: V_i proximal-linear update}
     h^{k}(\bm{V}_{i}^{k}) \leq h^{k}(\bm{V}_{i}^{k-1}) - \frac{\gamma}{2}\|\bm{V}_{i}^{k}-\bm{V}_{i}^{k-1}\|_{F}^{2}.
\end{equation}
Based on the inequality~\eqref{eq: V_i proximal-linear update}, it yields for 
\begin{equation} \label{eq: V_i proximal-linear update1}
    \begin{aligned}
& \mathcal{L}(\bm{W}_{\leq i}^{k-1}, \bm{W}_{>i}^{k}, \bm{V}_{<i}^{k-1}, \textcolor{blue}{\bm{V}_{i}^k}, \bm{V}_{>i}^{k}, \bm{U}_{\leq i}^{k-1}, \bm{U}_{>i}^{k}, \bm{W}_{\leq i}^{MC,k-1}, \bm{W}_{>i}^{MC,k})\\
\leq & \mathcal{L}(\bm{W}_{\leq i}^{k-1}, \bm{W}_{>i}^{k}, \bm{V}_{<i}^{k-1}, \textcolor{blue}{\bm{V}_{i}^{k-1}}, \bm{V}_{>i}^{k}, \bm{U}_{\leq i}^{k-1}, \bm{U}_{>i}^{k},\bm{W}_{\leq i}^{MC,k-1}, \bm{W}_{>i}^{MC,k})\\
 - &  \textcolor{blue}{\frac{\gamma}{2}\|\bm{V}_{i}^{k}-\bm{V}_{i}^{k-1}\|_{F}^{2}}
\end{aligned}
\end{equation}
for $i=1, \dots, N-1$, where 
\begin{equation*}
\begin{aligned}
    & h^{k}(\bm{V}_{i}^k) -  h^{k}(\bm{V}_{i}^{k-1}) \\ 
    =&  \mathcal{L}(\bm{W}_{\leq i}^{k-1}, \bm{W}_{>i}^{k}, \bm{V}_{<i}^{k-1}, \bm{V}_{i}^k, \bm{V}_{>i}^{k}, \bm{U}_{\leq i}^{k-1}, \bm{U}_{>i}^{k}, \bm{W}_{\leq i}^{MC,k-1}, \bm{W}_{>i}^{MC,k}) \\
    - &  \mathcal{L}(\bm{W}_{\leq i}^{k-1}, \bm{W}_{>i}^{k}, \bm{V}_{<i}^{k-1}, \bm{V}_{i}^{k-1}, \bm{V}_{>i}^{k}, \bm{U}_{\leq i}^{k-1}, \bm{U}_{>i}^{k}, \bm{W}_{\leq i}^{MC,k-1}, \bm{W}_{>i}^{MC,k}).
\end{aligned}
\end{equation*}

\subsubsection*{Optimization over $\bm{U}_i$}
\textbf{$\bm{U}_{N}^{k}$-block}: similar to the inequality~\eqref{eq: V_i proximal-linear update1}, the  descent quantity is established as follows
\begin{equation} \label{eq: U_N strongly convex update}
    \begin{aligned}
       & \mathcal{L}(\bm{W}_{\leq N}^{k-1}, \bm{V}_{<N}^{k-1}, \bm{V}_{N}^{k}, \bm{U}_{<N}^{k-1}, \textcolor{blue}{\bm{U}_{N}^{k}}, \bm{W}_{\leq N}^{MC,k-1}) \\
        \leq & \mathcal{L}(\bm{W}_{\leq N}^{k-1}, \bm{V}_{<N}^{k-1}, \bm{V}_{N}^{k}, \bm{U}_{<N}^{k-1}, \textcolor{blue}{\bm{U}_{N}^{k-1}}, \bm{W}_{\leq N}^{MC,k-1}) - \textcolor{blue}{\frac{\gamma+\rho}{2}\|\bm{U}_{N}^{k}-\bm{U}_{N}^{k-1}\|_{F}^{2}},
    \end{aligned}
\end{equation}
where the above inequality is because the objective function in subproblem~\eqref{eq: U_N update} is a strongly convex function with a modulus at least $\gamma+\rho$.

\noindent \textbf{$\bm{U}_{i}^{k}$-block ($i<N$)}: the following can be obtained
\begin{equation} \label{eq: U_i proximal update}
\begin{aligned}
& \mathcal{L}(\bm{W}_{\leq i}^{k-1},  \bm{W}_{>i}^{k}, \bm{V}_{<i}^{k-1}, \bm{V}_{\geq i}^{k}, \bm{U}_{<i}^{k-1}, \textcolor{blue}{\bm{U}_{i}^{k}}, \bm{U}_{>i}^{k},\bm{W}_{\leq i}^{MC,k-1}, \bm{W}_{>i}^{MC,k}) \\
\leq & \mathcal{L}(\bm{W}_{\leq i}^{k-1},  \bm{W}_{>i}^{k}, \bm{V}_{<i}^{k-1}, \bm{V}_{\geq i}^{k}, \bm{U}_{<i}^{k-1}, \textcolor{blue}{\bm{U}_{i}^{k-1}}, \bm{U}_{>i}^{k},\bm{W}_{\leq i}^{MC,k-1}, \bm{W}_{>i}^{MC,k}) \\
-& \textcolor{blue}{\frac{\alpha}{2}\|\bm{U}_{i}^{k}-\bm{U}_{i}^{k-1}\|_{F}^{2}}
\end{aligned}
\end{equation}
for $i=1, \dots, N-1$ since $\bm{U}_{i}^{k}$ is the optimal solution for subproblem~\eqref{eq: U_i update}.

\subsubsection*{Optimization over $\bm{W}_i$}
\textbf{$\bm{W}_{i}^{k}$-block ($i\leq N$)}: $\bm{W}_{i}^{k}$ is updated according to the following
\begin{equation*}
\bm{W}_{i}^{k} \rightarrow \underset{\bm{W}_{i}}{\operatorname{argmin}}\left\{\frac{\rho}{2}\|\bm{U}_{i}^{k}-\bm{W}_{i} \bm{V}_{i-1}^{k-1}\|_{F}^{2}+\frac{\tau}{2}\|\bm{W}_i - \bm{W}_i^{MC,k-1}\|_{F}^{2}\right\},    
\end{equation*}
where $h^{k}(\bm{W}_{i})=\frac{\rho}{2}\|\bm{U}_{i}^{k}-\bm{W}_{i} \bm{V}_{i-1}^{k-1}\|_{F}^{2}+\frac{\tau}{2}\|\bm{W}_i - \bm{W}_i^{MC,k-1}\|_{F}^{2}$ is  a strongly convex function with modulus at least $\tau$. Accordingly, the following holds
\begin{equation} \label{eq: W_i strongly convex update}
    \begin{aligned}
& \mathcal{L}(\bm{W}_{<i}^{k-1}, \textcolor{blue}{\bm{W}_{i}^{k}}, \bm{W}_{>i}^{k}, \bm{V}_{<i}^{k-1},  \bm{V}_{\geq i}^{k}, \bm{U}_{<i}^{k-1},  \bm{U}_{\geq i}^{k}, \bm{W}_{\leq i}^{MC,k-1}, \bm{W}_{>i}^{MC,k}) \\
\leq &  \mathcal{L}(\bm{W}_{<i}^{k-1}, \textcolor{blue}{\bm{W}_{i}^{k-1}}, \bm{W}_{>i}^{k}, \bm{V}_{<i}^{k-1},  \bm{V}_{\geq i}^{k}, \bm{U}_{<i}^{k-1},  \bm{U}_{\geq i}^{k}, \bm{W}_{\leq i}^{MC,k-1}, \bm{W}_{>i}^{MC,k}) \\
 -&\textcolor{blue}{\frac{\tau}{2}\|\bm{W}_{i}^{k}-\bm{W}_{i}^{k-1}\|_{F}^{2}}
\end{aligned}
\end{equation}
due to the relationship between $\mathcal{L}(\bm{W}_{<i}^{k-1}, \bm{W}_{i}, \bm{W}_{>i}^{k}, \bm{V}_{<i}^{k-1},  \bm{V}_{\geq i}^{k}, \bm{U}_{<i}^{k-1},  \bm{U}_{\geq i}^{k}, \bm{W}_{\leq i}^{MC,k-1}, \bm{W}_{>i}^{MC,k})$ and $h^{k}(\bm{W}_{i})$.

\subsubsection*{Optimization over $\bm{W}_i^{MC}$}
\textbf{$\bm{W}_i^{MC}$-block ($i\leq N$)}: the descent quantity for $\bm{W}_i^{MC}$ can be derived as follows
\begin{equation} \label{eq: G_i proximal update}
    \begin{aligned}
& \mathcal{L}(\bm{W}_{<i}^{k-1}, \bm{W}_{\geq i}^{k}, \bm{V}_{<i}^{k-1},  \bm{V}_{\geq i}^{k}, \bm{U}_{<i}^{k-1},  \bm{U}_{\geq i}^{k}, \bm{W}_{<i}^{MC,k-1},  \textcolor{blue}{\bm{W}_{i}^{MC,k}}, \bm{W}_{>i}^{MC,k}) \\
\leq &  \mathcal{L}(\bm{W}_{<i}^{k-1}, \bm{W}_{\geq i}^{k}, \bm{V}_{<i}^{k-1},  \bm{V}_{\geq i}^{k}, \bm{U}_{<i}^{k-1},  \bm{U}_{\geq i}^{k}, \bm{W}_{<i}^{MC,k-1},  \textcolor{blue}{\bm{W}_{i}^{MC,k-1}}, \bm{W}_{>i}^{MC,k})\\
-& \textcolor{blue}{\frac{\alpha}{2}\|\bm{W}_{i}^{MC,k}-\bm{W}_{i}^{MC,k-1}\|_{F}^{2}},
\end{aligned}
\end{equation}
where the above inequality~\eqref{eq: G_i proximal update} is due to the fact that $\bm{W}_{i}^{MC,k}$ (or the low-rank matrix/tensor factors) is the optimal solution for subproblem~\eqref{eq: W_i^MC update}.

By summing up inequalities~\eqref{eq: V_N proximal update} (or~\eqref{eq: V_N proximal-linear update2}), \eqref{eq: U_N strongly convex update}, \eqref{eq: U_i proximal update}, \eqref{eq: W_i strongly convex update}, and~\eqref{eq: G_i proximal update}, it yields the
\begin{equation*}
 \mathcal{L}(\mathcal{P}^{k}) \leq \mathcal{L}(\mathcal{P}^{k-1})-\lambda\|\mathcal{P}^{k}-\mathcal{P}^{k-1}\|_{F}^{2},
\end{equation*}
where $\lambda:=\min \left\{\frac{\alpha}{2}, \frac{\gamma+\rho}{2},\frac{\tau}{2}\right\}$ (or $\lambda:=\min \left\{\frac{\alpha}{2}, \frac{\gamma+\rho}{2},\frac{\tau}{2}, \alpha+\frac{\gamma-L_{R}}{2}\right\}$). 
\end{proof}

\section{Proof of Lemma~\ref{lemma: subgradient low bound}} \label{appendix: proof of subgradient low bound}
The detailed proof of Lemma~\ref{lemma: subgradient low bound} is provided below.
\begin{proof}
    The inequality~\eqref{eq: subgradient bound} is established via bounding each term of $\partial \mathcal{L}(\mathcal{P}^{k})$. Specifically, the following holds
\begin{subequations}\label{eq: gradient layerN}
\begin{align}
& \mathbf{0} \in \partial s_{N}(\bm{V}_{N}^{k})+\partial \mathcal{R}_{n}(\bm{V}_{N}^{k} ; \bm{Y})+\gamma(\bm{V}_{N}^{k}-\bm{U}_{N}^{k-1})+\alpha(\bm{V}_{N}^{k}-\bm{V}_{N}^{k-1}), \label{eq: gradient layerN-1}\\
 & \mathbf{0} \in \partial s_{N}(\bm{V}_{N}^{k})+\nabla \mathcal{R}_{n}(\bm{V}_{N}^{k-1} ; \bm{Y})+\gamma(\bm{V}_{N}^{k}-\bm{U}_{N}^{k-1})+\alpha(\bm{V}_{N}^{k}-\bm{V}_{N}^{k-1}), \textnormal{ (proximal-linear)} \label{eq: gradient layerN-2}\\
& \mathbf{0}=\gamma(\bm{U}_{N}^{k}-\bm{V}_{N}^{k})+\rho(\bm{U}_{N}^{k}-\bm{W}_{N}^{k-1} \bm{V}_{N-1}^{k-1}), \label{eq: gradient layerN-3}\\
& \mathbf{0} \in \rho(\bm{W}_{N}^{k} \bm{V}_{N-1}^{k-1}-\bm{U}_{N}^{k}) \bm{V}_{N-1}^{k-1}{}^{\top}+\tau\left(\bm{W}_{N}^{k}-\bm{W}_{N}^{MC,k-1}\right), \label{eq: gradient layerN-4}\\
& \mathbf{0} \in \partial  \tau_{N}(\bm{W}_{N}^{MC,k})+ \tau(\bm{W}_N^{MC,k} - \bm{W}_N^{k})+\alpha(\bm{W}_N^{MC,k} - \bm{W}_N^{MC,k-1})+\partial \chi_{\mathcal{MC}}(\bm{W}_{N}^{MC,k}), \label{eq: gradient layerN-5}
\end{align} 
\end{subequations}
where  \begin{equation*}
    \chi_{\mathcal{MC}}(\bm{W})=\left\{
     \begin{array}{ll}
   0, & \text{ if } \mathcal{MC}(\bm{W})=0, \\
+\infty, & \text{ else}.
 \end{array}\right. 
\end{equation*}
\eqref{eq: gradient layerN-1}, \eqref{eq: gradient layerN-2}, \eqref{eq: gradient layerN-3},  \eqref{eq: gradient layerN-4}, and  \eqref{eq: gradient layerN-5} are  due to the optimality conditions of all updates in \eqref{eq: V_N update}, \eqref{eq: V_N proximal-linear}, \eqref{eq: U_N update}, \eqref{eq: W_i update}, and \eqref{eq: W_i^MC update}, respectively.

For $i=N-1, \dots, 1$, the following holds
\begin{subequations}\label{eq: gradient layeri}
\begin{align}
& \mathbf{0} \in \partial s_{i}(\bm{V}_{i}^{k})+\gamma(\bm{V}_{i}^{k}-\sigma_{i}(\bm{U}_{i}^{k-1}))+\rho \bm{W}_{i+1}^{k}{}^{\top}(\bm{W}_{i+1}^{k} \bm{V}_{i}^{k}-\bm{U}_{i+1}^{k}), \label{eq: gradient layeri-1} \\
& \mathbf{0} \in \gamma\left[(\sigma_{i}(\bm{U}_{i}^{k})-\bm{V}_{i}^{k}) \odot \partial \sigma_{i}(\bm{U}_{i}^{k})\right]+\rho(\bm{U}_{i}^{k}-\bm{W}_{i}^{k-1} \bm{V}_{i-1}^{k-1})+\alpha(\bm{U}_{i}^{k}-\bm{U}_{i}^{k-1}), \label{eq: gradient layeri-2} \\
& \mathbf{0} \in \rho(\bm{W}_{i}^{k} \bm{V}_{i-1}^{k-1}-\bm{U}_{i}^{k}) \bm{V}_{i-1}^{k-1}{}^{\top}+\tau(\bm{W}_{i}^{k}-\bm{W}_{i}^{MC,k-1}), \label{eq: gradient layeri-3} \\
& \mathbf{0} \in \partial  r_{i}(\bm{W}_{i}^{MC,k})+ \tau(\bm{W}_i^{MC,k} - \bm{W}_i^{k})+\alpha(\bm{W}_i^{MC,k} - \bm{W}_i^{MC,k-1})+\partial \chi_{\mathcal{MC}}(\bm{W}_{i}^{MC,k}), \label{eq: gradient layeri-4}
\end{align} 
\end{subequations}
where \eqref{eq: gradient layeri-1}, \eqref{eq: gradient layeri-2}, \eqref{eq: gradient layeri-3}, and \eqref{eq: gradient layeri-4} are due to the optimality conditions of all updates in \eqref{eq: V_i update}, \eqref{eq: U_i update}, \eqref{eq: W_i update}, and \eqref{eq: W_i^MC update}, respectively. $\bm{V}_{0}^{k} \equiv \bm{V}_{0}=\bm{X}$ for all $k$, and $\odot$ is the Hadamard product. Through the above relationship~\eqref{eq: gradient layerN}, we have
\begin{equation}\label{eq: gradient difference layerN}
\begin{aligned}
& -\alpha(\bm{V}_N^k-\bm{V}_N^{k-1})-\gamma(\bm{U}_N^k-\bm{U}_N^{k-1}) \in \partial s_N(\bm{V}_N^k)+\partial \mathcal{R}_n(\bm{V}_N^k ; \bm{Y})+\gamma(\bm{V}_N^k-\bm{U}_N^k)=\partial_{\bm{V}_N} \mathcal{L}(\mathcal{P}^k), \\
& \left(\nabla \mathcal{R}_n(\bm{V}_N^k ; \bm{Y})-\nabla \mathcal{R}_n(\bm{V}_N^{k-1} ; \bm{Y})\right)-\alpha(\bm{V}_N^k-\bm{V}_N^{k-1})-\gamma(\bm{U}_N^k-\bm{U}_N^{k-1})\in \partial_{\bm{V}_N} \mathcal{L}(\mathcal{P}^k), \textnormal{ (proximal-linear)} \\
& -\rho(\bm{W}_N^k-\bm{W}_N^{k-1}) \bm{V}_{N-1}^k-\rho \bm{W}_N^{k-1}(\bm{V}_{N-1}^k-\bm{V}_{N-1}^{k-1}) \\
& =\gamma(\bm{U}_N^k-\bm{V}_N^k)+\rho(\bm{U}_N^k-\bm{W}_N^k \bm{V}_{N-1}^k)=\partial_{\bm{U}_N} \mathcal{L}(\mathcal{P}^k), \\
& \rho \bm{W}_N^k[\bm{V}_{N-1}^k(\bm{V}_{N-1}^k-\bm{V}_{N-1}^{k-1})^{\top}+(\bm{V}_{N-1}^k-\bm{V}_{N-1}^{k-1}) \bm{V}_{N-1}^{k-1}{ }^{\top}]-\rho \bm{U}_N^k(\bm{V}_N^k-\bm{V}_N^{k-1})^{\top}  \\
& +\tau(\bm{W}_N^{MC,k} - \bm{W}_N^{MC,k-1}) \in \partial \rho(\bm{W}_N^k \bm{V}_{N-1}^k-\bm{U}_N^k) \bm{V}_{N-1}^k{}^{\top}+\tau(\bm{W}_N^{k}-\bm{W}_N^{MC,k})=\partial_{\bm{W}_N} \mathcal{L}(\mathcal{P}^k), \\
& -\alpha(\bm{W}_N^{MC,k}-\bm{W}_N^{MC,k-1}) \in \partial_{\bm{W}_N^{MC}} \mathcal{L}(\mathcal{P}^k).
\end{aligned}
\end{equation}

For $i=N-1, \ldots, 1$, the relationship~\eqref{eq: gradient layeri} implies
\begin{equation} \label{eq: gradient difference layeri}
\begin{aligned}
& -\gamma(\sigma_i(\bm{U}_i^k)-\sigma_i(\bm{U}_i^{k-1})) \in \partial s_i(\bm{V}_i^k)+\rho(\bm{V}_i^k-\sigma_i(\bm{U}_i^k)) \\
& +\gamma \bm{W}_{i+1}^k{ }^{\top}(\bm{W}_{i+1}^k \bm{V}_i^k-\bm{U}_{i+1}^k)=\partial_{\bm{V}_i} \mathcal{L}(\mathcal{P}^k), \\
& -\rho \bm{W}_i^{k-1}(\bm{V}_{i-1}^k-\bm{V}_{i-1}^{k-1})-\rho(\bm{W}_i^k-\bm{W}_i^{k-1}) \bm{V}_{i-1}^k-\alpha(\bm{U}_i^k-\bm{U}_i^{k-1}) \\
& \in \gamma\left[(\sigma_i(\bm{U}_i^k)-\bm{V}_i^k) \odot \partial \sigma_i(\bm{U}_i^k)\right]+\rho(\bm{U}_i^k-\bm{W}_i^k \bm{V}_{i-1}^k)=\partial_{\bm{U}_i} \mathcal{L}(\mathcal{P}^k) \text{, } \\
& \rho\bm{W}_i^k[\bm{V}_{i-1}^k(\bm{V}_{i-1}^k-\bm{V}_{i-1}^{k-1})^{\top}+(\bm{V}_{i-1}^k-\bm{V}_{i-1}^{k-1}) \bm{V}_{i-1}^{k-1}]-\rho \bm{U}_i^k(\bm{V}_{i-1}^k-\bm{V}_{i-1}^{k-1})^{\top} \\
& +\tau(\bm{W}_i^{MC,k}-\bm{W}_i^{MC,k-1}) \in \rho(\bm{W}_i^k \bm{V}_{i-1}^k-\bm{U}_i^k) \bm{V}_{i-1}^k{}^{\top}=\partial_{\bm{W}_i} \mathcal{L}(\mathcal{P}^k),\\
& -\alpha(\bm{W}_i^{MC,k}-\bm{W}_i^{MC,k-1}) \in \partial_{\bm{W}_i^{MC}} \mathcal{L}(\mathcal{P}^k).
\end{aligned}
\end{equation}

Based on the above relationships, and by the Lipschitz continuity of the activation function on the bounded set $\{\mathcal{P}:\|\mathcal{P}\|_{F} \leq \mathcal{B}\}$ and the bounded assumption of both $\mathcal{P}^{k-1}$ and $\mathcal{P}^{k}$, we have
\begin{equation} \label{eq: gradient error layerN}
   \begin{array}{ll}
\|\xi_{\bm{V}_{N}}^{k}\|_{F} \leq \alpha\|\bm{V}_{N}^{k}-\bm{V}_{N}^{k-1}\|_{F}+\gamma\|\bm{U}_{N}^{k}-\bm{U}_{N}^{k-1}\|_{F}, & \xi_{\bm{V}_{N}}^{k} \in \partial_{\bm{V}_{N}} \mathcal{L}(\mathcal{P}^{k}), \\
(\text{or }\|\xi_{\bm{V}_{N}}^{k}\|_{F} \leq(L_{R}+\alpha)\|\bm{V}_{N}^{k}-\bm{V}_{N}^{k-1}\|_{F}+\gamma\|\bm{U}_{N}^{k}-\bm{U}_{N}^{k-1}\|_{F}) \text{ proximal-linear}& \\
\|\xi_{\bm{U}_{N}}^{k}\|_{F} \leq \rho \mathcal{B}\|\bm{W}_{N}^{k}-\bm{W}_{N}^{k-1}\|_{F}+\rho \mathcal{B}\|\bm{V}_{N-1}^{k}-\bm{V}_{N-1}^{k-1}\|_{F}, & \xi_{\bm{U}_{N}}^{k} \in \partial_{\bm{U}_{N}} \mathcal{L}(\mathcal{P}^{k}),  \\
\|\xi_{\bm{W}_{N}}^{k}\|_{F} \leq 2 \rho \mathcal{B}^{2}\|\bm{V}_{N-1}^{k}-\bm{V}_{N-1}^{k-1}\|_{F}+\rho \mathcal{B}\|\bm{V}_{N}^{k}-\bm{V}_{N}^{k-1}\|_{F} + \tau\|\bm{W}_{N}^{MC,k}-\bm{W}_{N}^{MC,k-1}\|_{F}, & \xi_{\bm{W}_{N}}^k \in \partial_{\bm{W}_{N}} \mathcal{L}(\mathcal{P}^{k}),\\
\|\xi_{\bm{W}_N^{MC}}^{k}\|_{F}  \leq \alpha\|\bm{W}_N^{MC,k}-\bm{W}_N^{MC,k-1}\|_F,  & \xi_{\bm{W}_N^{MC}}^k \in \partial_{\bm{W}_N^{MC}} \mathcal{L}(\mathcal{P}^{k}),
\end{array} 
\end{equation}
and for $i=N-1, \ldots, 1$,
\begin{equation} \label{eq: gradient error layeri}
    \begin{array}{ll}
\|\xi_{\bm{V}_{i}}^{k}\|_{F} \leq \gamma L_{\mathcal{B}}\|\bm{U}_{i}^{k}-\bm{U}_{i}^{k-1}\|_{F}, & \xi_{\bm{V}_{i}}^{k} \in \partial_{\bm{V}_{i}} \mathcal{L}(\mathcal{P}^{k}), \\
\|\xi_{\bm{U}_{i}}^{k}\|_{F} \leq \rho \mathcal{B}\|\bm{V}_{i-1}^{k}-\bm{V}_{i-1}^{k-1}\|_{F}+\rho \mathcal{B}\|\bm{W}_{i}^{k}-\bm{W}_{i}^{k-1}\|_{F}+\alpha\|\bm{U}_{i}^{k}-\bm{U}_{i}^{k-1}\|_{F}, & \xi_{\bm{U}_{i}}^{k} \in \partial_{\bm{U}_{i}} \mathcal{L}(\mathcal{P}^{k}), \\
\|\xi_{\bm{W}_{i}}^{k}\|_{F} \leq(2\rho \mathcal{B}^{2}+\rho \mathcal{B})\|\bm{V}_{i-1}^{k}-\bm{V}_{i-1}^{k-1}\|_{F}+\tau\|\bm{W}_i^{MC,k}-\bm{W}_i^{MC,k-1}\|_{F}, & \xi_{\bm{W}_{i}}^{k} \in \partial_{\bm{W}_{i}} \mathcal{L}(\mathcal{P}^{k}), \\
\|\xi_{\bm{W}_i^{MC}}^{k}\|_{F}  \leq \alpha\|\bm{W}_i^{MC,k}-\bm{W}_i^{MC,k-1}\|_F,  & \xi_{\bm{W}_i^{MC}}^k \in \partial_{\bm{W}_i^{MC}} \mathcal{L}(\mathcal{P}^{k}).
\end{array}
\end{equation}
Summing the above inequalities~\eqref{eq: gradient error layerN} and \eqref{eq: gradient error layeri}, the subgradient bound~\eqref{eq: subgradient bound} can be obtained for any positive integer $k$,
\begin{equation*}
    \begin{aligned}
         &\operatorname{dist}(\mathbf{0}, \partial \mathcal{L} (\mathcal{P}^{k})) \\
        \leq&  \delta \sum_{i=1}^{N}\left[\|\bm{W}_{i}^{k}-\bm{W}_{i}^{k-1}\|_{F}+\|\bm{V}_{i}^{k}-\bm{V}_{i}^{k-1}\|_{F}+\|\bm{U}_{i}^{k}-\bm{U}_{i}^{k-1}\|_{F}+\|\bm{W}_{i}^{MC,k}-\bm{W}_{i}^{MC,k-1}\|_{F}\right] \\
       \leq &  \bar{\delta}\|\mathcal{P}^{k}-\mathcal{P}^{k-1}\|_{F},
    \end{aligned}
\end{equation*}
where
\begin{equation*}
    \delta:=\max \{\gamma, \alpha+\rho \mathcal{B}, \alpha+\gamma L_{\mathcal{B}}, 2 \rho \mathcal{B}+ 2\rho \mathcal{B}^{2}, \alpha + \tau\},
\end{equation*}
(or, for the proximal-linear case, $ \delta:=\max \{\gamma, L_R+\alpha+\rho \mathcal{B}, \alpha+\gamma L_{\mathcal{B}}, 2 \rho \mathcal{B}+ 2 \rho \mathcal{B}^{2}, \alpha + \tau\}$). 
\end{proof}

\section{Proof of Theorem~\texorpdfstring{\ref{thm: global convergence}}{} } \label{appendix: theorem of global convergence}

To build the global convergence of our iterative sequence  $\{\mathcal{P}^{k}\}_{k \in \mathbb{N}}$ from Algorithm~\ref{alg: NN-BCD}, the function $\mathcal{L}(\Btheta, \Btheta_{MC}, \bm{\mathcal{V}},\bm{\mathcal{U}})$ needs to have the Kurdyka \L{}ojasiewicz (K\L)  property as follows.
\begin{definition}[\normalfont K\L~property~\citep{attouch2013convergence,bolte2014proximal}]\label{df: kl}
A real function $f: \mathbb{R}^p \to (-\infty,+\infty]$ has the  Kurdyka \L{}ojasiewicz (K\L) property, namely, for any point $\bar{\bm{u}}\in \mathbb{R}^p$, in a neighborhood $N(\bar{\bm{u}},\sigma)$, there exists a desingularizing function $\phi(s)=cs^{1-\theta}$ for some $c>0$ and $\theta \in [0,1)$ such that 
\begin{equation}\label{eq: KL inequality}
    \phi'(|f(\bm{u})-f(\bar{\bm{u}})|)\mathrm{d}(0,\partial f(\bm{u}))\geq 1
\end{equation}
for any  $\bm{u} \in N(\bar{\bm{u}},\sigma)$ and  $f(\bm{u})\neq f(\bar{\bm{u}})$.
\end{definition}    
The real analytic and semi-algebraic functions, which are related to K\L{}  property, are introduced below.
\begin{definition}[\normalfont Real analytic~\citep{krantz2002primer}]
    A function $h$ with domain an open set $U \subset \mathbb{R}$ and range the set of either all real or complex numbers, is said to be real analytic at $u$ if the function $h$ may be represented by a convergent power series on some interval of positive radius centered at $u$, \ie, $h(x)=$ $\sum_{j=0}^{\infty} \alpha_{j}(x-u)^{j}$, for some $\left\{\alpha_{j}\right\} \subset \mathbb{R}$. The function is said to be real analytic on $V \subset U$ if it is real analytic at each $u \in V$ \citep[][Definition 1.1.5]{krantz2002primer}. The real analytic function $f$ over $\mathbb{R}^{p}$ for some positive integer $p>1$ can be defined similarly.
\end{definition}

\begin{definition}[\normalfont Semi-algebraic~\citep{bolte2014proximal}]  \label{df: semi-algerba} 
A subset $S$ of $\mathbb{R}^p$ is a real \textbf{semi-algebraic set} if there exists  a finite number of real polynomial functions $g_{ij},h_{ij}$:  $\mathbb{R}^p \to \mathbb{R}$ such that $S=\cup_{j=1}^q\cap_{i=1}^m\{\bm{u}\in \mathbb{R}^p:g_{ij}(\bm{u})=0 \textrm{ and }h_{ij}(\bm{u})<0 \}.$ In addition, a function $h:\mathbb{R}^{p+1} \to \mathbb{R}\cup{+\infty}$ is called 	\textbf{semi-algebraic} if its graph $\{(\bm{u}, t)\in \mathbb{R}^{p+1}: h(\bm{u})=t \}$ is a real semi-algebraic set.
 \end{definition}
 
Based on the above definitions, the following lemma can be obtained.
\begin{lemma}\label{lemma: KL property}
Most of the commonly used NN training models~\eqref{eq: final formulation} can be verified to satisfy the following 
\begin{enumerate}[topsep=0pt,itemsep=0pt,partopsep=0pt,label=(\alph*)]  
    \item the loss function $\ell$ is a proper lower semicontinuous and nonnegative function. For example,  the squared, logistic, hinge, or cross-entropy losses.
    \item the activation functions $\sigma_{i} (i=1 \ldots, N-1)$ are Lipschitz continuous on any bounded set. For example, ReLU, leaky ReLU, sigmoid, hyperbolic tangent, linear, polynomial, or softplus activations.
    \item the regularizers $r_{i}$ and $s_{i}(i=1, \ldots, N)$ are nonegative lower semicontinuous convex functions. $r_{i}$ and $s_{i}$ are the squared $\ell_{2}$ norm, the $\ell_{1}$ norm, the elastic net, the indicator function of some nonempty closed convex set (such as the nonnegative closed half-space, box set or a closed interval $[0,1]$) and semi-algebraic sets, or 0 if no regularization.
    \item all these functions $\ell, \sigma_{i}, r_{i}$ and $s_{i}(i=1, \ldots, N)$ are either real analytic or semi-algebraic, and continuous on their domains.
\end{enumerate}
Accordingly, the objective function $\mathcal{L}(\Btheta, \Btheta_{MC}, \bm{\mathcal{V}},\bm{\mathcal{U}})$ in~\eqref{eq: final formulation} has \textbf{Kurdyka \L{}ojasiewicz (K\L)} property. 
\end{lemma}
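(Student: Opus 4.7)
The plan is to prove the K\L property by showing that $\mathcal{L}$ either is real analytic or, more generally, is definable in a suitable o-minimal structure. The starting point is the classical result of Kurdyka (1998), extended by Bolte--Daniilidis--Lewis (2007) and Attouch--Bolte--Redont--Soubeyran (2010), which states that any proper lower semicontinuous function that is definable in an o-minimal structure over $\mathbb{R}$ (in particular any semi-algebraic function, and any real analytic function via \L{}ojasiewicz's original inequality) automatically satisfies the K\L property with desingularizing function of the form $\phi(s)=cs^{1-\theta}$ at every point of its domain. The work is therefore to verify that the composite objective $\mathcal{L}$ lies in such a structure.

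First I would decompose $\mathcal{L}$ into the additive pieces displayed in~\eqref{eq: final formulation}: the empirical risk $\mathcal{R}_n(\bm{V}_N;\bm{Y})$, the regularizers $r_i(\bm{W}_i^{MC})$ and $s_i(\bm{V}_i)$, the three quadratic penalty terms, and the constraint $\mathcal{MC}(\Btheta_{MC})=0$ encoded as an indicator function $\chi_{\mathcal{MC}}$. The three penalty terms are polynomials in their arguments and hence are both semi-algebraic and real analytic. Under hypothesis (c), each regularizer $r_i,s_i$ belongs to a standard catalogue (squared $\ell_2$, $\ell_1$, elastic net, $\ell_0$, indicator of a semi-algebraic convex set) whose members are semi-algebraic (except for the squared $\ell_2$, which is moreover polynomial). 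Under hypothesis (b), each activation $\sigma_i$ is either semi-algebraic (ReLU, leaky ReLU, polynomial, linear) or belongs to the o-minimal structure $\mathbb{R}_{\mathrm{an},\exp}$ (sigmoid, tanh, softplus); in either case the componentwise map $\bm{U}_i\mapsto\sigma_i(\bm{U}_i)$ and therefore $\|\bm{V}_i-\sigma_i(\bm{U}_i)\|_F^2$ is definable. The loss $\ell$ in (a) is likewise either semi-algebraic (squared, hinge) or definable in $\mathbb{R}_{\mathrm{an},\exp}$ (logistic, cross-entropy), and $\mathcal{R}_n$ is a finite sum of such losses composed with coordinate projections.

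The step that carries the most real content is invoking the closure properties of the o-minimal class: finite sums, finite products, compositions, and partial suprema/infima of definable functions remain definable, and the indicator of a definable set is a definable extended-real-valued function. Once every building block above is placed in the common structure $\mathbb{R}_{\mathrm{an},\exp}$ (which simultaneously contains all semi-algebraic functions and the global analytic functions we need), $\mathcal{L}$ is exhibited as a composition of definable functions and is therefore itself definable. Applying the Kurdyka--\L{}ojasiewicz inequality for definable functions then yields~\eqref{eq: KL inequality} at every point $\bar{\mathcal{P}}$ with a desingularizing $\phi(s)=cs^{1-\theta}$ for some $c>0$ and $\theta\in[0,1)$.

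The main obstacle I anticipate is the bookkeeping required to push mixed semi-algebraic and analytic ingredients into a single o-minimal structure: semi-algebraicity alone is insufficient because logistic and cross-entropy losses, together with smooth activations like sigmoid, involve $\exp$ and $\log$. One must therefore either split into two cases (purely semi-algebraic components, invoking Bolte--Daniilidis--Lewis; versus components using $\exp$, invoking van den Dries--Miller's work on $\mathbb{R}_{\mathrm{an},\exp}$) or uniformly work in $\mathbb{R}_{\mathrm{an},\exp}$ from the outset and cite that this structure is o-minimal and admits the K\L inequality. I would choose the latter route for brevity, and separately note that when the proximal-linear variant is used, the linearization in~\eqref{eq: V_N proximal-linear} preserves definability since the linearization coefficient $\nabla\mathcal{R}_n(\bm{V}_N^{k-1};\bm{Y})$ is constant at the K\L-verification point.
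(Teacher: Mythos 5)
Your proposal is correct in substance but follows a genuinely different route from the paper. The paper proceeds by an itemized case check: it verifies one by one that each loss (squared, logistic, cross-entropy, hinge), each activation (linear, polynomial, sigmoid, tanh, ReLU, leaky ReLU, softplus), and each regularizer ($\ell_2^2$, Frobenius-squared, $\ell_1$, elastic net, indicators of polyhedral/semi-algebraic sets, $\ell_0$) is either real analytic or semi-algebraic, then observes that every additive piece of $\mathcal{L}$ (including the quadratic penalties and $\chi_{\mathcal{MC}}$) inherits this, concludes that $\mathcal{L}$ is \emph{subanalytic}, and invokes Theorem~3.1 of \citet{bolte2007lojasiewicz} (a continuous subanalytic function with closed domain is K\L{}). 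You instead place all the building blocks in a single o-minimal structure $\mathbb{R}_{\mathrm{an},\exp}$ and invoke the K\L{} inequality for definable functions together with the closure of definability under finite sums, products, and compositions. The two approaches buy different things. The paper's version is more elementary and self-contained at the level of individual examples, but its gluing step --- ``a finite sum of functions, each real analytic or semi-algebraic, is subanalytic'' --- is exactly the delicate point, since the subanalytic class is not closed under addition of an unbounded real-analytic piece (e.g.\ one involving $\exp$) with a semi-algebraic piece without some localization; the paper leans on the bounded-iterate setting and the closed-domain/continuity hypotheses of the cited theorem to make this work. Your o-minimal formulation resolves that mixing issue cleanly and uniformly, at the cost of importing the heavier machinery of \citet{bolte2007lojasiewicz}-style definable K\L{} theory and van den Dries--Miller's o-minimality of $\mathbb{R}_{\mathrm{an},\exp}$. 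Your closing remark about the proximal-linear variant is a nice addition not present in the paper's proof of this lemma (the paper handles that variant only in the descent and subgradient lemmas). Either argument establishes the stated conclusion.
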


\begin{proof}[Proof of Lemma~\ref{lemma: KL property}]

\textbf{On the loss function $\ell$:} Since these losses are all nonnegative and continuous on their domains, they are proper lower semicontinuous and lower bounded by 0. In the following, we only verify that they are either real analytic or semi-algebraic.
\begin{enumerate}[topsep=0pt,itemsep=0pt,partopsep=0pt,left=0pt,label=(a\theenumi)]  
    \item If $\ell(t)$ is the squared $(t^{2})$ or exponential  $(e^{t})$ loss, then according to~\citet{krantz2002primer}, they are real analytic.
    \item If $\ell(t)$ is the logistic loss $(\log (1+\mathrm{e}^{-t}))$, since it is a composition of logarithm and exponential functions which both are real analytic, thus according to~\citet{krantz2002primer}, the logistic loss is real analytic.
    \item If $\ell(\bm{u} ; \bm{y})$ is the cross-entropy loss, \ie, given $\bm{y} \in \mathbb{R}^{d_{N}}, \ell(\bm{u} ; \bm{y})=-\frac{1}{d_{N}}[\langle\bm{y}, \log \widehat{\bm{y}}(\bm{u})\rangle+\langle\mathbf{1}-\bm{y}, \log (\mathbf{1}-\widehat{\bm{y}}(\bm{u}))\rangle]$, where $\log$ is performed elementwise and $(\widehat{\bm{y}}(\bm{u})_{i})_{1 \leq i \leq d_{N}}:=((1+\mathrm{e}^{-u_{i}})^{-1})_{1 \leq i \leq d_{N}}$ for any $\bm{u} \in \mathbb{R}^{d_{N}}$, which can be viewed as a linear combination of logistic functions, then by (a2) and \citep{krantz2002primer}, it is also analytic.
    \item If $\ell$ is the hinge loss, \ie, given $\bm{y} \in \mathbb{R}^{d_{N}}, \ell(\bm{u} ; \bm{y}):=\max \{0,1-\langle\bm{u}, \bm{y}\rangle\}$ for any $\bm{u} \in \mathbb{R}^{d_{N}}$, by~\citet{bochnak2013real}, it is semi-algebraic, because its graph is $\operatorname{cl}(\mathcal{D})$, the closure of the set $\mathcal{D}$, where $\mathcal{D}=\{(\bm{u}, z): 1-\langle\bm{u}, \bm{y}\rangle-z=0, \mathbf{1}-\bm{u} \succ 0\} \cup\{(\bm{u}, z): z=0,\langle\bm{u}, \bm{y}\rangle-1>0\}.$
\end{enumerate}

\textbf{On the activation function $\sigma_{i}$:} Since all the considered specific activations are continuous on their domains, they are Lipschitz continuous on any bounded set. In the following, we only need to check that they are either real analytic or semi-algebraic.
\begin{enumerate}[topsep=0pt,itemsep=0pt,partopsep=0pt,left=0pt,label=(b\theenumi)]  
    \item If $\sigma_{i}$ is a linear or polynomial function, then according to~\citep{krantz2002primer} is real analytic.
    \item If $\sigma_{i}(t)$ is sigmoid, $(1+\mathrm{e}^{-t})^{-1}$, or hyperbolic tangent, $\tanh(t):=\frac{\mathrm{e}^{t}-\mathrm{e}^{-t}}{\mathrm{e}^{t}+\mathrm{e}^{-t}}$, then the sigmoid function is a composition $g \circ h$ of these two functions where $g(u)=\frac{1}{1+u}, u>0$ and $h(t)=\mathrm{e}^{-t}$ (resp. $g(u)=1-\frac{2}{u+1}, u>0$ and $h(t)=\mathrm{e}^{2 t}$ in the hyperbolic tangent case). According to~\citep{krantz2002primer}, $g$ and $h$ in both cases are real analytic. Thus,  sigmoid and hyperbolic tangent functions are real analytic.
    \item If $\sigma_{i}$ is $\operatorname{ReLU}$, \ie, $\sigma_{i}(u):=\max \{0, u\}$, then we can show that $\operatorname{ReLU}$ is semi-algebraic since its graph is cl($\mathcal{D})$, the closure of the set $\mathcal{D}$, where $\mathcal{D}=\{(u, z): u-z=0, u>0\} \cup\{(u, z): z=0,-u>0\}.$
    \item Similar to the ReLU case, if $\sigma_{i}$ is leaky $\operatorname{ReLU}$, \ie, $\sigma_{i}(u)=u$ if $u>0$, otherwise $\sigma_{i}(u)=a u$ for some $a>0$, then we can similarly show that leaky ReLU is semi-algebraic since its graph is $\operatorname{cl}(\mathcal{D})$, the closure of the set $\mathcal{D}$, where $\mathcal{D}=\{(u, z): u-z=0, u>0\} \cup\{(u, z): a u-z=0,-u>0\}.$
    \item  If $\sigma_{i}$ is polynomial, then according to~\citep{krantz2002primer}, it is real analytic.
    \item  If $\sigma_{i}$ is softplus, \ie, $\sigma_{i}(u)=\frac{1}{t} \log (1+\mathrm{e}^{t u})$ for some $t>0$, since it is a composition of two analytic functions $\frac{1}{t} \log (1+u)$ and $\mathrm{e}^{t u}$, then according to~\citep{krantz2002primer}, it is real analytic.
\end{enumerate}

\textbf{On $r_{i}(\bm{W}_{i}), s_{i}(\bm{V}_{i})$:} By the specific forms of these regularizers, they are nonnegative, lower semicontinuous and continuous on their domain. In the following, we only need to verify they are either real analytic or semi-algebraic.
\begin{enumerate}[topsep=0pt,itemsep=0pt,partopsep=0pt,left=0pt,label=(c\theenumi)]  
\item the squared $\ell_{2}$ norm $\|\cdot\|_{2}^{2}$: According to~\citep{bochnak2013real}, the $\ell_{2}$ norm is semi-algebraic, so is its square where $g(t)=t^{2}$ and $h(\bm{W})=\|\bm{W}\|_{2}$.

\item the squared Frobenius norm $\|\cdot\|_{F}^{2}$: The squared Frobenius norm is semiaglebraic since it is a finite sum of several univariate squared functions.

\item the elementwise $\ell_1$ norm $\|\cdot\|_{1}$: Note that $\|\bm{W}\|_{1}=\sum_{i, j}|\bm{W}_{i j}|$ is the finite sum of absolute functions $h(t)=|t|$. According to~\citep{bochnak2013real}, the absolute value function is semi-algebraic since its graph is the closure of the following semi-algebraic set $\mathcal{D}=\{(t, s): t+s=0,-t>0\} \cup\{(t, s): t-s=0, t>0\}$. Thus, the elementwise 1-norm is semi-algebraic.

\item the elastic net: Note that the elastic net is the sum of the elementwise 1-norm and the squared Frobenius norm. Thus, by (c2), (c3), and~\citep{bochnak2013real}, the elastic net is semi-algebraic.

\item If $r_{i}$ or $s_{i}$ is the indicator function of nonnegative closed half-space or a closed interval (box constraints) or semi-algebraic sets,  by~\citep{bochnak2013real, bolte2014proximal}, any polyhedral set is semi-algebraic such as the nonnegative orthant $\mathbb{R}_{+}^{p \times q}=\{\bm{W} \in \mathbb{R}^{p \times q}, \bm{W}_{i j} \geq 0, \forall i, j\}$ and the closed interval. In addition, $\ell_0$ and $\ell_p(p>0)$ are semi-algebraic. Thus, $r_{i}$ or $s_{i}$ is semi-algebraic in this case.
\end{enumerate}
We first verify the K\L{} property of $\mathcal{L}$. From~\eqref{eq: final formulation}, we have
\begin{equation*}
\begin{aligned}
        &\mathcal{L}(\Btheta, \Btheta_{MC}, \bm{\mathcal{V}},\bm{\mathcal{U}})
        :=   \mathcal{R}_{n}\left(\bm{V}_{N} ; \bm{Y}\right)+\sum_{i=1}^{N} r_{i}\left(\bm{W}_{i}^{MC}\right)+\sum_{i=1}^{N} s_{i}\left(\bm{V}_{i}\right)+\frac{\rho}{2} \sum_{i=1}^{N}\|\bm{U}_{i}-\bm{W}_{i} \bm{V}_{i-1}\|_{F}^{2}  \\
    &+\frac{\gamma}{2} \sum_{i=1}^{N}\|\bm{V}_{i}-\sigma_{i}(\bm{U}_{i})\|_{F}^{2}+ \frac{\tau}{2} \sum_{i=1}^{N}\|\bm{W}_i - \bm{W}_i^{MC}\|_{F}^{2}+ \chi_{\mathcal{MC}}(\Btheta_{MC}),
\end{aligned}
\end{equation*}
which mainly includes the following types of functions, \ie,
\begin{equation*}
    \mathcal{R}_{n}\left(\bm{V}_{N} ; \bm{Y}\right),  r_{i}\left(\bm{W}_{i}^{MC}\right), s_{i}\left(\bm{V}_{i}\right),\|\bm{U}_{i}-\bm{W}_{i} \bm{V}_{i-1}\|_{F}^{2},\|\bm{V}_{i}-\sigma_{i}(\bm{U}_{i})\|_{F}^{2},\|\bm{W}_i - \bm{W}_i^{MC}\|_{F}^{2}, \chi_{\mathcal{MC}}(\Btheta_{MC}).
\end{equation*}
To verify the K\L{} property of the function $\mathcal{L}$, we consider the above functions one. 

On $\mathcal{R}_{n}(\bm{V}_{N} ; \bm{Y})$: Note that given the output data $\bm{Y}, \mathcal{R}_{n}(\bm{V}_{N} ; \bm{Y}):=\frac{1}{n} \sum_{j=1}^{n} \ell((\bm{V}_{N})_{: j}, \bm{y}_{j})$, where $\ell: \mathbb{R}^{d_{N}} \times \mathbb{R}^{d_{N}} \rightarrow$ $\mathbb{R}_{+} \cup\{0\}$ is some loss function. If $\ell$ is real analytic (resp. semi-algebraic), then $\mathcal{R}_{n}(\bm{V}_{N} ; \bm{Y})$ is real-analytic (resp. semi-algebraic).

On $\|\bm{V}_{i}-\sigma_{i}(\bm{U}_{i})\|_{F}^{2}$ : Note that $\|\bm{V}_{i}-\sigma_{i}(\bm{U}_{i})\|_{F}^{2}$ is a finite sum of simple functions of the form, $|v-\sigma_{i}(u)|^{2}$ for any $u, v \in \mathbb{R}$. If $\sigma_{i}$ is real analytic (resp. semi-algebraic), then $v-\sigma_{i}(u)$ is real analytic (resp. semi-algebraic), and further $|v-\sigma_{i}(u)|^{2}$ is also real analytic (resp. semi-algebraic) since $|v-\sigma_{i}(u)|^{2}$ can be viewed as the composition $g \circ h$ of these two functions where $g(t)=t^{2}$ and $h(u, v)=v-\sigma_{i}(u)$.

On $\|\bm{U}_{i}-\bm{W}_{i} \bm{V}_{i-1}\|_{F}^{2}$: Note that the function $\|\bm{U}_{i}-\bm{W}_{i} \bm{V}_{i-1}\|_{F}^{2}$ is a polynomial function with the variables $\bm{U}_{i}, \bm{W}_{i}$ and $\bm{V}_{i-1}$, and thus according to~\citep{krantz2002primer,bochnak2013real}, it is both real analytic and semi-algebraic.

On $r_{i}(\bm{W}_{i}), s_{i}(\bm{V}_{i}), \chi_{\mathcal{MC}}(\Btheta_{MC}):$ All $r_{i}$'s, $s_{i}$'s, $\chi_{\mathcal{MC}}(\Btheta_{MC})$'s that we discussed above are real analytic or semi-algebraic.

On $\|\bm{W}_i - \bm{W}_i^{MC}\|_{F}^{2}:$ Note that the function $\|\bm{W}_i - \bm{W}_i^{MC}\|_{F}^{2}$ is a polynomial function with the variables $\bm{W}_{i}, \bm{W}_{i}^{MC}$.

Since each part of the function $\mathcal{L}$ is either real analytic or semi-algebraic, $\mathcal{L}$ is a subanalytic function~\citep[p.43]{shiota1997geometry}. Furthermore, by the continuity, $\mathcal{L}$ is continuous in its domain. Therefore, $\mathcal{L}$ is a K\L{} function according to \citep[Theorem 3.1]{bolte2007lojasiewicz}.\footnote{Let $h: \mathbb{R}^{p} \rightarrow \mathbb{R} \cup\{+\infty\}$ be a subanalytic function with closed domain, and assume that $h$ is continuous on its domain, then $h$ is a K\L{} function.}
\end{proof}

Based on Lemma~\ref{lemma: sufficient decrease} and under the hypothesis that $\mathcal{L}$ is continuous on its domain and there exists a convergent subsequence, the continuity condition required in \citep{attouch2013convergence} holds naturally, \ie, there exists a subsequence $\{\mathcal{P}^{k_{j}}\}_{j \in \mathbb{N}}$ and $\mathcal{P}^{*}$ such that
\begin{equation}\label{eq: theorem proof 0}
    \mathcal{P}^{k_{j}} \rightarrow \mathcal{P}^{*} \quad \text{ and } \mathcal{L}(\mathcal{P}^{k_{j}}) \rightarrow \mathcal{L}(\mathcal{P}^{*}) \text{, as } j \rightarrow \infty
\end{equation}
Based on Lemmas~\ref{lemma: sufficient decrease},~\ref{lemma: subgradient low bound}, and~\eqref{eq: theorem proof 0}, we can justify the global convergence of $\mathcal{P}^{k}$ stated in Theorem~\ref{thm: global convergence}, following the proof idea in \citep{attouch2013convergence}. For the completeness of the proof, we still present the detailed proof as follows.

Before presenting the main proof, we establish a local convergence result of $\mathcal{P}^{k}$, \ie, the convergence of $\mathcal{P}^{k}$ when $\mathcal{P}^{0}$ is sufficiently close to some point $\mathcal{P}^{*}$. Specifically, let $(\varphi, \eta, U)$ be the associated parameters of the K\L{} property of $\mathcal{L}$ at $\mathcal{P}^{*}$, where $\varphi$ is a continuous concave function, $\eta$ is a positive constant, and $U$ is a neighborhood of $\mathcal{P}^{*}$. Let $\rho$ be some constant such that $\mathcal{N}(\mathcal{P}^{*}, \rho):=\{\mathcal{P}:\|\mathcal{P}-\mathcal{P}^{*}\|_{F} \leq \rho\} \subset U, \mathcal{B}:=\rho+\|\mathcal{P}^{*}\|_{F}$, and $L_{\mathcal{B}}$ be the uniform Lipschitz constant for $\sigma_{i}$, $i=1, \ldots, N-1$, within $\mathcal{N}(\mathcal{P}^{*}, \rho)$. Assume that $\mathcal{P}^{0}$ satisfies the following condition
\begin{equation}\label{eq: theorem proof 1}
    \frac{\bar{\delta}}{\lambda} \varphi(\mathcal{L}(\mathcal{P}^{0})-\mathcal{L}(\mathcal{P}^{*}))+3 \sqrt{\frac{\mathcal{L}(\mathcal{P}^{0})}{\lambda}}+\|\mathcal{P}^{0}-\mathcal{P}^{*}\|_{F}<\rho,
\end{equation}
where $\bar{\delta}=\delta \sqrt{4 N}$, $\lambda$ and $\delta$ are defined in Lemmas~\ref{lemma: sufficient decrease} and~\ref{lemma: subgradient low bound}, respectively.
\begin{lemma}[Local convergence]\label{lemma: local convergence}
     Under the conditions of Theorem 5, suppose that $\mathcal{P}^{0}$ satisfies the condition~\eqref{eq: theorem proof 1}, and $\mathcal{L}(\mathcal{P}^{k})>\mathcal{L}(\mathcal{P}^{*})$ for $k \in \mathbb{N}$, then
\begin{subequations}\label{eq: theorem proof 2}
    \begin{align}
        \sum_{i=1}^{k}\|\mathcal{P}^{i}-\mathcal{P}^{i-1}\|_{F} & \leq 2 \sqrt{\frac{\mathcal{L}(\mathcal{P}^{0})}{\lambda}}+\frac{\bar{\delta}}{\lambda} \varphi(\mathcal{L}(\mathcal{P}^{0})-\mathcal{L}(\mathcal{P}^{*})), \forall k \geq 1 \label{eq: theorem proof 2-1}\\
\mathcal{P}^{k} & \in \mathcal{N}(\mathcal{P}^{*}, \rho), \quad \forall k \in \mathbb{N}. \label{eq: theorem proof 2-2}
    \end{align}
\end{subequations}
As $k$ goes to infinity, \eqref{eq: theorem proof 2-1} yields
\begin{equation*}
    \sum_{i=1}^{\infty}\|\mathcal{P}^{i}-\mathcal{P}^{i-1}\|_{F}<\infty,
\end{equation*}
which implies the convergence of $\{\mathcal{P}^{k}\}_{k \in \mathbb{N}}$.
\end{lemma}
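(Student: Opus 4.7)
The plan is to apply the abstract Attouch--Bolte--Svaiter convergence scheme by combining the three ingredients already at hand: the sufficient decrease property (Lemma~\ref{lemma: sufficient decrease}), the subgradient bound (Lemma~\ref{lemma: subgradient low bound}), and the K\L{} property of $\mathcal{L}$ established in Lemma~\ref{lemma: KL property}. I will carry out an induction on $k$ that simultaneously establishes the summability bound~\eqref{eq: theorem proof 2-1} and the confinement bound~\eqref{eq: theorem proof 2-2}; the smallness condition~\eqref{eq: theorem proof 1} is exactly what is needed to close the induction.

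First I would dispose of the initial step separately, where the subgradient bound is not yet applicable. From sufficient decrease and $\mathcal{L}\geq 0$ we get $\lambda\|\mathcal{P}^{1}-\mathcal{P}^{0}\|_{F}^{2}\leq \mathcal{L}(\mathcal{P}^{0})-\mathcal{L}(\mathcal{P}^{1})\leq \mathcal{L}(\mathcal{P}^{0})$, so $\|\mathcal{P}^{1}-\mathcal{P}^{0}\|_{F}\leq \sqrt{\mathcal{L}(\mathcal{P}^{0})/\lambda}$, and~\eqref{eq: theorem proof 1} then places $\mathcal{P}^{1}$ inside $\mathcal{N}(\mathcal{P}^{*},\rho)\subset U$. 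Next, assuming $\mathcal{P}^{0},\dots,\mathcal{P}^{k}\in \mathcal{N}(\mathcal{P}^{*},\rho)$ with $\mathcal{L}(\mathcal{P}^{k})>\mathcal{L}(\mathcal{P}^{*})$, the K\L{} inequality applied at $\mathcal{P}^{k}$ together with Lemma~\ref{lemma: subgradient low bound} yields
\begin{equation*}
\varphi'\bigl(\mathcal{L}(\mathcal{P}^{k})-\mathcal{L}(\mathcal{P}^{*})\bigr)\cdot \bar{\delta}\,\|\mathcal{P}^{k}-\mathcal{P}^{k-1}\|_{F}\;\geq\;1.
\end{equation*}
Writing $\Delta_{k}:=\varphi(\mathcal{L}(\mathcal{P}^{k})-\mathcal{L}(\mathcal{P}^{*}))$, the concavity of $\varphi$ and the sufficient decrease inequality $\mathcal{L}(\mathcal{P}^{k})-\mathcal{L}(\mathcal{P}^{k+1})\geq \lambda\|\mathcal{P}^{k+1}-\mathcal{P}^{k}\|_{F}^{2}$ combine to give
\begin{equation*}
\Delta_{k}-\Delta_{k+1}\;\geq\;\varphi'\bigl(\mathcal{L}(\mathcal{P}^{k})-\mathcal{L}(\mathcal{P}^{*})\bigr)\cdot\bigl(\mathcal{L}(\mathcal{P}^{k})-\mathcal{L}(\mathcal{P}^{k+1})\bigr)\;\geq\;\frac{\lambda\,\|\mathcal{P}^{k+1}-\mathcal{P}^{k}\|_{F}^{2}}{\bar{\delta}\,\|\mathcal{P}^{k}-\mathcal{P}^{k-1}\|_{F}}.
\end{equation*}
An elementary AM--GM step then converts this into the telescope-ready form
\begin{equation*}
2\,\|\mathcal{P}^{k+1}-\mathcal{P}^{k}\|_{F}\;\leq\;\|\mathcal{P}^{k}-\mathcal{P}^{k-1}\|_{F}+\frac{\bar{\delta}}{\lambda}\bigl(\Delta_{k}-\Delta_{k+1}\bigr).
\end{equation*}

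Summing this inequality from $k=1$ to some index $K$, telescoping the $\Delta_{k}$ terms (using $\Delta_{K+1}\geq 0$), and adding back the already-controlled first increment $\|\mathcal{P}^{1}-\mathcal{P}^{0}\|_{F}\leq \sqrt{\mathcal{L}(\mathcal{P}^{0})/\lambda}$ delivers exactly the bound~\eqref{eq: theorem proof 2-1}. To close the induction I would invoke the triangle inequality $\|\mathcal{P}^{K+1}-\mathcal{P}^{*}\|_{F}\leq \|\mathcal{P}^{0}-\mathcal{P}^{*}\|_{F}+\sum_{i=1}^{K+1}\|\mathcal{P}^{i}-\mathcal{P}^{i-1}\|_{F}$ and bound the right-hand side by $\|\mathcal{P}^{0}-\mathcal{P}^{*}\|_{F}+3\sqrt{\mathcal{L}(\mathcal{P}^{0})/\lambda}+(\bar{\delta}/\lambda)\varphi(\mathcal{L}(\mathcal{P}^{0})-\mathcal{L}(\mathcal{P}^{*}))$, which is strictly smaller than $\rho$ by assumption~\eqref{eq: theorem proof 1}; this confirms $\mathcal{P}^{K+1}\in \mathcal{N}(\mathcal{P}^{*},\rho)$ and completes the inductive step. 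Letting $K\to\infty$ yields the stated absolute summability, and hence $\{\mathcal{P}^{k}\}$ is Cauchy and convergent.

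The main obstacle I anticipate is the bookkeeping in the induction: the K\L{} inequality can only be used once we know $\mathcal{P}^{k}\in U$, so each inductive step must produce a fresh a~priori bound on the displacement $\|\mathcal{P}^{K+1}-\mathcal{P}^{*}\|_{F}$ that remains below $\rho$. The three summands $3\sqrt{\mathcal{L}(\mathcal{P}^{0})/\lambda}$, $(\bar{\delta}/\lambda)\varphi(\mathcal{L}(\mathcal{P}^{0})-\mathcal{L}(\mathcal{P}^{*}))$, and $\|\mathcal{P}^{0}-\mathcal{P}^{*}\|_{F}$ appearing in~\eqref{eq: theorem proof 1} are engineered precisely to absorb (respectively) the initial increment bound, the telescoped K\L{} contribution, and the starting displacement; verifying that the constant $3$ in front of $\sqrt{\mathcal{L}(\mathcal{P}^{0})/\lambda}$ (rather than $2$ or $1$) is the correct factor coming from the AM--GM step is the one quantitative check requiring care. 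All other steps are standard manipulations once the descent--subgradient--K\L{} triad is in place.
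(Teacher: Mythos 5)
Your proposal is correct and follows essentially the same route as the paper's own proof: the same induction that keeps the iterates in $\mathcal{N}(\mathcal{P}^{*},\rho)$ so the subgradient bound applies, the same combination of the K\L{} inequality with concavity of $\varphi$ and sufficient decrease, the same AM--GM step producing the telescoping inequality, and the same accounting (the extra increment $\|\mathcal{P}^{K+1}-\mathcal{P}^{K}\|_{F}$ bounded by $\sqrt{\mathcal{L}(\mathcal{P}^{0})/\lambda}$) that yields the factor $3$ in condition~\eqref{eq: theorem proof 1}. No substantive differences to report.
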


\begin{proof}[Proof of Lemma~\ref{lemma: local convergence}]
    We will prove $\mathcal{P}^{k} \in \mathcal{N}(\mathcal{P}^{*}, \rho)$ by induction on $k$. It is obvious that $\mathcal{P}^{0} \in \mathcal{N}(\mathcal{P}^{*}, \rho)$. Thus, \eqref{eq: theorem proof 2-2} holds for $k=0$. For $k=1$, we have from~\eqref{eq: sufficient decrease} and the nonnegativeness of $\{\mathcal{L}(\mathcal{P}^{k})\}_{k \in \mathbb{N}}$ that
    \begin{equation*}
        \mathcal{L}(\mathcal{P}^{0}) \geq \mathcal{L}(\mathcal{P}^{0})-\mathcal{L}(\mathcal{P}^{1}) \geq \lambda\|\mathcal{P}^{0}-\mathcal{P}^{1}\|_{F}^{2},
    \end{equation*}
which implies $\|\mathcal{P}^{0}-\mathcal{P}^{1}\|_{F} \leq \sqrt{\frac{\mathcal{L}(\mathcal{P}^{0})}{\lambda}}$. Therefore,
\begin{equation*}
    \|\mathcal{P}^{1}-\mathcal{P}^{*}\|_{F} \leq\|\mathcal{P}^{0}-\mathcal{P}^{1}\|_{F}+\|\mathcal{P}^{0}-\mathcal{P}^{*}\|_{F} \leq \sqrt{\frac{\mathcal{L}(\mathcal{P}^{0})}{\lambda}}+\|\mathcal{P}^{0}-\mathcal{P}^{*}\|_{F},
\end{equation*}
which indicates $\mathcal{P}^{1} \in \mathcal{N}(\mathcal{P}^{*}, \rho)$.

Suppose that $\mathcal{P}^{k} \in \mathcal{N}(\mathcal{P}^{*}, \rho)$ for $0 \leq k \leq K$. We proceed to show that $\mathcal{P}^{K+1} \in \mathcal{N}(\mathcal{P}^{*}, \rho)$. Since $\mathcal{P}^{k} \in \mathcal{N}(\mathcal{P}^{*}, \rho)$ for $0 \leq k \leq K$, it implies that $\|\mathcal{P}^{k}\|_{F} \leq \mathcal{B}:=\rho+\mathcal{P}^{*}$ for $0 \leq k \leq K$. Thus, by Lemma~\ref{lemma: subgradient low bound}, for $1 \leq k \leq K$,
\begin{equation*}
    \operatorname{dist}(\mathbf{0}, \partial \mathcal{L}(\mathcal{P}^{k})) \leq \bar{\delta}\|\mathcal{P}^{k}-\mathcal{P}^{k-1}\|_{F},
\end{equation*}
which together with the K\L{} inequality~\eqref{eq: KL inequality} yields
\begin{equation} \label{eq: theorem proof 3}
    \frac{1}{\varphi^{\prime}(\mathcal{L}(\mathcal{P}^{k})-\mathcal{L}(\mathcal{P}^{*}))} \leq \bar{\delta}\|\mathcal{P}^{k}-\mathcal{P}^{k-1}\|_{F}
\end{equation}
By inequality~\eqref{eq: sufficient decrease}, the above inequality and the concavity of $\varphi$, for $k \geq 2$, the following holds
\begin{equation*}
  \begin{aligned}
\lambda\|\mathcal{P}^{k}-\mathcal{P}^{k-1}\|_{F}^{2} & \leq \mathcal{L}(\mathcal{P}^{k-1})-\mathcal{L}(\mathcal{P}^{k})=(\mathcal{L}(\mathcal{P}^{k-1})-\mathcal{L}(\mathcal{P}^{*}))-(\mathcal{L}(\mathcal{P}^{k})-\mathcal{L}(\mathcal{P}^{*})) \\
& \leq \frac{\varphi(\mathcal{L}(\mathcal{P}^{k-1})-\mathcal{L}(\mathcal{P}^{*}))-\varphi(\mathcal{L}(\mathcal{P}^{k})-\mathcal{L}(\mathcal{P}^{*}))}{\varphi^{\prime}(\mathcal{L}(\mathcal{P}^{k-1})-\mathcal{L}(\mathcal{P}^{*}))} \\
& \leq \bar{\delta}\|\mathcal{P}^{k-1}-\mathcal{P}^{k-2}\|_{F} \cdot[\varphi(\mathcal{L}(\mathcal{P}^{k-1})-\mathcal{L}(\mathcal{P}^{*}))-\varphi(\mathcal{L}(\mathcal{P}^{k})-\mathcal{L}(\mathcal{P}^{*}))],
\end{aligned}  
\end{equation*}
which implies
\begin{equation*}
   \|\mathcal{P}^{k}-\mathcal{P}^{k-1}\|_{F}^{2} \leq\|\mathcal{P}^{k-1}-\mathcal{P}^{k-2}\|_{F} \cdot \frac{\bar{\delta}}{\lambda}[\varphi(\mathcal{L}(\mathcal{P}^{k-1})-\mathcal{L}(\mathcal{P}^{*}))-\varphi(\mathcal{L}(\mathcal{P}^{k})-\mathcal{L}(\mathcal{P}^{*}))].
\end{equation*}
Taking the square root on both sides and using the inequality $2 \sqrt{\alpha \beta} \leq \alpha+\beta$, the above inequality implies
\begin{equation*}
    2\|\mathcal{P}^{k}-\mathcal{P}^{k-1}\|_{F} \leq\|\mathcal{P}^{k-1}-\mathcal{P}^{k-2}\|_{F}+\frac{\bar{\delta}}{\lambda}[\varphi(\mathcal{L}(\mathcal{P}^{k-1})-\mathcal{L}(\mathcal{P}^{*}))-\varphi(\mathcal{L}(\mathcal{P}^{k})-\mathcal{L}(\mathcal{P}^{*}))].
\end{equation*}
Summing the above inequality over $k$ from 2 to $K$ and adding $\|\mathcal{P}^{1}-\mathcal{P}^{0}\|_{F}$ to both sides, it yields
\begin{equation*}
    \|\mathcal{P}^{K}-\mathcal{P}^{K-1}\|_{F}+\sum_{k=1}^{K}\|\mathcal{P}^{k}-\mathcal{P}^{k-1}\|_{F} \leq 2\|\mathcal{P}^{1}-\mathcal{P}^{0}\|_{F}+\frac{\bar{\delta}}{\lambda}[\varphi(\mathcal{L}(\mathcal{P}^{0})-\mathcal{L}(\mathcal{P}^{*}))-\varphi(\mathcal{L}(\mathcal{P}^{K})-\mathcal{L}(\mathcal{P}^{*}))]
\end{equation*}
which implies
\begin{equation}\label{eq: theorem proof 4}
    \sum_{k=1}^{K}\|\mathcal{P}^{k}-\mathcal{P}^{k-1}\|_{F} \leq 2 \sqrt{\frac{\mathcal{L}(\mathcal{P}^{0})}{\lambda}}+\frac{\bar{\delta}}{\lambda} \varphi(\mathcal{L}(\mathcal{P}^{0})-\mathcal{L}(\mathcal{P}^{*})),
\end{equation}
and further,
\begin{equation*}
    \begin{aligned}
& \|\mathcal{P}^{K+1}-\mathcal{P}^{*}\|_{F} \leq\|\mathcal{P}^{K+1}-\mathcal{P}^{K}\|_{F}+\sum_{k=1}^{K}\|\mathcal{P}^{k}-\mathcal{P}^{k-1}\|_{F}+\|\mathcal{P}^{0}-\mathcal{P}^{*}\|_{F} \\
& \leq \sqrt{\frac{\mathcal{L}(\mathcal{P}^{K})-\mathcal{L}(\mathcal{P}^{K+1})}{\lambda}}+2 \sqrt{\frac{\mathcal{L}(\mathcal{P}^{0})}{\lambda}}+\frac{\bar{\delta}}{\lambda} \varphi(\mathcal{L}(\mathcal{P}^{0})-\mathcal{L}(\mathcal{P}^{*}))+\|\mathcal{P}^{0}-\mathcal{P}^{*}\|_{F} \\
& \leq 3 \sqrt{\frac{\mathcal{L}(\mathcal{P}^{0})}{\lambda}}+\frac{\bar{\delta}}{\lambda} \varphi(\mathcal{L}(\mathcal{P}^{0})-\mathcal{L}(\mathcal{P}^{*}))+\|\mathcal{P}^{0}-\mathcal{P}^{*}\|_{F}<\rho,
\end{aligned}
\end{equation*}
where the second inequality holds for~\eqref{eq: sufficient decrease} and~\eqref{eq: theorem proof 4}, the third inequality holds for $\mathcal{L}(\mathcal{P}^{K})-\mathcal{L}(\mathcal{P}^{K+1}) \leq \mathcal{L}(\mathcal{P}^{K}) \leq$ $\mathcal{L}(\mathcal{P}^{0})$. Thus, $\mathcal{P}^{K+1} \in \mathcal{N}(\mathcal{P}^{*}, \rho)$. Therefore, we prove this lemma.
\end{proof}
\begin{proof}[Proof of Theorem~\ref{thm: global convergence}]
    We prove the whole sequence convergence stated in Theorem~\ref{thm: global convergence} according to the following two cases.

\textbf{Case 1:} $\mathcal{L}(\mathcal{P}^{k_{0}})=\mathcal{L}(\mathcal{P}^{*})$ at some $k_{0}$. In this case, by Lemma~\ref{lemma: sufficient decrease}, $\mathcal{P}^{k}=\mathcal{P}^{k_{0}}=\mathcal{P}^{*}$ holds for all $k \geq k_{0}$, which implies the convergence of $\mathcal{P}^{k}$ to a limit point $\mathcal{P}^{*}$.

\textbf{Case 2:} $\mathcal{L}(\mathcal{P}^{k})>\mathcal{L}(\mathcal{P}^{*})$ for all $k \in \mathbb{N}$. In this case, since $\mathcal{P}^{*}$ is a limit point and $\mathcal{L}(\mathcal{P}^{k}) \rightarrow \mathcal{L}(\mathcal{P}^{*})$, by Theorem 4 , there must exist an integer $k_{0}$ such that $\mathcal{P}^{k_{0}}$ is sufficiently close to $\mathcal{P}^{*}$ as required in Lemma~\ref{lemma: local convergence} (see the inequality~\eqref{eq: theorem proof 1}). Therefore, the whole sequence $\{\mathcal{P}^{k}\}_{k \in \mathbb{N}}$ converges according to Lemma~\ref{lemma: local convergence}. Since $\mathcal{P}^{*}$ is a limit point of $\{\mathcal{P}^{k}\}_{k \in \mathbb{N}}$, we have $\mathcal{P}^{k} \rightarrow \mathcal{P}^{*}$.

Next, we show $\mathcal{P}^{*}$ is a critical point of $\mathcal{L}$. By  $\lim _{k \rightarrow \infty}\|\mathcal{P}^{k}-\mathcal{P}^{k-1}\|_{F}=0$. Furthermore, by Lemma~\ref{lemma: subgradient low bound}, we have
\begin{equation*}
    \lim _{k \rightarrow \infty} \operatorname{dist}(\mathbf{0}, \partial \mathcal{L}(\mathcal{P}^{k}))=0,
\end{equation*}
which implies that any limit point is a critical point. Therefore, we prove the global convergence of the sequence generated by Algorithm~\ref{alg: NN-BCD}.

The convergence to a global minimum is a straightforward variant of Lemma~\ref{lemma: local convergence}.

The $\mathcal{O}(1 / k)$ rate of convergence is a direct claim according to the proof of Lemma~\ref{lemma: subgradient low bound} and $\lim _{k \rightarrow \infty}\|\mathcal{P}^{k}-\mathcal{P}^{k-1}\|_{F}=0$.
\end{proof}

\begin{remark}
As one important future direction, the stochastic version of the NN-BCD algorithm should be studied by following the works in stochastic programming \cite{xu2024ensemble,zhang2024sampling, zhang2024stochastic}.   
\end{remark}

\section{Additional Experiments} \label{appendix: additional experiments}
In Section~\ref{appendix: case study HAR 5 layers}, a deeper NN structure is considered to demonstrate the performance of our method. Section~\ref{subsec: different gradient-based methods} compares our method with different gradient-based optimizers for uncompressed MLP (CR = 1).
\subsection{Additional Experiments on Tensorized MLP} \label{appendix: case study HAR 5 layers}
We consider the NN structure with five hidden layers. Specifically, the number of neurons in each layer is 561, 1024, 1024, 1024, 512, 512, and 6 (including the input and output layers). The same setup in Section~\ref{subsec: case study HAR} is applied in this experiment.

\begin{table}[!htbp]
  \centering
 \caption{Results of NN-BCD algorithm with different compression ratios (MLP-5 HAR).}
 \label{tab: HAR5Layer CR ratio}
  \begin{tabular}{lrrrrrr}
    \toprule
    & \multicolumn{2}{c}{Training Loss~\eqref{eq: final formulation}}                     & \multicolumn{2}{c}{Training Accuracy}                       & \multicolumn{2}{c}{Test Accuracy}                           \\ \midrule
 CR    & Mean   & Std   & Mean  & Std   & Mean  & Std  \\ 
 \midrule
0.0536	&0.6864	&0.0184	&0.9263	&0.0098	&0.9134	&0.0088\\
0.1112	&0.0312	&0.0008	&0.9595	&0.0042	&0.9384	&0.0070\\
0.1767	&0.0135	&0.0002	&0.9820	&0.0013	&0.9538	&0.0047\\
0.4212	&0.0119	&0.0002	&0.9895	&0.0010	&0.9609	&0.0023\\
0.6742	&0.0071	&0.0001	&0.9939	&0.0009	&0.9642	&0.0034\\
1.0000	&0.0048	&0.0001	&0.9943	&0.0008	&0.9632	&0.0026\\
    \bottomrule
  \end{tabular}
\end{table}
\begin{figure}[!htbp]
\centering
	\subfloat[Training loss]{\includegraphics[width=0.32\textwidth]{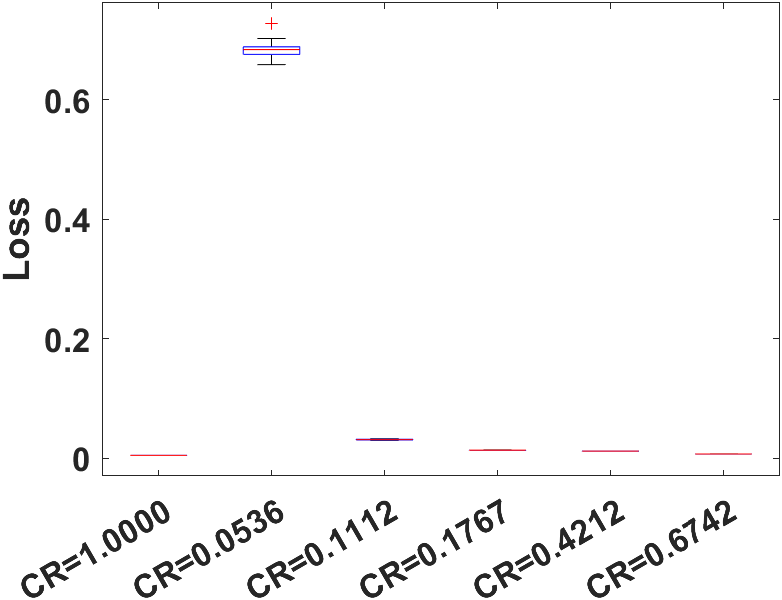} \label{subfig:HAR5Layerbox loss}}
    \subfloat[Training Accuracy]{\includegraphics[width=0.33\textwidth]{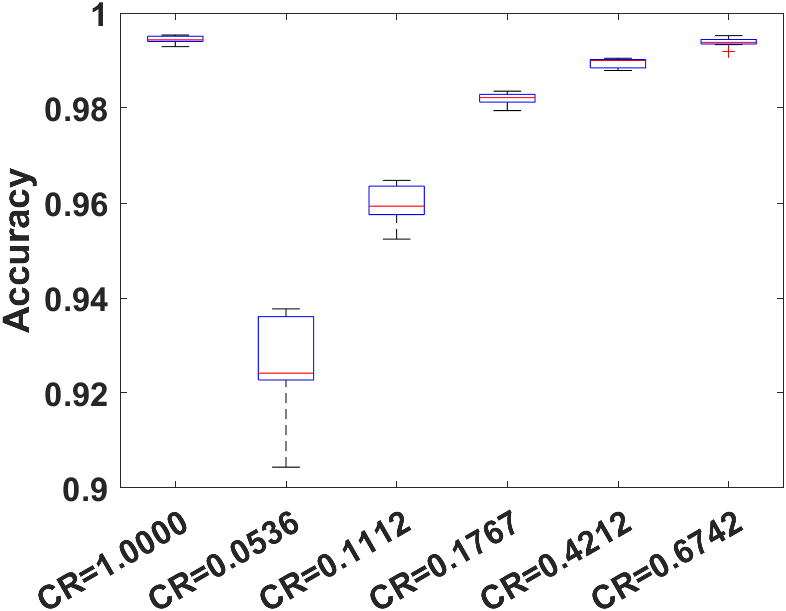} \label{subfig:HAR5Layerbox train acc}}
    \subfloat[Test Accuracy]{\includegraphics[width=0.33\textwidth]{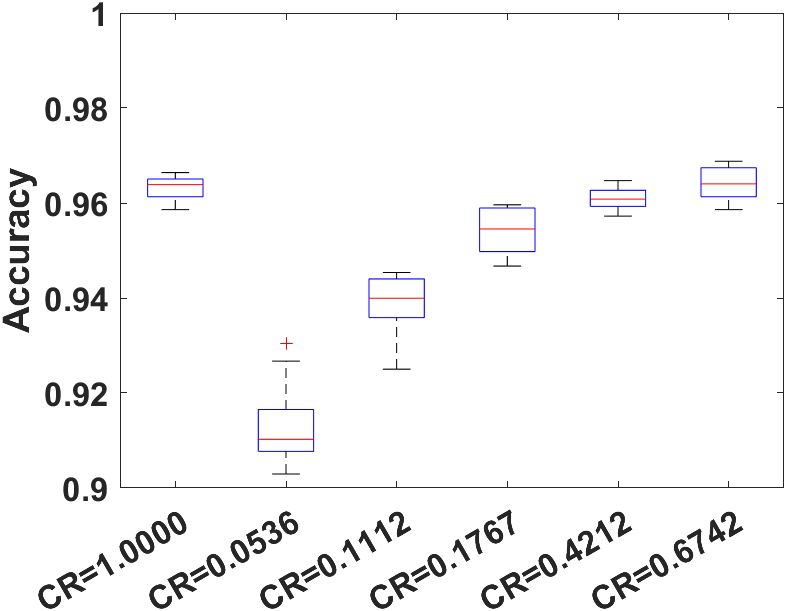} \label{subfig:HAR5Layerbox test acc}}
\caption{The boxplots among ten repetitions with different compression ratios (MLP-5 HAR): (a) training loss; (b) training accuracy; (c) test accuracy.} 
\label{fig: HAR5LayerBox}
\end{figure}
Each experiment was conducted repeatedly with different CRs. The total loss, training accuracy, and test accuracy are presented in Table~\ref{tab: HAR5Layer CR ratio}. Our method demonstrates that the test accuracy reaches 0.9538 with CR = 0.1767, and further increases to 0.9642 when the CR = 0.6742. The result indicates that better training and test accuracy can be obtained when a larger compression ratio (less than 1) is used. This trend is clearly illustrated by the escalating mean of the test accuracy as the CR increases. Figure~\ref{fig: HAR5LayerBox} illustrates boxplots for ten repetitions using different compression ratios. Except for CR = 0.0536, the standard deviation for total loss, training accuracy, and test accuracy across the ten repetitions are generally small.

\begin{figure}[!htbp]
\centering
	\subfloat[Training loss]{\includegraphics[width=0.32\textwidth]{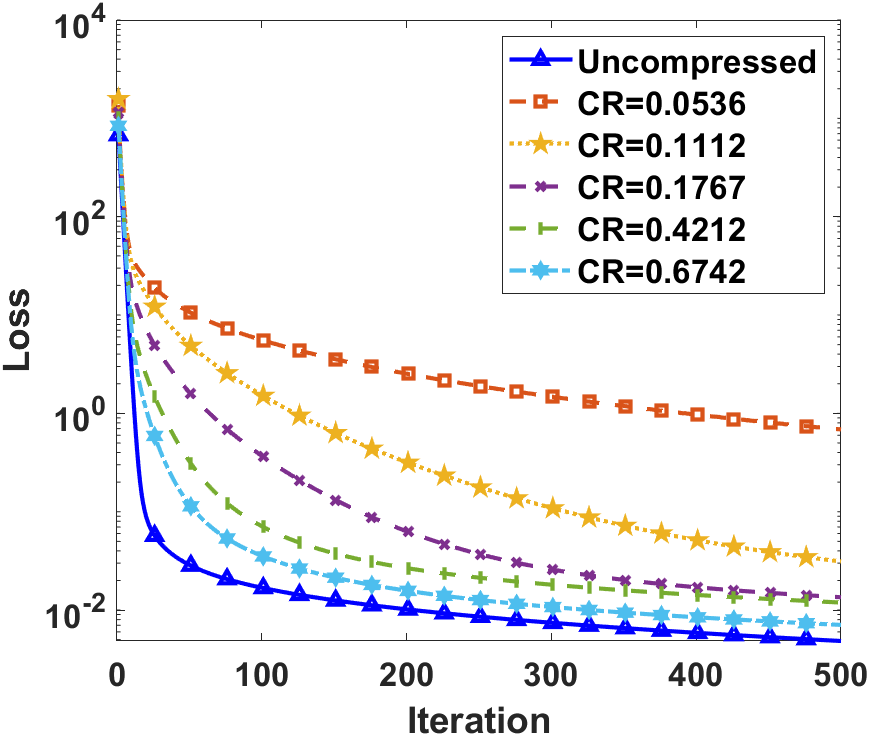} \label{subfig: HAR5Layer training loss}}
     \subfloat[Training Accuracy]{\includegraphics[width=0.32\textwidth]{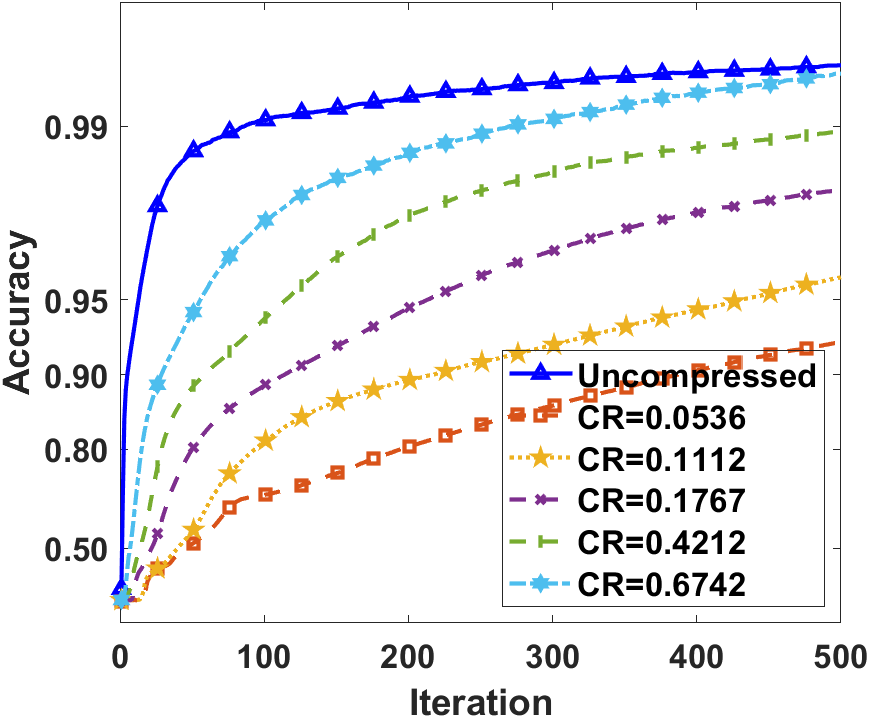} \label{subfig: HAR5Layer training acc}}
    \subfloat[Test Accuracy]{\includegraphics[width=0.32\textwidth]{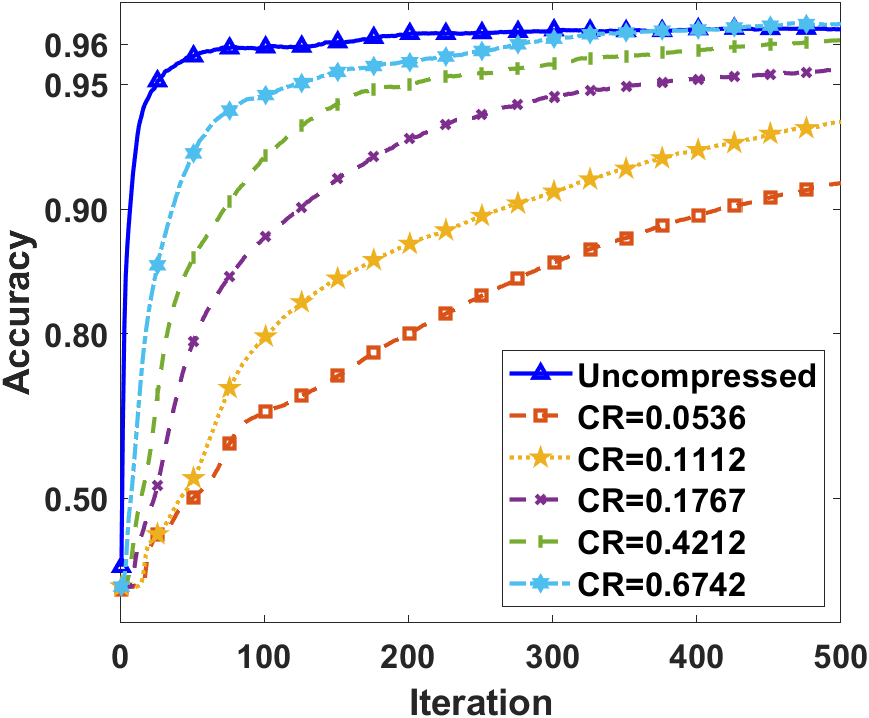} \label{subfig: HAR5Layer test acc}}
\caption{The convergence analysis of NN-BCD algorithm with different compression ratios (MLP-5 HAR): (a) training loss; (b) training accuracy; (c) test accuracy.} 
\label{fig: HAR5Layer convergence}
\end{figure}
The monotone decreasing trend in our loss function~\eqref{eq: final formulation} is clearly demonstrated in Figure~\ref{subfig: HAR5Layer training loss}, which presents the training loss associated with different CRs. The training and test accuracy for various CRs are shown in Figure~\ref{subfig: HAR5Layer training acc} and Figure~\ref{subfig: HAR5Layer test acc}, respectively. For all CRs, the training and test accuracy show the monotone increasing trend, which is an interesting observation even though it is not theoretically guaranteed. 
\begin{figure}[!htbp]
\centering
 \subfloat[Effect of Different Hyperparameters]{\includegraphics[width=0.5\textwidth]{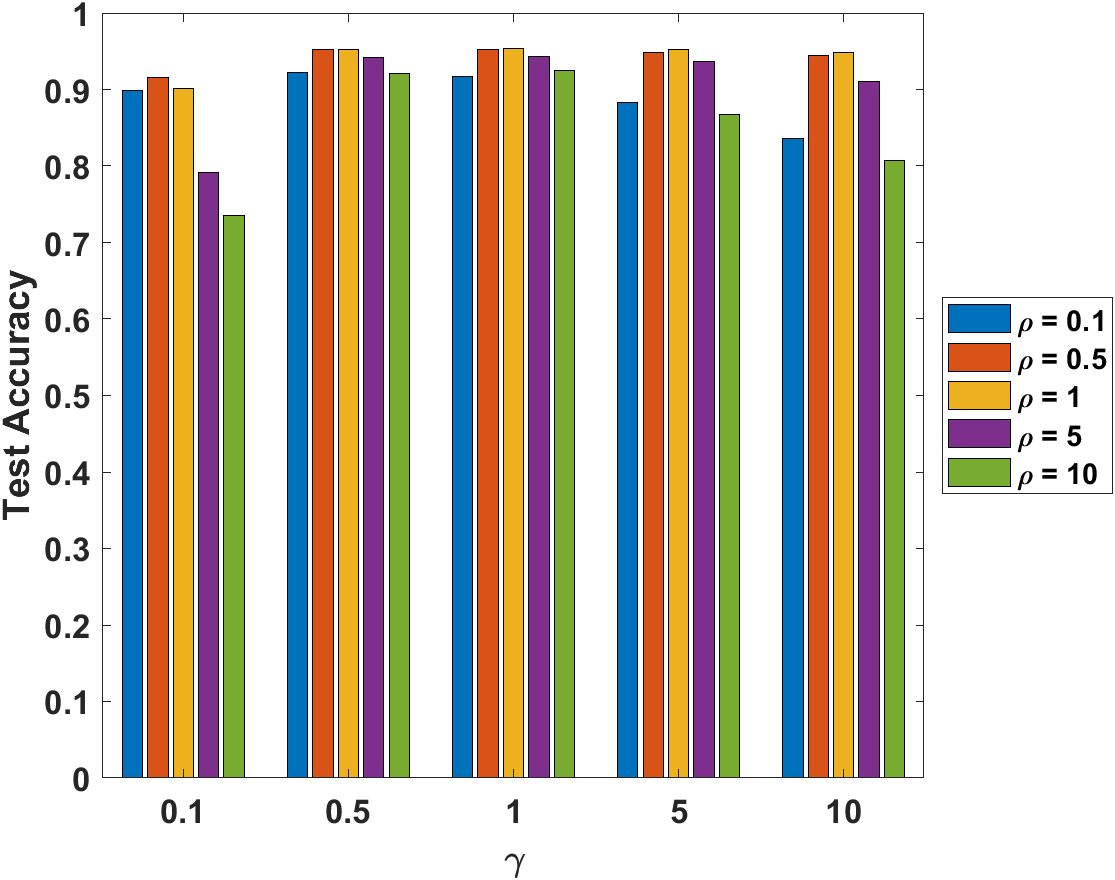} \label{subfig: HAR5LayerDiffGammaRho}}
 \subfloat[Stability of Initialization]{\includegraphics[width=0.5\textwidth]{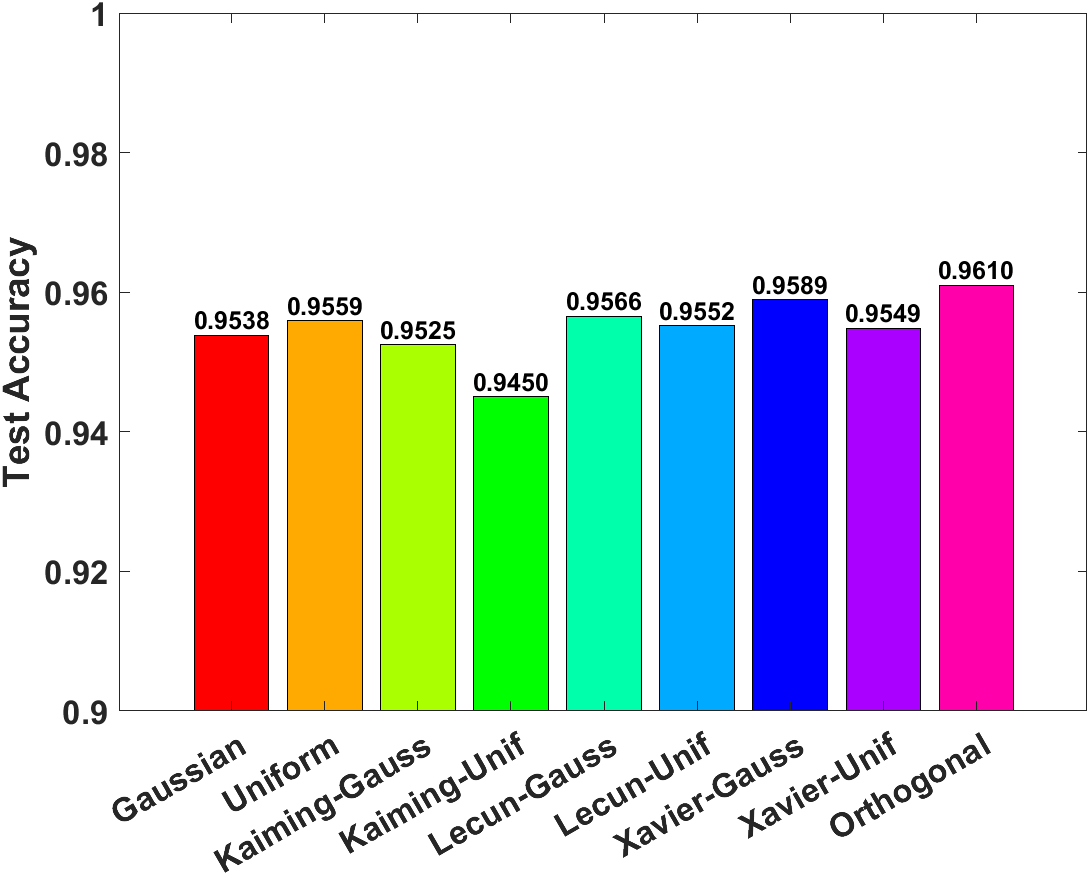} \label{subfig: HAR5LayerDiffInit}}
\caption{Effect of different hyperparameters and weight initialization of NN-BCD algorithm (MLP-5 HAR) when CR = 0.1767: (a) Effect of hyperparameters of NN-BCD; (b) Stability of different weight initialization methods.} 
\label{fig: HAR5Layer stability}
\end{figure} 

Figure~\ref{subfig: HAR5LayerDiffGammaRho} shows a relatively steady performance of our method with different scales of hyperparameters. However, when $\gamma$ and $\rho$ are set to 0.1 and 10, it performs a bit worse than the previous setup. Figure~\ref{subfig: HAR5LayerDiffInit} shows the test accuracy of different weight initialization methods, where the highest test accuracy = 0.9610 by using orthogonal weight initialization and the lowest test accuracy = 0.9450 by using Kaiming-uniform. Overall, our method performs pretty well on different weight initialization methods.

\subsection{Results of Different Gradient-Based Methods} \label{subsec: different gradient-based methods}
In this subsection, different gradient-based optimizers are compared with our method with CR = 1. The same setup in Section~\ref{subsec: case study CNN} is utilized. Specifically, the training accuracy and test accuracy of Adadelta~\citep{Zeiler2012Adadelta}, Adagrad~\citep{duchi2011adaptive}, Adam~\citep{kingma2014adam}, Adamax~\citep{kingma2014adam}, Nadam~\citep{Timothy2016Incorporating}, and RMSprop~\citep{tieleman2012lecture} are shown in Figure~\ref{fig: MnistCNN different optimizers}. The results demonstrate that most of these gradient-based methods have a similar training and test accuracy trend except Adadelta, Adagrad, and SGD. The performance of our method is very close to Adam, Adamax, Nadam, and RMSprop. However, these gradient-based optimizers have a lot of fluctuations for both training and test accuracy while our proposed method is very stable.
\begin{figure}[!htbp]
\centering
    \subfloat[Training Accucacy]{\includegraphics[width=0.5\textwidth]{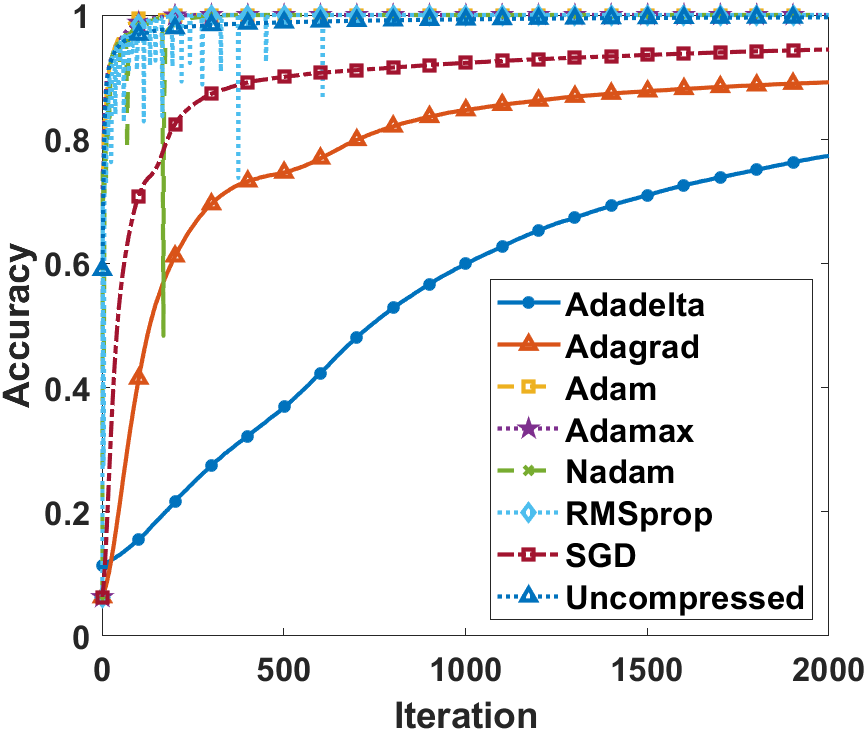} \label{subfig:MnistCNN_diff_optimizers_train_acc}}
    \subfloat[Test Accuracy]{\includegraphics[width=0.5\textwidth]{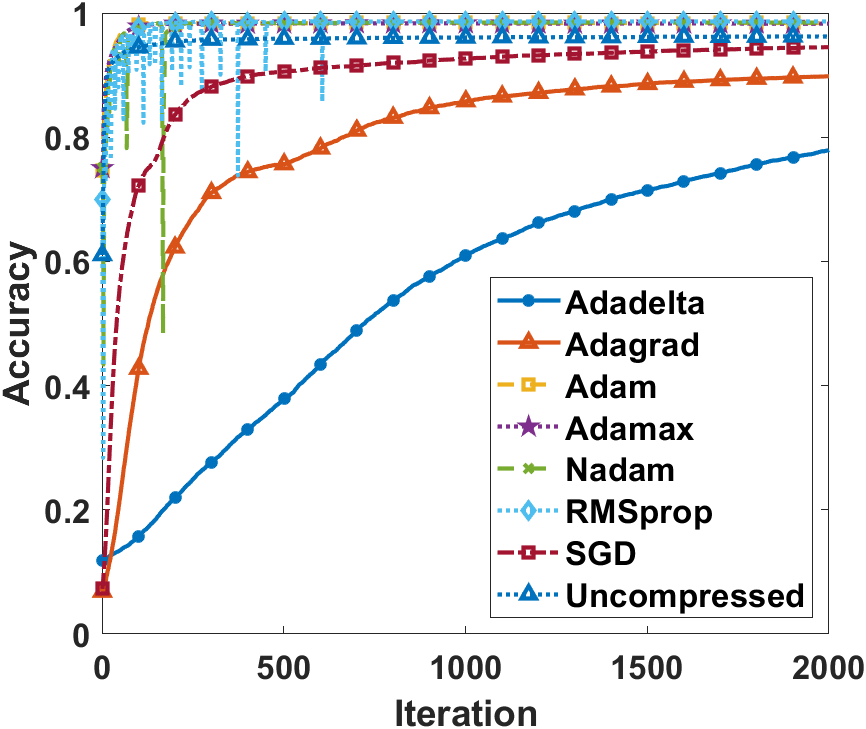} \label{subfig:MnistCNN_diff_optimizers_test_acc}}
\caption{The convergence analysis of different gradient-based methods (CNN MNIST): (a) training accuracy; (b) test accuracy.} 
\label{fig: MnistCNN different optimizers}
\end{figure}

\bibliography{sample}

\begin{thebibliography}{88}
\providecommand{\natexlab}[1]{#1}
\providecommand{\url}[1]{\texttt{#1}}
\expandafter\ifx\csname urlstyle\endcsname\relax
  \providecommand{\doi}[1]{doi: #1}\else
  \providecommand{\doi}{doi: \begingroup \urlstyle{rm}\Url}\fi

\bibitem[Alford et~al.(2018)Alford, Robinett, Milechin, and
  Kepner]{alford2018pruned}
Simon Alford, Ryan Robinett, Lauren Milechin, and Jeremy Kepner.
\newblock Pruned and structurally sparse neural networks.
\newblock In \emph{2018 IEEE MIT Undergraduate Research Technology Conference
  (URTC)}, pages 1--4. IEEE, 2018.

\bibitem[Attouch and Bolte(2009)]{attouch2009convergence}
Hedy Attouch and J{\'e}r{\^o}me Bolte.
\newblock On the convergence of the proximal algorithm for nonsmooth functions
  involving analytic features.
\newblock \emph{Mathematical Programming}, 116\penalty0 (1-2):\penalty0 5--16,
  2009.

\bibitem[Attouch et~al.(2010)Attouch, Bolte, Redont, and
  Soubeyran]{attouch2010proximal}
H{\'e}dy Attouch, J{\'e}r{\^o}me Bolte, Patrick Redont, and Antoine Soubeyran.
\newblock Proximal alternating minimization and projection methods for
  nonconvex problems: An approach based on the kurdyka-{\l}ojasiewicz
  inequality.
\newblock \emph{Mathematics of operations research}, 35\penalty0 (2):\penalty0
  438--457, 2010.

\bibitem[Attouch et~al.(2013)Attouch, Bolte, and
  Svaiter]{attouch2013convergence}
Hedy Attouch, J{\'e}r{\^o}me Bolte, and Benar~Fux Svaiter.
\newblock Convergence of descent methods for semi-algebraic and tame problems:
  proximal algorithms, forward--backward splitting, and regularized
  gauss--seidel methods.
\newblock \emph{Mathematical Programming}, 137\penalty0 (1-2):\penalty0
  91--129, 2013.

\bibitem[Blumensath and Davies(2009)]{blumensath2009iterative}
Thomas Blumensath and Mike~E Davies.
\newblock Iterative hard thresholding for compressed sensing.
\newblock \emph{Applied and computational harmonic analysis}, 27\penalty0
  (3):\penalty0 265--274, 2009.

\bibitem[Bobra et~al.(2014)Bobra, Sun, Hoeksema, Turmon, Liu, Hayashi, Barnes,
  and Leka]{bobra2014helioseismic}
Monica~G Bobra, Xudong Sun, J~Todd Hoeksema, M~Turmon, Yang Liu, Keiji Hayashi,
  Graham Barnes, and KD~Leka.
\newblock The helioseismic and magnetic imager (hmi) vector magnetic field
  pipeline: Sharps--space-weather hmi active region patches.
\newblock \emph{Solar Physics}, 289:\penalty0 3549--3578, 2014.

\bibitem[Bochnak et~al.(2013)Bochnak, Coste, and Roy]{bochnak2013real}
Jacek Bochnak, Michel Coste, and Marie-Fran{\c{c}}oise Roy.
\newblock \emph{Real algebraic geometry}, volume~36.
\newblock Springer Science \& Business Media, 2013.

\bibitem[Bolte et~al.(2007)Bolte, Daniilidis, and Lewis]{bolte2007lojasiewicz}
J{\'e}r{\^o}me Bolte, Aris Daniilidis, and Adrian Lewis.
\newblock The {\l}ojasiewicz inequality for nonsmooth subanalytic functions
  with applications to subgradient dynamical systems.
\newblock \emph{SIAM Journal on Optimization}, 17\penalty0 (4):\penalty0
  1205--1223, 2007.

\bibitem[Bolte et~al.(2014)Bolte, Sabach, and Teboulle]{bolte2014proximal}
J{\'e}r{\^o}me Bolte, Shoham Sabach, and Marc Teboulle.
\newblock Proximal alternating linearized minimization for nonconvex and
  nonsmooth problems.
\newblock \emph{Mathematical Programming}, 146\penalty0 (1-2):\penalty0
  459--494, 2014.

\bibitem[Boulila et~al.(2022)Boulila, Driss, Alshanqiti, Al-Sarem, Saeed, and
  Krichen]{boulila2022weight}
Wadii Boulila, Maha Driss, Eman Alshanqiti, Mohamed Al-Sarem, Faisal Saeed, and
  Moez Krichen.
\newblock Weight initialization techniques for deep learning algorithms in
  remote sensing: Recent trends and future perspectives.
\newblock \emph{Advances on Smart and Soft Computing: Proceedings of ICACIn
  2021}, pages 477--484, 2022.

\bibitem[Boyd et~al.(2004)Boyd, Boyd, and Vandenberghe]{boyd2004convex}
Stephen Boyd, Stephen~P Boyd, and Lieven Vandenberghe.
\newblock \emph{Convex optimization}.
\newblock Cambridge university press, 2004.

\bibitem[Brown et~al.(2020)Brown, Mann, Ryder, Subbiah, Kaplan, Dhariwal,
  Neelakantan, Shyam, Sastry, Askell, et~al.]{brown2020language}
Tom Brown, Benjamin Mann, Nick Ryder, Melanie Subbiah, Jared~D Kaplan, Prafulla
  Dhariwal, Arvind Neelakantan, Pranav Shyam, Girish Sastry, Amanda Askell,
  et~al.
\newblock Language models are few-shot learners.
\newblock \emph{Advances in neural information processing systems},
  33:\penalty0 1877--1901, 2020.

\bibitem[Bungert et~al.(2022)Bungert, Roith, Tenbrinck, and
  Burger]{JMLR:v23:21-0545}
Leon Bungert, Tim Roith, Daniel Tenbrinck, and Martin Burger.
\newblock A bregman learning framework for sparse neural networks.
\newblock \emph{Journal of Machine Learning Research}, 23\penalty0
  (192):\penalty0 1--43, 2022.
\newblock URL \url{http://jmlr.org/papers/v23/21-0545.html}.

\bibitem[Carreira-Perpin{\'a}n(2017)]{carreira2017model}
Miguel~A Carreira-Perpin{\'a}n.
\newblock Model compression as constrained optimization, with application to
  neural nets. part i: General framework.
\newblock \emph{arXiv preprint arXiv:1707.01209}, 2017.

\bibitem[Carreira-Perpin{\'a}n and Idelbayev(2017)]{carreira2017model2}
Miguel~A Carreira-Perpin{\'a}n and Yerlan Idelbayev.
\newblock Model compression as constrained optimization, with application to
  neural nets. part ii: Quantization.
\newblock \emph{arXiv preprint arXiv:1707.04319}, 2017.

\bibitem[Carreira-Perpin{\'a}n and Idelbayev(2018)]{carreira2018learning}
Miguel~A Carreira-Perpin{\'a}n and Yerlan Idelbayev.
\newblock “learning-compression” algorithms for neural net pruning.
\newblock In \emph{Proceedings of the IEEE Conference on Computer Vision and
  Pattern Recognition}, pages 8532--8541, 2018.

\bibitem[Carreira-Perpi{\~n}{\'a}n and Idelbayev(2021)]{carreira2021model}
Miguel~{\'A} Carreira-Perpi{\~n}{\'a}n and Yerlan Idelbayev.
\newblock Model compression as constrained optimization, with application to
  neural nets. part v: combining compressions.
\newblock \emph{arXiv preprint arXiv:2107.04380}, 2021.

\bibitem[Chang et~al.(2023)Chang, Wang, Wang, Wu, Zhu, Chen, Yang, Yi, Wang,
  Wang, et~al.]{chang2023survey}
Yupeng Chang, Xu~Wang, Jindong Wang, Yuan Wu, Kaijie Zhu, Hao Chen, Linyi Yang,
  Xiaoyuan Yi, Cunxiang Wang, Yidong Wang, et~al.
\newblock A survey on evaluation of large language models.
\newblock \emph{arXiv preprint arXiv:2307.03109}, 2023.

\bibitem[Chen and Ran(2019)]{chen2019deep}
Jiasi Chen and Xukan Ran.
\newblock Deep learning with edge computing: A review.
\newblock \emph{Proceedings of the IEEE}, 107\penalty0 (8):\penalty0
  1655--1674, 2019.

\bibitem[Chijiwa et~al.(2021)Chijiwa, Yamaguchi, Ida, Umakoshi, and
  Inoue]{chijiwa2021pruning}
Daiki Chijiwa, Shin'ya Yamaguchi, Yasutoshi Ida, Kenji Umakoshi, and Tomohiro
  Inoue.
\newblock Pruning randomly initialized neural networks with iterative
  randomization.
\newblock \emph{Advances in neural information processing systems},
  34:\penalty0 4503--4513, 2021.

\bibitem[Davis et~al.(2020)Davis, Drusvyatskiy, Kakade, and
  Lee]{davis2020stochastic}
Damek Davis, Dmitriy Drusvyatskiy, Sham Kakade, and Jason~D Lee.
\newblock Stochastic subgradient method converges on tame functions.
\newblock \emph{Foundations of computational mathematics}, 20\penalty0
  (1):\penalty0 119--154, 2020.

\bibitem[Donoho(1995)]{donoho1995noising}
David~L Donoho.
\newblock De-noising by soft-thresholding.
\newblock \emph{IEEE transactions on information theory}, 41\penalty0
  (3):\penalty0 613--627, 1995.

\bibitem[Duchi et~al.(2011)Duchi, Hazan, and Singer]{duchi2011adaptive}
John Duchi, Elad Hazan, and Yoram Singer.
\newblock Adaptive subgradient methods for online learning and stochastic
  optimization.
\newblock \emph{Journal of machine learning research}, 12\penalty0 (7), 2011.

\bibitem[Esteva et~al.(2019)Esteva, Robicquet, Ramsundar, Kuleshov, DePristo,
  Chou, Cui, Corrado, Thrun, and Dean]{esteva2019guide}
Andre Esteva, Alexandre Robicquet, Bharath Ramsundar, Volodymyr Kuleshov, Mark
  DePristo, Katherine Chou, Claire Cui, Greg Corrado, Sebastian Thrun, and Jeff
  Dean.
\newblock A guide to deep learning in healthcare.
\newblock \emph{Nature medicine}, 25\penalty0 (1):\penalty0 24--29, 2019.

\bibitem[Garipov et~al.(2016)Garipov, Podoprikhin, Novikov, and
  Vetrov]{garipov2016ultimate}
Timur Garipov, Dmitry Podoprikhin, Alexander Novikov, and Dmitry Vetrov.
\newblock Ultimate tensorization: compressing convolutional and fc layers
  alike.
\newblock \emph{arXiv preprint arXiv:1611.03214}, 2016.

\bibitem[Gou et~al.(2021)Gou, Yu, Maybank, and Tao]{gou2021knowledge}
Jianping Gou, Baosheng Yu, Stephen~J Maybank, and Dacheng Tao.
\newblock Knowledge distillation: A survey.
\newblock \emph{International Journal of Computer Vision}, 129:\penalty0
  1789--1819, 2021.

\bibitem[Guo et~al.(2016)Guo, Yao, and Chen]{guo2016dynamic}
Yiwen Guo, Anbang Yao, and Yurong Chen.
\newblock Dynamic network surgery for efficient dnns.
\newblock \emph{Advances in neural information processing systems}, 29, 2016.

\bibitem[Han et~al.(2015)Han, Pool, Tran, and Dally]{han2015learning}
Song Han, Jeff Pool, John Tran, and William Dally.
\newblock Learning both weights and connections for efficient neural network.
\newblock \emph{Advances in neural information processing systems}, 28, 2015.

\bibitem[Hanin(2018)]{hanin2018neural}
Boris Hanin.
\newblock Which neural net architectures give rise to exploding and vanishing
  gradients?
\newblock \emph{Advances in neural information processing systems}, 31, 2018.

\bibitem[He et~al.(2016)He, Zhang, Ren, and Sun]{he2016deep}
Kaiming He, Xiangyu Zhang, Shaoqing Ren, and Jian Sun.
\newblock Deep residual learning for image recognition.
\newblock In \emph{Proceedings of the IEEE conference on computer vision and
  pattern recognition}, pages 770--778, 2016.

\bibitem[Hoefler et~al.(2021)Hoefler, Alistarh, Ben-Nun, Dryden, and
  Peste]{hoefler2021sparsity}
Torsten Hoefler, Dan Alistarh, Tal Ben-Nun, Nikoli Dryden, and Alexandra Peste.
\newblock Sparsity in deep learning: Pruning and growth for efficient inference
  and training in neural networks.
\newblock \emph{The Journal of Machine Learning Research}, 22\penalty0
  (1):\penalty0 10882--11005, 2021.

\bibitem[Hu et~al.(2023)Hu, Zhu, and Yan]{hu2023artificial}
Tiechuan Hu, Wenbo Zhu, and Yuqi Yan.
\newblock Artificial intelligence aspect of transportation analysis using large
  scale systems.
\newblock In \emph{Proceedings of the 2023 6th Artificial Intelligence and
  Cloud Computing Conference}, pages 54--59, 2023.

\bibitem[Idelbayev and Carreira-Perpin{\'a}n(2020)]{idelbayev2020low}
Yerlan Idelbayev and Miguel~A Carreira-Perpin{\'a}n.
\newblock Low-rank compression of neural nets: Learning the rank of each layer.
\newblock In \emph{Proceedings of the IEEE/CVF Conference on Computer Vision
  and Pattern Recognition}, pages 8049--8059, 2020.

\bibitem[Jiang et~al.(2023)Jiang, Li, Liu, Hu, Abduallah, Jing, Xu, Wang, and
  Wang]{jiang2023generating}
Haodi Jiang, Qin Li, Nian Liu, Zhihang Hu, Yasser Abduallah, Ju~Jing, Yan Xu,
  Jason~TL Wang, and Haimin Wang.
\newblock Generating photospheric vector magnetograms of solar active regions
  for soho/mdi using sdo/hmi and bbso data with deep learning.
\newblock \emph{Solar Physics}, 298\penalty0 (7):\penalty0 87, 2023.

\bibitem[Kim et~al.(2015)Kim, Park, Yoo, Choi, Yang, and
  Shin]{kim2015compression}
Yong-Deok Kim, Eunhyeok Park, Sungjoo Yoo, Taelim Choi, Lu~Yang, and Dongjun
  Shin.
\newblock Compression of deep convolutional neural networks for fast and low
  power mobile applications.
\newblock \emph{arXiv preprint arXiv:1511.06530}, 2015.

\bibitem[Kingma and Ba(2014)]{kingma2014adam}
Diederik~P Kingma and Jimmy Ba.
\newblock Adam: A method for stochastic optimization.
\newblock \emph{arXiv preprint arXiv:1412.6980}, 2014.

\bibitem[Kolda and Bader(2009)]{kolda2009tensor}
Tamara~G Kolda and Brett~W Bader.
\newblock Tensor decompositions and applications.
\newblock \emph{SIAM review}, 51\penalty0 (3):\penalty0 455--500, 2009.

\bibitem[Kossaifi et~al.(2019)Kossaifi, Panagakis, Anandkumar, and
  Pantic]{tensorly}
Jean Kossaifi, Yannis Panagakis, Anima Anandkumar, and Maja Pantic.
\newblock Tensorly: Tensor learning in python.
\newblock \emph{Journal of Machine Learning Research}, 20\penalty0
  (26):\penalty0 1--6, 2019.
\newblock URL \url{http://jmlr.org/papers/v20/18-277.html}.

\bibitem[Krantz and Parks(2002)]{krantz2002primer}
Steven~G Krantz and Harold~R Parks.
\newblock \emph{A primer of real analytic functions}.
\newblock Springer Science \& Business Media, 2002.

\bibitem[Lau et~al.(2018)Lau, Zeng, Wu, and Yao]{lau2018proximal}
Tim Tsz-Kit Lau, Jinshan Zeng, Baoyuan Wu, and Yuan Yao.
\newblock A proximal block coordinate descent algorithm for deep neural network
  training.
\newblock \emph{arXiv preprint arXiv:1803.09082}, 2018.

\bibitem[Li and Shi(2018)]{li2018constrained}
Chong Li and CJ~Shi.
\newblock Constrained optimization based low-rank approximation of deep neural
  networks.
\newblock In \emph{Proceedings of the European Conference on Computer Vision
  (ECCV)}, pages 732--747, 2018.

\bibitem[Li et~al.(2018)Li, Qian, Jiang, Lu, and Tang]{li2018optimization}
Guiying Li, Chao Qian, Chunhui Jiang, Xiaofen Lu, and Ke~Tang.
\newblock Optimization based layer-wise magnitude-based pruning for dnn
  compression.
\newblock In \emph{IJCAI}, pages 2383--2389, 2018.

\bibitem[Li et~al.(2020)Li, Li, Li, Rellermeyer, Chen, and Li]{li2020sgd}
Hao Li, Zixuan Li, Kenli Li, Jan~S Rellermeyer, Lydia Chen, and Keqin Li.
\newblock Sgd-tucker: A novel stochastic optimization strategy for parallel
  sparse tucker decomposition.
\newblock \emph{IEEE Transactions on Parallel and Distributed Systems},
  32\penalty0 (7):\penalty0 1828--1841, 2020.

\bibitem[Li et~al.(2023)Li, Li, and Meng]{li2023model}
Zhuo Li, Hengyi Li, and Lin Meng.
\newblock Model compression for deep neural networks: A survey.
\newblock \emph{Computers}, 12\penalty0 (3):\penalty0 60, 2023.

\bibitem[Liu et~al.(2019)Liu, Liu, Wang, and Wang]{liu2019predicting}
Hao Liu, Chang Liu, Jason~TL Wang, and Haimin Wang.
\newblock Predicting solar flares using a long short-term memory network.
\newblock \emph{The Astrophysical Journal}, 877\penalty0 (2):\penalty0 121,
  2019.

\bibitem[Louizos et~al.(2018)Louizos, Welling, and Kingma]{louizos2018learning}
Christos Louizos, Max Welling, and Diederik~P Kingma.
\newblock Learning sparse neural networks through l\_0 regularization.
\newblock In \emph{International Conference on Learning Representations}, 2018.

\bibitem[Luo et~al.(2017)Luo, Wu, and Lin]{luo2017thinet}
Jian-Hao Luo, Jianxin Wu, and Weiyao Lin.
\newblock Thinet: A filter level pruning method for deep neural network
  compression.
\newblock In \emph{Proceedings of the IEEE international conference on computer
  vision}, pages 5058--5066, 2017.

\bibitem[Maehara et~al.(2016)Maehara, Hayashi, and
  Kawarabayashi]{maehara2016expected}
Takanori Maehara, Kohei Hayashi, and Ken-ichi Kawarabayashi.
\newblock Expected tensor decomposition with stochastic gradient descent.
\newblock In \emph{Proceedings of the AAAI Conference on Artificial
  Intelligence}, volume 30-1, 2016.

\bibitem[Molchanov et~al.(2016)Molchanov, Tyree, Karras, Aila, and
  Kautz]{molchanov2016pruning}
Pavlo Molchanov, Stephen Tyree, Tero Karras, Timo Aila, and Jan Kautz.
\newblock Pruning convolutional neural networks for resource efficient
  inference.
\newblock In \emph{International Conference on Learning Representations}, 2016.

\bibitem[Molchanov et~al.(2019)Molchanov, Mallya, Tyree, Frosio, and
  Kautz]{molchanov2019importance}
Pavlo Molchanov, Arun Mallya, Stephen Tyree, Iuri Frosio, and Jan Kautz.
\newblock Importance estimation for neural network pruning.
\newblock In \emph{Proceedings of the IEEE/CVF conference on computer vision
  and pattern recognition}, pages 11264--11272, 2019.

\bibitem[Novikov et~al.(2015)Novikov, Podoprikhin, Osokin, and
  Vetrov]{novikov2015tensorizing}
Alexander Novikov, Dmitrii Podoprikhin, Anton Osokin, and Dmitry~P Vetrov.
\newblock Tensorizing neural networks.
\newblock \emph{Advances in neural information processing systems}, 28, 2015.

\bibitem[Oseledets(2011)]{oseledets2011tensor}
Ivan~V Oseledets.
\newblock Tensor-train decomposition.
\newblock \emph{SIAM Journal on Scientific Computing}, 33\penalty0 (5), 2011.

\bibitem[Pan et~al.(2019)Pan, Xu, Wang, Ye, Wang, Bai, and
  Xu]{pan2019compressing}
Yu~Pan, Jing Xu, Maolin Wang, Jinmian Ye, Fei Wang, Kun Bai, and Zenglin Xu.
\newblock Compressing recurrent neural networks with tensor ring for action
  recognition.
\newblock In \emph{Proceedings of the AAAI Conference on Artificial
  Intelligence}, volume~33, pages 4683--4690, 2019.

\bibitem[Petrovai(2017)]{petrovai2017global}
Diana~M{\u{a}}rginean Petrovai.
\newblock The global convergence of the algorithms described by multifunctions.
\newblock \emph{Procedia Engineering}, 181:\penalty0 924--927, 2017.

\bibitem[Reyes-Ortiz et~al.(2012)Reyes-Ortiz, Anguita, Ghio, Oneto, and
  Parra]{Reyes2012HAR}
Jorge Reyes-Ortiz, Davide Anguita, Alessandro Ghio, Luca Oneto, and Xavier
  Parra.
\newblock Human activity recognition using smartphones.
\newblock UCI Machine Learning Repository, 2012.
\newblock {DOI}: https://doi.org/10.24432/C54S4K.

\bibitem[Rumelhart et~al.(1986)Rumelhart, Hinton, and
  Williams]{rumelhart1986learning}
David~E Rumelhart, Geoffrey~E Hinton, and Ronald~J Williams.
\newblock Learning representations by back-propagating errors.
\newblock \emph{nature}, 323\penalty0 (6088):\penalty0 533--536, 1986.

\bibitem[Savarese et~al.(2020)Savarese, Silva, and Maire]{savarese2020winning}
Pedro Savarese, Hugo Silva, and Michael Maire.
\newblock Winning the lottery with continuous sparsification.
\newblock \emph{Advances in neural information processing systems},
  33:\penalty0 11380--11390, 2020.

\bibitem[Shen et~al.(2022{\natexlab{a}})Shen, Wang, Law, Kamath, Choo, and
  Kong]{shen2022super}
Bo~Shen, Rongxuan Wang, Andrew Chung~Chee Law, Rakesh Kamath, Hahn Choo, and
  Zhenyu Kong.
\newblock Super resolution for multi-sources image stream data using smooth and
  sparse tensor completion and its applications in data acquisition of additive
  manufacturing.
\newblock \emph{Technometrics}, 64\penalty0 (1):\penalty0 2--17,
  2022{\natexlab{a}}.

\bibitem[Shen et~al.(2022{\natexlab{b}})Shen, Xie, and Kong]{shen2022smooth}
Bo~Shen, Weijun Xie, and Zhenyu Kong.
\newblock Smooth robust tensor completion for background/foreground separation
  with missing pixels: novel algorithm with convergence guarantee.
\newblock \emph{The Journal of Machine Learning Research}, 23\penalty0
  (1):\penalty0 9757--9796, 2022{\natexlab{b}}.

\bibitem[Shiota(1997)]{shiota1997geometry}
Masahiro Shiota.
\newblock Geometry of subanalytic and semialgebraic sets.
\newblock \emph{Real Analytic and Algebraic Geometry}, page 251, 1997.

\bibitem[Sikder et~al.(2019)Sikder, Chowdhury, Arif, and
  Nahid]{sikder2019human}
Niloy Sikder, Md~Sanaullah Chowdhury, Abu Shamim~Mohammad Arif, and Abdullah-Al
  Nahid.
\newblock Human activity recognition using multichannel convolutional neural
  network.
\newblock In \emph{2019 5th International conference on advances in electrical
  engineering (ICAEE)}, pages 560--565. IEEE, 2019.

\bibitem[Su et~al.(2024)Su, Jiang, Jin, Qiao, Xiao, Ma, Wei, Jing, Xu, and
  Lin]{su2024large}
Jing Su, Chufeng Jiang, Xin Jin, Yuxin Qiao, Tingsong Xiao, Hongda Ma, Rong
  Wei, Zhi Jing, Jiajun Xu, and Junhong Lin.
\newblock Large language models for forecasting and anomaly detection: A
  systematic literature review.
\newblock \emph{arXiv preprint arXiv:2402.10350}, 2024.

\bibitem[Taylor et~al.(2016)Taylor, Burmeister, Xu, Singh, Patel, and
  Goldstein]{taylor2016training}
Gavin Taylor, Ryan Burmeister, Zheng Xu, Bharat Singh, Ankit Patel, and Tom
  Goldstein.
\newblock Training neural networks without gradients: A scalable admm approach.
\newblock In \emph{ICML}, pages 2722--2731, 2016.

\bibitem[Tieleman et~al.(2012)Tieleman, Hinton, et~al.]{tieleman2012lecture}
Tijmen Tieleman, Geoffrey Hinton, et~al.
\newblock Lecture 6.5-rmsprop: Divide the gradient by a running average of its
  recent magnitude.
\newblock \emph{COURSERA: Neural networks for machine learning}, 4\penalty0
  (2):\penalty0 26--31, 2012.

\bibitem[Timothy(2016)]{Timothy2016Incorporating}
Dozat Timothy.
\newblock Incorporating nesterov momentum into adam.
\newblock \emph{Natural Hazards}, 3\penalty0 (2):\penalty0 437--453, 2016.

\bibitem[Vysogorets and Kempe(2023)]{vysogorets2023connectivity}
Artem Vysogorets and Julia Kempe.
\newblock Connectivity matters: Neural network pruning through the lens of
  effective sparsity.
\newblock \emph{J. Mach. Learn. Res.}, 24:\penalty0 99--1, 2023.

\bibitem[Wang et~al.(2020)Wang, Qin, Zhang, and Fu]{wang2020neural}
Huan Wang, Can Qin, Yulun Zhang, and Yun Fu.
\newblock Neural pruning via growing regularization.
\newblock In \emph{International Conference on Learning Representations}, 2020.

\bibitem[Wang and Benning(2023)]{wang2023lifted}
Xiaoyu Wang and Martin Benning.
\newblock Lifted bregman training of neural networks.
\newblock \emph{Journal of Machine Learning Research}, 24\penalty0
  (232):\penalty0 1--51, 2023.

\bibitem[Wen et~al.(2016)Wen, Wu, Wang, Chen, and Li]{wen2016learning}
Wei Wen, Chunpeng Wu, Yandan Wang, Yiran Chen, and Hai Li.
\newblock Learning structured sparsity in deep neural networks.
\newblock \emph{Advances in neural information processing systems}, 29, 2016.

\bibitem[Xu et~al.(2024)Xu, Wang, Wang, Jiang, Li, Abduallah, and
  Xu]{xu2024super}
Chunhui Xu, Jason~TL Wang, Haimin Wang, Haodi Jiang, Qin Li, Yasser Abduallah,
  and Yan Xu.
\newblock Super-resolution of soho/mdi magnetograms of solar active regions
  using sdo/hmi data and an attention-aided convolutional neural network.
\newblock \emph{Solar Physics}, 299\penalty0 (3):\penalty0 36, 2024.

\bibitem[Xu and Sen(2024)]{xu2024ensemble}
Jiajun Xu and Suvrajeet Sen.
\newblock Ensemble variance reduction methods for stochastic mixed-integer
  programming and their application to the stochastic facility location
  problem.
\newblock \emph{INFORMS Journal on Computing}, 36\penalty0 (2):\penalty0
  587--599, 2024.

\bibitem[Xu(2018)]{xu2018convergence}
Yangyang Xu.
\newblock On the convergence of higher-order orthogonal iteration.
\newblock \emph{Linear and Multilinear Algebra}, 66\penalty0 (11):\penalty0
  2247--2265, 2018.

\bibitem[Xu and Yin(2013)]{xu2013block}
Yangyang Xu and Wotao Yin.
\newblock A block coordinate descent method for regularized multiconvex
  optimization with applications to nonnegative tensor factorization and
  completion.
\newblock \emph{SIAM Journal on imaging sciences}, 6\penalty0 (3):\penalty0
  1758--1789, 2013.

\bibitem[Xu and Yin(2017)]{xu2017globally}
Yangyang Xu and Wotao Yin.
\newblock A globally convergent algorithm for nonconvex optimization based on
  block coordinate update.
\newblock \emph{Journal of Scientific Computing}, 72\penalty0 (2):\penalty0
  700--734, 2017.

\bibitem[Xu et~al.(2018)Xu, Wang, Zhou, Lin, and Xiong]{xu2018deep}
Yuhui Xu, Yongzhuang Wang, Aojun Zhou, Weiyao Lin, and Hongkai Xiong.
\newblock Deep neural network compression with single and multiple level
  quantization.
\newblock In \emph{Proceedings of the AAAI conference on artificial
  intelligence}, volume~32, 2018.

\bibitem[Yang et~al.(2017)Yang, Krompass, and Tresp]{yang2017tensor}
Yinchong Yang, Denis Krompass, and Volker Tresp.
\newblock Tensor-train recurrent neural networks for video classification.
\newblock In \emph{International Conference on Machine Learning}, pages
  3891--3900. PMLR, 2017.

\bibitem[Yin et~al.(2021)Yin, Sui, Liao, and Yuan]{yin2021towards}
Miao Yin, Yang Sui, Siyu Liao, and Bo~Yuan.
\newblock Towards efficient tensor decomposition-based dnn model compression
  with optimization framework.
\newblock In \emph{Proceedings of the IEEE/CVF Conference on Computer Vision
  and Pattern Recognition}, pages 10674--10683, 2021.

\bibitem[Yu et~al.(2017)Yu, Liu, Wang, and Tao]{yu2017compressing}
Xiyu Yu, Tongliang Liu, Xinchao Wang, and Dacheng Tao.
\newblock On compressing deep models by low rank and sparse decomposition.
\newblock In \emph{Proceedings of the IEEE conference on computer vision and
  pattern recognition}, pages 7370--7379, 2017.

\bibitem[Yuan et~al.(2019)Yuan, Zhao, Gui, and Cao]{yuan2019high}
Longhao Yuan, Qibin Zhao, Lihua Gui, and Jianting Cao.
\newblock High-order tensor completion via gradient-based optimization under
  tensor train format.
\newblock \emph{Signal Processing: Image Communication}, 73:\penalty0 53--61,
  2019.

\bibitem[Zeiler(2012)]{Zeiler2012Adadelta}
Matthew~D Zeiler.
\newblock Adadelta: an adaptive learning rate method.
\newblock \emph{arXiv preprint arXiv:1212.5701}, 2012.

\bibitem[Zeng et~al.(2019)Zeng, Lau, Lin, and Yao]{zeng2019global}
Jinshan Zeng, Tim Tsz-Kit Lau, Shaobo Lin, and Yuan Yao.
\newblock Global convergence of block coordinate descent in deep learning.
\newblock In \emph{ICML}, pages 7313--7323. PMLR, 2019.

\bibitem[Zeng et~al.(2021)Zeng, Lin, Yao, and Zhou]{zeng2021admm}
Jinshan Zeng, Shao-Bo Lin, Yuan Yao, and Ding-Xuan Zhou.
\newblock On admm in deep learning: Convergence and saturation-avoidance.
\newblock \emph{The Journal of Machine Learning Research}, 22\penalty0
  (1):\penalty0 9024--9090, 2021.

\bibitem[Zhang and Sen(2024{\natexlab{a}})]{zhang2024sampling}
Di~Zhang and Suvrajeet Sen.
\newblock A sampling-based progressive hedging algorithm for stochastic
  programming.
\newblock \emph{arXiv preprint arXiv:2407.20944}, 2024{\natexlab{a}}.

\bibitem[Zhang and Sen(2024{\natexlab{b}})]{zhang2024stochastic}
Di~Zhang and Suvrajeet Sen.
\newblock The stochastic conjugate subgradient algorithm for kernel support
  vector machines.
\newblock \emph{arXiv preprint arXiv:2407.21091}, 2024{\natexlab{b}}.

\bibitem[Zhang et~al.(2018)Zhang, Ye, Zhang, Tang, Wen, Fardad, and
  Wang]{zhang2018systematic}
Tianyun Zhang, Shaokai Ye, Kaiqi Zhang, Jian Tang, Wujie Wen, Makan Fardad, and
  Yanzhi Wang.
\newblock A systematic dnn weight pruning framework using alternating direction
  method of multipliers.
\newblock In \emph{Proceedings of the European conference on computer vision
  (ECCV)}, pages 184--199, 2018.

\bibitem[Zhang and Brand(2017)]{zhang2017convergent}
Ziming Zhang and Matthew Brand.
\newblock Convergent block coordinate descent for training tikhonov regularized
  deep neural networks.
\newblock \emph{Advances in Neural Information Processing Systems}, 30, 2017.

\bibitem[Zhao et~al.(2023)Zhao, Zhou, Li, Tang, Wang, Hou, Min, Zhang, Zhang,
  Dong, et~al.]{zhao2023survey}
Wayne~Xin Zhao, Kun Zhou, Junyi Li, Tianyi Tang, Xiaolei Wang, Yupeng Hou,
  Yingqian Min, Beichen Zhang, Junjie Zhang, Zican Dong, et~al.
\newblock A survey of large language models.
\newblock \emph{arXiv preprint arXiv:2303.18223}, 2023.

\bibitem[Zhu and Hu(2021)]{zhu2021twitter}
Wenbo Zhu and Tiechuan Hu.
\newblock Twitter sentiment analysis of covid vaccines.
\newblock In \emph{2021 5th International Conference on Artificial Intelligence
  and Virtual Reality (AIVR)}, pages 118--122, 2021.

\end{thebibliography}

\end{document}